
\documentclass[preprint, 12pt]{elsarticle}

\usepackage{gen_settings}

\journal{Artificial Intelligence}

\begin{document}

\begin{frontmatter}

\title{Sample-Based Bounds for Coherent Risk Measures: Applications to Policy Synthesis and Verification}

\author[1]{Prithvi Akella\corref{correspondingauthor}}
\cortext[correspondingauthor]{Corresponding author}
\ead{pakella@caltech.edu}

\author[1]{Anushri Dixit}
\ead{adixit@caltech.edu}

\author[1]{Mohamadreza Ahmadi}
\ead{mrahmadi@caltech.edu}

\author[1]{Joel W. Burdick}
\ead{jwb@robotics.caltech.edu}

\author[1]{Aaron D. Ames}
\ead{ames@caltech.edu}

\affiliation[1]{organization={California Institute of Technology},
            city={Pasadena},
            country={USA}}
\begin{abstract}
The dramatic increase of autonomous systems subject to variable environments has given rise to the pressing need to consider risk in both the synthesis and verification of policies for these systems.  This paper aims to address a few problems regarding risk-aware verification and policy synthesis, by first developing a sample-based method to bound the risk measure evaluation of a random variable whose distribution is unknown.  These bounds permit us to generate high-confidence verification statements for a large class of robotic systems.  Second, we develop a sample-based method to determine solutions to non-convex optimization problems that outperform a large fraction of the decision space of possible solutions.  Both sample-based approaches then permit us to rapidly synthesize risk-aware policies that are guaranteed to achieve a minimum level of system performance.  To showcase our approach in simulation, we verify a cooperative multi-agent system and develop a risk-aware controller that outperforms the system's baseline controller.  We also mention how our approach can be extended to account for any $g$-entropic risk measure - the subset of coherent risk measures on which we focus.
\end{abstract}

\begin{keyword}
Sampling Methods \sep Risk \sep Uncertain Systems \sep Safety-Critical Control \sep Control Barrier Functions \sep Policy Generation
\PACS 0000 \sep 1111
\MSC 0000 \sep 1111
\end{keyword}

\end{frontmatter}

\section{Introduction}
The problem of optimal policy generation under uncertainty has been well-studied in the learning community, most notably via Reinforcement Learning~\cite{lillicrap2015continuous,kaelbling1996reinforcement,sutton2018reinforcement,mnih2013playing,bertsekas1995neuro}.  In this setting, the agent-environment interaction is modeled via a Markov Decision Process (MDP) where the state of the agent-environment pair is assumed to lie in some finite set~\cite{MDP}.  Then, the agent's action determines a set of states from which the final state is randomly sampled.  Each such transition is assigned a reward via a reward function, resulting in the traditional Reinforcement Learning policy generation problem - determine a policy that maximizes the expected, time-discounted reward achievable by the agent undergoing such uncertain transitions.  The Partially Observable Markov Decision formulation of this problem is considered in~\cite{monahan1982state,bhattacharya2020reinforcement,dung2008reinforcement,png2011bayesian}.

However, policies developed via expectation maximization may not be the most useful policies, especially when the variance in the outcome might yield problematic behavior~\cite{taha2011operations}.  This notion is especially true in the case of safety-critical control where robot safety in operation in uncertain environments is of paramount importance.  Consideration of such uncertainties has prompted a new vein of work that accounts for risk in the optimal generation of policies for these safety-critical systems.  We focus on the risk measures popularized by Artzner et al.~\cite{artzner1999coherent}.  These measures help discriminate between good and bad decisions insofar as those decisions that yield a more positive outcome of this measure are better than their counterparts in a risk-aware sense.  To that end then, there exist multiple techniques in the controls community to account for risk in the control generation procedure~\cite{majumdar2020should,singh2018framework,hakobyan2019risk,ahmadi2021risk}.  Likewise, there also exist a few techniques regarding risk-sensitive Reinforcement Learning~\cite{heger1994consideration,chow2017risk,mihatsch2002risk,geibel2005risk}.

While there exist a plethora of policy/control generation techniques in a risk-sensitive setting, there exist few verification techniques - especially for arbitrary risk measures - that account for unstructured uncertainty.  For example, there are numerous works detailing risk-aware verification procedures for specific systems~\cite{korb2003risk,vicentini2019safety,inam2018risk}.  These methods verify their systems of interest against existing widespread standards, \textit{e.g.} in~\cite{inam2018risk} the authors verify a multi-agent collaborative robotic system against the international standards for safe robot interactions with humans ISO 10218~\cite{isopart1,isopart2}.  As such, the verification analyses in these works are limited to their specific systems of interest, and the notion of risk is typically defined against the corresponding standard.  These works do not use the same notion of risk as utilized in control or policy development.  For more abstract, black-box approaches to verification, Corso et al. provide a very nice survey of existing techniques~\cite{corso2020survey}.  For the sake of brevity, we will not delve into each specific technique.  Rather, we will state that the existing verification techniques that account for uncertain system measurements (\textit{e.g.} Bayesian Optimization~\cite{deshmukh2017testing,mullins2018adaptive} or Reinforcement Learning~\cite{corso2019adaptive,koren2020adaptive}) follow the same expectation-specific analysis that prompted the interest in a more risk-aware approach.

\spacing
\newidea{Motivating Questions:} The inability to address risk-aware verification through the same risk measures utilized for policy development stems primarily from the inability to calculate these measures for unknown probability distributions.  While there exist concentration inequalities for specific coherent risk measures~\cite{thomas2019concentration,brown2007large,mhammedi2020pac,kagrecha2019distribution}, there do not exist similar bounds for other risk measures, \textit{e.g.} value at risk, entropic value at risk, \textit{etc}.  As such, the questions we aim to address are as follows.  First, can we develop sample-based bounds for arbitrary (coherent) risk measures and provide a fundamental requirement on the number of samples required to generate our bounds?  Second, can we use these bounds to provide high confidence statements on system performance in a risk-aware setting?  Furthermore, the risk-aware controller generation works cited prior typically require \textit{apriori} understanding of the underlying uncertainty whether via direct knowledge or in a distributionally-robust sense.  Granted, risk-aware Reinforcement Learning does not require such knowledge, and existing convergence bounds for the works cited prior guarantee that repeated iteration will eventually identify a satisfactory policy.  However, if we can determine a fundamental sample requirement for our risk measure bounds, and if the result of policy synthesis is the identification of satisfactory policies, can we provide similar sample requirements for the synthesis of satisfactory risk-aware policies and the relative complexity in identifying better policies?

\spacing
\newidea{Our Contributions:}  First, we develop a sample-based procedure that identifies a minimum sample requirement to upper bound the evaluation of a $g$-entropic risk measure for a random variable whose distribution is unknown.  Second, we rephrase risk-aware verification as a risk measure determination problem for a random variable whose distribution is unknown.  This permits us to use our prior results to generate high-confidence verification statements for arbitrarily complex robotic systems with limited system information.  Third, to facilitate the rapid synthesis of risk-aware policies, we develop a sample-based procedure to identify ``good" solutions to a large class of optimization problems, including the traveling salesman problem.  For this problem, by a ``good" solution we mean a path that is in the $99$-th percentile of all possible paths with respect to minimizing the distance traveled along the path.  By phrasing risk-aware synthesis as a similar type of optimization problem, we similarly rapidly identify ``good" risk-aware policies.  To showcase the efficacy of our risk-aware verification and synthesis results, we verify and synthesize a controller for a cooperative three-agent robotic system in its ability to avoid self-collisions while each agent traverses to its goal.  We also show that our synthesized controller outperforms the baseline controller with which the system is equipped by default, insofar as it more reliably satisfies both the safety-maintenance and goal-satisfaction objectives.

\spacing
\newidea{Structure:}

\secheader{Section~\ref{sec:prelims} - Preliminaries:} In this section, we provide a brief overview of some important topics - scenario optimization and $g$-entropic risk measures - and provide some general notation.

\secheader{Section~\ref{sec:concentration_inequalities} - Sample-Based Bounds for Risk Measures:} In this section, we detail our sample-based upper-bounding procedure for $g$-entropic risk measures and provide a fundamental sample requirement for their generation.

\secheader{Section~\ref{sec:verification} - Risk-Aware Verification:} In this section, we detail our risk-aware verification approach as a specific example of Section~\ref{sec:concentration_inequalities}.  Section~\ref{sec:verification_examples} details the specific example of risk-aware verification for a cooperative, three-agent robotic system.

\secheader{Section~\ref{sec:decision_selection} - ``Good" Decision Selection:}  In this section, we detail our sampling method to determine ``good" solutions to a large class of optimization problems.  Section~\ref{sec:decision_examples} details the application to the traveling salesman problem, wherein we generate a path that is in the $99$-th percentile of all possible paths with respect to minimizing the travel distance along the path.

\secheader{Section~\ref{sec:policy_synthesis} - Risk-Aware Synthesis:}  Finally, in this section we detail our risk-aware policy synthesis approach as a specific example of Sections~\ref{sec:verification} and~\ref{sec:decision_selection}.  Section~\ref{sec:synthesis_examples} details our risk-aware policy synthesis example for the same cooperative three-agent system verified in Section~\ref{sec:verification_examples} and shows that our synthesized policy outperforms the baseline controller.

\begin{figure}
    \centering
    \includegraphics{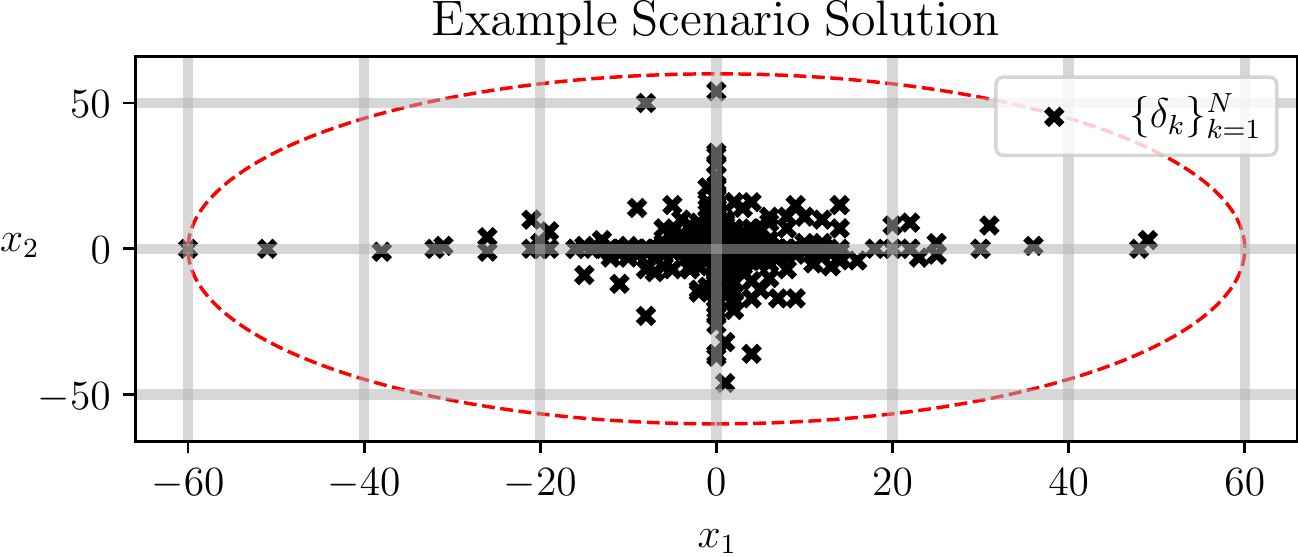}
    \caption{An example of using scenario optimization to calculate the radius of the largest required circle to encapsulate at least $0.987$ of the probability mass of the underlying random variable with which samples were taken.  This is a recreation of one of the examples done in~\cite{campi2008exact}.}
    \label{fig:scenario_ex}
\end{figure}

\section{Preliminaries}
\label{sec:prelims}
For the analysis to follow, it will be useful to provide a brief overview of two topics - scenario optimization and $g$-entropic risk measures - as they will be heavily utilized.  Before providing this overview, we will briefly describe some general notation.

\spacing
\newidea{General Notation:} $|A|$ denotes the cardinality of the set $A$.  $2^A$ denotes the set of all subsets of $A$. $\mathbb{Z}_+ = \{1,2,\dots\}$ is the set of all positive integers, and $\mathbb{R}_{\geq 0} = \{x\in\mathbb{R}~|~x \geq 0\}$.

\newidea{Scenario Optimization:}  The information in this section comes primarily from \cite{campi2008exact}.   Scenario optimization identifies robust solutions to uncertain convex optimization problems of the following form:
\begin{equation}
    \label{eq:uncertain_program}
    \tag{UP}
    \begin{aligned}
        z^* & = \argmin_{z \in \mathbb{Z} \subset \mathbb{R}^d}~ & &c^Tz, \\
        &~~\mathrm{subject~to}~ & &z \in \mathbb{Z}_{\delta},~\delta \in \Delta,
    \end{aligned}
\end{equation}
where $\mathbb{Z}$ is some convex subset of $\mathbb{R}^d$, $c \in \mathbb{R}^d$, $\delta$ is a sample of some random variable $W: \Omega \to \Delta$ with density function $\pi_{\delta}$, and $\mathbb{Z}_{\delta}$ is a convex subset of $\mathbb{R}^d$ that changes based on the received sample $\delta \in \Delta$. As $\Delta$ is typically a set of infinite cardinality, \textit{i.e.} $|\Delta| = \infty$, brute force identification of a decision $z^*$ such that $z^* \in \mathbb{Z}_{\delta},~\forall~\delta \in \Delta$ is usually infeasible.

To resolve this issue, scenario optimization solves a related optimization problem formed from an $N$-sized sample $\deltaset$ of the constraints $\delta$ and provides a probabilistic guarantee on the robustness of the corresponding solution $z^*_N$.  Specifically, given the sample scenario set $\deltaset$, we can construct the following scenario program for which we assume solvability for any $N$-sample set $\deltaset$:
\begin{equation}
    \label{eq:scenario_program}   
    \tag{RP-N}
    \begin{aligned}
        z^*_N & = \argmin_{z \in \mathbb{Z} \subset \mathbb{R}^d}~ & &c^Tz, \\
        &~~\mathrm{subject~to}~ & &z \in \mathbb{Z}_{\delta},~\forall~\delta \in \{\delta_k\}_{k=1}^N.
    \end{aligned}
\end{equation}
\begin{assumption}
\label{assump:RPN_solvability}
The scenario program~\eqref{eq:scenario_program} is solvable for any $N$-sample set $\deltaset$ and has a unique solution $z^*_N$.
\end{assumption}

However, as $z^*_N$ is the solution to~\eqref{eq:scenario_program}, there must exist a probability of sampling a constraint $\delta$ such that $z^*_N$ is not in the corresponding constraint set $\mathbb{Z}_{\delta}$.  Called the \emph{violation probability}, its definition is as follows.
\begin{definition}
\label{def:violation}
The \textit{violation probability} $V(z)$ of a given  decision $z \in \mathbb{Z}$ is defined as the probability of sampling a constraint $\delta$ to which $z$ is not robust, \textit{i.e.} $V(z) = \prob_{\pi_\delta}[\delta~|~z \not \in \mathbb{Z}_{\delta}]$ .
\end{definition}

\noindent Then, the main result in scenario optimization upper-bounds the violation probability with high confidence.
\begin{theorem}[Adapted from Theorem 1 in~\cite{campi2008exact}]
\label{thm:scenario_opt}
Let~\eqref{eq:scenario_program} be the scenario program for~\eqref{eq:uncertain_program} formed from the $N$-sample set $\{\delta_k\}_{k=1}^N$ of the constraints $\delta$ which are samples of a random variable with distribution $\pi_{\delta}$.  Furthermore, let $z^*_N \in \mathbb{R}^d$ be the solution to this scenario program~\eqref{eq:scenario_program}.  If Assumption~\ref{assump:RPN_solvability} holds, then $\forall~\epsilon\in[0,1]$,
\begin{equation}
    \prob^N_{\pi_\delta}[V(z^*_N) > \epsilon] \leq \sum_{i=0}^{d-1} \binom{N}{i} \epsilon^i(1-\epsilon)^{N-i}.
\end{equation}
\end{theorem}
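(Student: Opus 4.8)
The plan is to translate the convex-geometric structure of the scenario program~\eqref{eq:scenario_program} into a distribution-free probability bound by means of the notion of a \emph{support constraint} together with a symmetry argument over the i.i.d.\ samples. I would first call a sample $\delta_k$ a support constraint if deleting it from~\eqref{eq:scenario_program} strictly changes the optimizer $z_N^*$. The structural engine of the whole result is the claim that a convex program over a $d$-dimensional decision variable admits \emph{at most $d$} support constraints, which I would prove by a Helly-type argument: were there $d+1$ or more constraints each essential to the solution, one could intersect the associated feasible sets in $\mathbb{R}^d$ and, via Helly's theorem, exhibit a removable constraint, contradicting essentiality. Consequently $z_N^*$ is pinned down by some subset $I \subseteq \{1,\dots,N\}$ with $|I| \le d$ of the sampled constraints.

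Next I would restrict to the \emph{fully-supported, non-degenerate} regime, in which there are exactly $d$ support constraints almost surely, the solution $z_I^*$ built from a size-$d$ subset $I$ is unique (Assumption~\ref{assump:RPN_solvability}), and $I$ is the support set precisely when $z_I^*$ satisfies the remaining $N-d$ constraints. Decomposing the event $\{V(z_N^*) > \epsilon\}$ according to which size-$d$ subset is the support set, and using exchangeability of the i.i.d.\ samples so that each of the $\binom{N}{d}$ subsets is equally likely, conditioning on the $d$ support samples and invoking independence gives the building block
\begin{equation*}
\prob^N_{\pi_\delta}\!\left[\{1,\dots,d\}\text{ is the support set},\,V(z_I^*) > \epsilon\right] = \mathbb{E}\!\left[\mathbf{1}\{V(z_I^*) > \epsilon\}\,(1 - V(z_I^*))^{N-d}\right].
\end{equation*}
The mechanism by which this becomes distribution-free is a second symmetry count: among $N+1$ samples the number of support constraints is exactly $d$ almost surely, and by exchangeability each sample is a support constraint with probability $d/(N+1)$, while a sample is a support constraint iff the solution of the other $N$ samples violates it; hence $\mathbb{E}[V(z_N^*)] = d/(N+1)$. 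Extending this count to the number of fresh samples simultaneously violated upgrades the mean to the full distributional identity $V(z_N^*) \sim \mathrm{Beta}(d, N-d+1)$, whose tail is exactly $\sum_{i=0}^{d-1}\binom{N}{i}\epsilon^i(1-\epsilon)^{N-i}$ by the standard Beta--Binomial relation.

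I expect the principal obstacle to be twofold: first, a rigorous proof of the at-most-$d$ support-constraint lemma, and second, discharging the measure-theoretic regularity conditions --- uniqueness of the optimizer, non-degeneracy, and uniqueness of the support set --- that make the exchangeability and counting steps valid. Degenerate sample configurations, where constraints coincide or the optimizer is held by fewer than $d$ active constraints, must be treated separately and shown only to \emph{decrease} the tail probability, so that the fully-supported computation is genuinely the worst case; this is exactly what converts the equality obtained there into the stated inequality for arbitrary convex scenario programs. Once the distributional identity is in hand, the final Beta--Binomial tail evaluation is routine.
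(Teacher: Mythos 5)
The paper does not actually prove this theorem: it is imported verbatim (``Adapted from Theorem 1 in Campi--Garatti'') and used as a black box, so there is no in-paper argument to compare against. Measured instead against the proof in the cited reference, your outline is a faithful reconstruction of the real argument: support constraints, the at-most-$d$ cardinality lemma via a Helly/Radon-type argument, the exchangeability decomposition over size-$d$ candidate support sets with the conditional factor $(1-V(z_I^*))^{N-d}$, and the reduction of degenerate configurations to the fully-supported worst case. Two places in your sketch are thinner than they look. First, the passage from $\mathbb{E}[V(z_N^*)]=d/(N+1)$ to the full law $V(z_N^*)\sim\mathrm{Beta}(d,N-d+1)$ is not automatic --- the mean alone determines nothing; you need either the whole family of moments (obtained by counting support constraints among $N+m$ samples for every $m$, which does pin down a law on $[0,1]$) or, as Campi--Garatti actually do, an integral identity for the c.d.f.\ of $V(z_d^*)$ forced by $\sum_I \prob[I\text{ is the support set}]=1$ holding for every $N\ge d$, whose unique solution is $F_d(\epsilon)=\epsilon^d$. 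Second, the degeneracy reduction you defer is genuinely the hardest part of the published proof (handled there by a randomized ``heating'' perturbation of the constraints), and Assumption~\ref{assump:RPN_solvability} as stated in this paper only gives existence and uniqueness of $z_N^*$, not non-degeneracy, so that step cannot be waved away. Neither issue is a wrong turn --- both are resolvable exactly along the lines you indicate --- but as written the proposal establishes the bound only for fully-supported, non-degenerate problems and asserts rather than proves the remaining inequality.
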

\noindent Effectively, Theorem~\ref{thm:scenario_opt} bounds the probability that our scenario solution's violation probability $V(z^*_N)$ exceeds a cutoff $\epsilon \in [0,1]$ with respect to the induced probability distribution $\mathbb{P}_{\pi_{\delta}}^N$ from taking $N$ samples of the constraints $\delta$.   This bound is based on the number of samples taken, $N$, the dimension $d$ of our solution $z^*_N$, and the cutoff value $\epsilon$.  An example solution to a scenario optimization problem is shown in Figure~\ref{fig:scenario_ex}.

\spacing
\newidea{Risk Measures:} The information in this section will be adapted from the works of Ahmadi-Javid~\cite{ahmadi2012entropic} and Artzner et al.~\cite{artzner1999coherent}.  Risk measures $\phi$ map scalar random variables $X$ to the real-line.  More accurately, consider a probability space $(\Omega, \mathcal{F},P)$ with $\Omega$ the sample space, $\mathcal{F}$ the event space, and $P$ a probability measure.  A scalar random variable (R.V.) $X$ is a mapping from the sample space to the real-line, \textit{i.e.} $X:\Omega \to \mathbb{R}$.  The space of all such random variables $\Tilde{X} = \{X~|~X:\Omega \to \mathbb{R}\}$.  For $p \geq 1$, we define $L_p$ as the space of all random variables with $p$-bounded expectation, \textit{i.e.} $L_p = \{X~|~\expect[|X|^p] < \infty\}$, and $L_{\infty}$ is the set of all bounded scalar random variables.  Then, a risk measure $\phi$ is a function that maps from a subset $\mathbf{X} \subseteq \Tilde{X}$ to the extended real-line $\mathbb{R} \cup \{-\infty,\infty\}$, \textit{i.e.} $\phi: \mathbf{X} \to \mathbb{R} \cup \{-\infty,\infty\}$.  A coherent risk measure as per~\cite{artzner1999coherent} then, is defined as follows.
\begin{definition}
\label{def:coherent_risk}
A \textit{coherent risk measure} $\phi: \mathbf{X} \subseteq \Tilde{X} \to \mathbb{R} \cup \{-\infty,\infty\}$ satisfies the following four properties:
\begin{enumerate}
    \item Translation Invariance: $\phi(X+c) = \phi(X) + c$,
    \item Sub-Additivity: $\phi(X_1+X_2) \leq \phi(X_1) + \phi(X_2),~X_1,X_2 \in \mathbf{X}$,
    \item Monotonicity: If $X_1,X_2 \in \mathbf{X}$ and $X_1(\omega) \leq X_2(\omega)~\forall~\omega \in \Omega$, then $\phi(X_1) \leq \phi(X_2)$,
    \item Positive Homogeneity: $\phi(\lambda X) = \lambda \phi(X),~\forall~X \in \mathbf{X},~\lambda \geq 0$.
\end{enumerate}
\end{definition}
\noindent Furthermore, as mentioned in Theorem 3.2 in~\cite{ahmadi2012entropic}, every coherent risk measure satisfying the Fatou property (Definition 3.1 in~\cite{delbaen2002coherent}) has a distributionally robust dual formulation. 
\begin{theorem}
[Adapted from Theorem 3.2 in~\cite{ahmadi2012entropic}]
\label{thm:risk_dual_representation}
Let $\phi:L_{\infty} \to \mathbb{R}$ be a risk measure satisfying the Fatou property.  There exists a set $\mathcal{P}$ of probability measures on the sample and event spaces $(\Omega,\mathcal{F})$ such that the following equality is true if and only if $\phi$ is a coherent risk measure:
\begin{equation}
    \phi(X) = \sup_{Q \in \mathcal{P}}~\expect_{Q}[X].
\end{equation}
\end{theorem}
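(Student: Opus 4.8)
The statement is a biconditional, so the plan is to prove the two implications separately; I expect the direction ``representation $\Rightarrow$ coherence'' to be a routine verification and the converse to carry all of the analytic weight.

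For the easy direction, suppose $\phi(X) = \sup_{Q\in\mathcal{P}}\expect_Q[X]$ for some family $\mathcal{P}$ of probability measures. I would check the four axioms of Definition~\ref{def:coherent_risk} one at a time. Translation invariance follows because each $Q$ is a probability measure, so $\expect_Q[X+c]=\expect_Q[X]+c$ and the constant factors out of the supremum; monotonicity follows because $X_1(\omega)\le X_2(\omega)$ for all $\omega$ forces $\expect_Q[X_1]\le\expect_Q[X_2]$ for every $Q$, and the supremum preserves the inequality; positive homogeneity follows by pulling $\lambda\ge 0$ through both the expectation and the supremum; and sub-additivity follows from the elementary bound $\sup_Q(a_Q+b_Q)\le \sup_Q a_Q+\sup_Q b_Q$ applied with $a_Q=\expect_Q[X_1]$ and $b_Q=\expect_Q[X_2]$.

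The substantive direction is ``coherence $\Rightarrow$ representation,'' and the plan here is to invoke convex duality in the dual pairing between $L_\infty$ and $L_1$. First I would note that sub-additivity together with positive homogeneity makes $\phi$ sublinear, hence convex. Next, I would use the Fatou property to upgrade convexity to weak$^*$ lower semicontinuity of $\phi$ on $L_\infty$, regarded as the topological dual of $L_1$. With convexity and weak$^*$ lower semicontinuity in hand, the Fenchel--Moreau (biconjugate) theorem yields $\phi=\phi^{**}$, that is,
\begin{equation}
    \phi(X)=\sup_{\xi\in L_1}\left(\expect[\xi X]-\phi^*(\xi)\right),
\end{equation}
where $\phi^*$ is the convex conjugate of $\phi$. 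It then remains to identify the effective domain of $\phi^*$ with a set of probability densities: positive homogeneity forces $\phi^*$ to take only the values $0$ and $+\infty$, so $\phi$ is the support function of the closed convex set $\mathcal{Q}=\{\xi\in L_1 : \expect[\xi X]\le \phi(X)~\forall X\in L_\infty\}$; monotonicity forces every $\xi\in\mathcal{Q}$ to satisfy $\xi\ge 0$ almost surely; and translation invariance forces $\expect[\xi]=1$. Consequently each $\xi\in\mathcal{Q}$ is the Radon--Nikodym density of a probability measure $Q\ll P$, and taking $\mathcal{P}=\{Q : dQ/dP\in\mathcal{Q}\}$ gives the desired representation.

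The main obstacle I anticipate is the functional-analytic step rather than any of the axiom-chasing: namely, establishing that the Fatou property is exactly what guarantees weak$^*$ lower semicontinuity, and then applying Fenchel--Moreau correctly in infinite dimensions. The subtlety is that the norm dual of $L_\infty$ is strictly larger than $L_1$ -- it contains purely finitely additive measures -- so a naive conjugation could produce representing functionals that are not countably additive probability measures. Verifying that the Fatou property rules out this finitely additive part, so that $\mathcal{P}$ consists of genuine probability measures on $(\Omega,\mathcal{F})$, is the crux of the argument.
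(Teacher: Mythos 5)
Your proposal is correct, but note that the paper itself offers no proof of this statement: it is imported by citation from Theorem 3.2 of Ahmadi-Javid (which in turn rests on Delbaen's representation theorem for coherent risk measures on $L_\infty$ with the Fatou property), so there is no in-paper argument to compare against. Your outline reproduces the standard proof faithfully: the axiom-by-axiom check for the easy direction is right under the paper's sign conventions, and for the converse the chain (sublinearity $\Rightarrow$ convexity, Fatou $\Rightarrow$ weak$^*$ lower semicontinuity via Krein--\v{S}mulian, Fenchel--Moreau, positive homogeneity $\Rightarrow$ $\phi^*$ is an indicator, monotonicity $\Rightarrow$ nonnegative densities, translation invariance $\Rightarrow$ unit mass) is exactly the argument in Delbaen~\cite{delbaen2002coherent}. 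You correctly identify the genuine crux --- excluding the purely finitely additive part of $(L_\infty)^*$ --- which is precisely the role the Fatou hypothesis plays; a complete write-up would need to carry out that Krein--\v{S}mulian step in detail, but the plan is sound.
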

Then, Ahmadi-Javid~\cite{ahmadi2012entropic} defines $g$-entropic risk measures as those risk measures satisfying the condition in Theorem~\ref{thm:risk_dual_representation} with respect to a convex function $g$.
\begin{definition}
[Adapted from Definition 5.1 in~\cite{ahmadi2012entropic}]
\label{def:g-entropic}
Let $g$ be a convex function with $g(1) = 0$ and $\beta \geq 0$.  The \textit{$g$-entropic risk measure with divergence level $\beta$} $\entrisk_{g,\beta}$ is defined as follows, with $P$ the probability measure for $X$, ``$Q \ll P$" denoting $Q$ is absolutely continuous with respect to $P$, and $\frac{dQ}{dP}$ the Radon-Nikodym derivative:
\begin{equation}
    \entrisk_{g,\beta}(X) \triangleq \sup_{Q \in \mathcal{P}}~\expect_Q[X], \quad \mathcal{P} = \left\{Q~\bigg|~Q \ll P,~\int g\left(\frac{dQ}{dP}\right)dP \leq \beta\right\}.
\end{equation}
\end{definition}

\begin{figure}[t]
    \centering
    \includegraphics{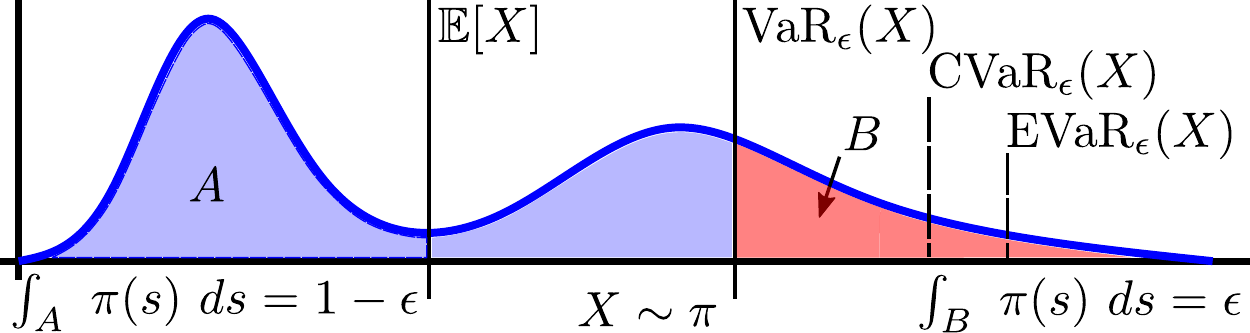}
    \caption{Example of three common risk measures - Value-at-Risk $\var_{\epsilon}(X)$, Conditional-Value-at-Risk $\cvar_{\epsilon}(X)$, and Entropic-Value-at-Risk $\evar_{\epsilon}(X)$ - for a scalar random variable $X$.  Per their definitions in Section~\ref{sec:prelims}, $\var_{\epsilon}(X) \leq \cvar_{\epsilon}(X) \leq \evar_{\epsilon}(X)$ as shown above.}
    \label{fig:risk-measures}
\end{figure}

\noindent All such $g$-entropic risk measures are coherent risk measures as per Theorem~\ref{thm:risk_dual_representation}~\cite{ahmadi2012entropic}.  Of more immediate use, however, will be their representation as infimum problems as mentioned in the following theorem.
\begin{theorem}
[Adapted from Theorem 5.1 in~\cite{ahmadi2012entropic}]
\label{thm:conjugate_dual_risk}
Let $g$ be a closed convex function with $g(1) = 0$ and $\beta \geq 0$.  For $X\in L_{\infty}$ and $\entrisk_{g,\beta}$ a $g$-entropic risk measure as per Definition~\ref{def:g-entropic}, the following equivalency holds with $g^*$ the convex-conjugate of $g$:
\begin{equation}    
    \label{eq:inf_g_entropic}
    \entrisk_{g,\beta}(X) = \inf_{t > 0, \mu \in \mathbb{R}}~t \left[\mu + \expect_P\left[g^*\left( \frac{X}{t} - \mu + \beta\right)\right] \right]
\end{equation}
\end{theorem}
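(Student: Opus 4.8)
The plan is to read the claimed identity as a strong-duality statement and to obtain the infimum representation as the Lagrangian dual of the supremum defining $\entrisk_{g,\beta}$ in Definition~\ref{def:g-entropic}. First I would recast the primal problem over densities rather than measures: every $Q \ll P$ is represented by its Radon-Nikodym derivative $\zeta = \frac{dQ}{dP} \geq 0$ with $\expect_P[\zeta] = 1$, so, absorbing nonnegativity into $g$ by setting $g(\zeta) = +\infty$ for $\zeta < 0$,
\[
    \entrisk_{g,\beta}(X) = \sup_{\zeta}~\expect_P[X\zeta], \quad \mathrm{subject~to} \quad \expect_P[g(\zeta)] \leq \beta, \quad \expect_P[\zeta] = 1.
\]
This is a convex program in $\zeta$ with one inequality and one affine equality constraint, which is exactly the setting in which Lagrangian duality produces a conjugate-function representation.

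Next I would form the Lagrangian with a multiplier $t \geq 0$ for the divergence constraint and a free multiplier $\lambda \in \mathbb{R}$ for the normalization constraint,
\[
    L(\zeta,t,\lambda) = \expect_P[X\zeta] - t\left(\expect_P[g(\zeta)] - \beta\right) - \lambda\left(\expect_P[\zeta] - 1\right),
\]
so that $\entrisk_{g,\beta}(X) = \sup_\zeta \inf_{t \geq 0,\lambda} L \leq \inf_{t \geq 0,\lambda} \sup_\zeta L$. For fixed $t > 0$ the $\zeta$-dependent part collapses to a single expectation, and the inner maximization can be performed pointwise; the interchange rule for integral functionals gives $\sup_\zeta \{(X - \lambda)\zeta - t g(\zeta)\} = t\,g^*\!\left(\frac{X-\lambda}{t}\right)$ at each $\omega$, whence $\sup_\zeta L = t\beta + \lambda + t\,\expect_P\!\left[g^*\!\left(\frac{X-\lambda}{t}\right)\right]$. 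The reparametrization $\mu = \lambda/t + \beta$, a bijection of $\mathbb{R}$ for each fixed $t > 0$, then rewrites the dual objective as $t\left[\mu + \expect_P\!\left[g^*\!\left(\frac{X}{t} - \mu + \beta\right)\right]\right]$, which is precisely the right-hand side of~\eqref{eq:inf_g_entropic}.

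Two technical points remain, and strong duality is the one I expect to be the main obstacle. The pointwise interchange is justified because $\zeta \mapsto (X-\lambda)\zeta - t g(\zeta)$ is a normal integrand (closed, convex, measurable), so the supremum over measurable $\zeta$ equals the integral of the pointwise suprema and is attained by the measurable selection $\hat\zeta(\omega) \in \partial g^*\!\left((X(\omega)-\lambda)/t\right)$. To close the duality gap I would invoke a constraint qualification: the choice $\zeta \equiv 1$ (i.e. $Q = P$) satisfies $\expect_P[g(1)] = g(1) = 0 \leq \beta$ and $\expect_P[1] = 1$, furnishing a Slater point that is strictly feasible whenever $\beta > 0$; together with convexity and closedness of $g$ and with $X \in L_\infty$ (which keeps all conjugate terms finite), the standard infinite-dimensional convex-duality theorem yields $\sup_\zeta\inf = \inf\sup_\zeta$, and the degenerate case $\beta = 0$ follows by continuity in $\beta$. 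The full measure-theoretic bookkeeping is carried out in Theorem~5.1 of~\cite{ahmadi2012entropic}, which I would cite for completeness.
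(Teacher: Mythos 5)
The paper does not actually prove this statement: it is imported verbatim as Theorem~5.1 of Ahmadi-Javid's work, so there is no in-paper proof to compare against. Your blind reconstruction is the standard derivation underlying that cited result, and it is essentially correct: recasting the supremum over $\mathcal{P}$ as a convex program in the density $\zeta = dQ/dP$, dualizing the divergence and normalization constraints with multipliers $t \geq 0$ and $\lambda$, performing the inner maximization pointwise to produce $t\,g^*((X-\lambda)/t)$, and reparametrizing $\mu = \lambda/t + \beta$ does reproduce the right-hand side of~\eqref{eq:inf_g_entropic} exactly (the algebra $t\beta + \lambda = t\mu$ and $(X-\lambda)/t = X/t - \mu + \beta$ checks out). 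You also correctly identify strong duality as the crux and supply the right Slater point $\zeta \equiv 1$, valid since $g(1) = 0 < \beta$. Two soft spots remain, both of which you flag but do not fully close: the case $\beta = 0$ is not a routine ``continuity in $\beta$'' limit (for $\beta = 0$ Jensen's inequality forces $\expect_P[g(\zeta)] = 0$, and whether that pins $\zeta$ down depends on the strict convexity of $g$, so the degenerate case needs its own argument or an explicit appeal to the source); and the dual infimum is taken over $t > 0$ rather than $t \geq 0$, so one should remark that the $t = 0$ boundary contributes nothing (the inner supremum there is $+\infty$ unless $X \leq \lambda$ a.s.). Since you defer the full measure-theoretic bookkeeping to the cited theorem, these are acceptable as stated, and your route is the same one the source takes.
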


Finally, a few common risk measures - Value-at-Risk, Conditional-Value-at-Risk, and Entropic-Value-at-Risk - are defined below.  Notably, Value-at-Risk is not a coherent risk measure whereas Conditional-Value-at-Risk and Entropic-Value-at-Risk are.  For all definitions, $X$ is a random variable with distribution $\pi$ and samples $x$.
\begin{definition}
\label{def:var}
The \emph{Value-at-Risk} level $\epsilon \in [0,1]$ denoted as $\var_{\epsilon}(X)$ is the infimum over $\zeta \in \mathbb{R}$ of $\zeta$ such that samples $x$ of $X$ lie below $\zeta$ with probability greater than or equal to $1-\epsilon$, \textit{i.e.}
\begin{equation}
    \var_{\epsilon}(X) = \inf \{\zeta~|~\prob_{\pi}[x \leq \zeta] \geq 1-\epsilon]\}.
\end{equation}
\end{definition}
\begin{definition}
\label{def:cvar}
The \textit{Conditional-Value-at-Risk} level $\epsilon \in (0,1]$ denoted as $\cvar_{\epsilon}(X)$ is the expected value of all samples $x$ of $X$ that are greater than or equal to the Value-at-Risk level $\epsilon$ for $X$, \textit{i.e.}
\begin{equation}
    \cvar_{\epsilon}(X) = \mathbb{E}_{\pi}[x~|~x \geq \var_{\epsilon}(X)] = \inf_{z \in \mathbb{R}}~z + \frac{\mathbb{E}_{\pi}\left[\max(X-z,0)\right]}{\epsilon}.
\end{equation}
\end{definition}
\begin{definition}
\label{def:evar}
The \textit{Entropic-Value-at-Risk} level $\epsilon \in (0,1]$ denoted as $\evar_{\epsilon}(X)$ is defined as the infimum over $z > 0$ of the Chernoff bound for $X$, \textit{i.e.}
\begin{equation}
    \evar_{\epsilon}(X) = \inf_{z > 0}~\frac{1}{z} \ln\left(\frac{\mathbb{E}_{\pi}\left[e^{zX}\right]}{\epsilon}\right)
\end{equation}
\end{definition}
\noindent Finally, the relationship between the three risk measures is shown in Figure~\ref{fig:risk-measures}.  Notably, for a random variable $X$ and some risk level $\epsilon \in (0,1)$, $\var_{\epsilon}(X) \leq \cvar_{\epsilon}(X) \leq \evar_{\epsilon}(X)$ as shown in~\cite{ahmadi2012entropic}.

\section{A Scenario Approach to Bounding g-Entropic Risk Measures}
\label{sec:concentration_inequalities}
To provide a general procedure for upper bounding $g$-entropic risk measures (which are coherent risk measures themselves), we will make use of Theorem~\ref{thm:conjugate_dual_risk}.  By equation~\eqref{eq:inf_g_entropic} specifically, we note that every $g$-entropic risk measure can be represented as the infimum of a loss function that linearly depends on the expectation of a function $g^*$ of the random variable $X$ of interest.  As a result, if we could identify a general procedure to upper bound the expectation of a function of a random variable of interest, we could exploit such a procedure to develop upper bounds on a large class of $g$-entropic risk measures.  This expectation upper bounding is the key step in our approach and will be formalized through the following problem statement.
\begin{problem}
\label{prob:ub_expect}
For a scalar random variable $X$ with samples $x$ and unknown distribution $\pi$, and a function $f:\mathbb{R} \to \mathbb{R}$, determine a method to upper bound the expected value of $f(x)$, \textit{i.e.} to upper bound $\expect_{\pi}[f(X)]$.
\end{problem}
This section will build on our prior work~\cite{akella2022scenario} from which we will utilize an optimization problem and theorem.  The optimization problem is below, where $\{\ell_i\}_{i=1}^N$ is a set of scalar values $\ell \in \mathbb{R}$ which will be defined whenever we require the solution to~\eqref{eq:upper_bound_scenario}, \textit{e.g.} we will state $\zeta^*_N$ is the solution to~\eqref{eq:upper_bound_scenario} \textit{for a set of samples $\{\ell_k\}_{k=1}^N$} and specify from where the samples originated:
\begin{equation}
    \label{eq:upper_bound_scenario}
    \tag{UB-RP-N}
    \begin{aligned}
        \zeta^*_N & = \argmin_{\zeta \in \mathbb{R}}~ & &\zeta, \\
        &~~\mathrm{subject~to}~ & & \zeta \geq \ell_i,~\forall~\ell_i \in \{\ell_k\}_{k=1}^N.
    \end{aligned}
\end{equation}
Then, the theorem we proved regarding high-confidence upper bounding of the Value-at-Risk level $\epsilon$ for a random variable $X$ is as follows.
\begin{theorem}[Adapted from Theorem 2 in~\cite{akella2022scenario}]
\label{thm:bounding_var}
Let $\zeta^*_N$ be the solution to~\eqref{eq:upper_bound_scenario} for a set of $N$ samples $\{x_k\}_{i=k}^N$ of a random variable $X$ with unknown distribution function $\pi$.  The following statement is true $\forall~\epsilon \in[0,1]$ and with $\var_{\epsilon}(X)$ as defined in Definition~\ref{def:var}:
\begin{equation}
    \prob^N_{\pi}[\zeta^*_N \geq \var_{\epsilon}(X)] \geq 1-(1-\epsilon)^N.
\end{equation}
\end{theorem}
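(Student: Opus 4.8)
The plan is to recognize that the constraint structure of~\eqref{eq:upper_bound_scenario} makes $\zeta^*_N$ nothing more than the sample maximum, and then to control the complementary event $\{\zeta^*_N < \var_\epsilon(X)\}$ directly. Indeed, minimizing $\zeta$ subject to $\zeta \geq x_k$ for every $k$ forces $\zeta^*_N = \max_{k} x_k$, so the event of interest $\{\zeta^*_N \geq \var_\epsilon(X)\}$ fails exactly when every single sample falls strictly below $\var_\epsilon(X)$. The whole argument therefore reduces to bounding the probability that all $N$ samples land below the Value-at-Risk level.

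First I would rewrite the failure event as $\{\zeta^*_N < \var_\epsilon(X)\} = \bigcap_{k=1}^N \{x_k < \var_\epsilon(X)\}$, which is immediate from the identification $\zeta^*_N = \max_k x_k$. Second, using that the $N$ samples are drawn i.i.d.\ from $\pi$ (so that $\prob^N_\pi$ is the associated product measure), I would factor this probability as $\big(\prob_\pi[X < \var_\epsilon(X)]\big)^N$. Third, I would establish the single-sample estimate $\prob_\pi[X < \var_\epsilon(X)] \leq 1 - \epsilon$. Combining these yields $\prob^N_\pi[\zeta^*_N < \var_\epsilon(X)] \leq (1-\epsilon)^N$, and taking complements gives the claimed bound $\prob^N_\pi[\zeta^*_N \geq \var_\epsilon(X)] \geq 1 - (1-\epsilon)^N$.

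The one step that needs genuine care---and the main obstacle---is the single-sample estimate, because of the mismatch between the non-strict inequality in the definition of $\var_\epsilon$ and the strict inequality that appears in the failure event. By Definition~\ref{def:var}, $\var_\epsilon(X) = \inf\{\zeta \mid \prob_\pi[X \leq \zeta] \geq 1-\epsilon\}$, so for every $\zeta < \var_\epsilon(X)$ we have $\prob_\pi[X \leq \zeta] < 1-\epsilon$. Letting $\zeta \uparrow \var_\epsilon(X)$ and invoking continuity from below of the measure applied to the increasing sequence of events $\{X \leq \zeta\}$, the left limit satisfies $\prob_\pi[X < \var_\epsilon(X)] = \lim_{\zeta \uparrow \var_\epsilon(X)} \prob_\pi[X \leq \zeta] \leq 1 - \epsilon$, which is precisely the estimate needed. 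I expect this monotone-limit argument to be the delicate part; everything else---the identification of $\zeta^*_N$ as the sample maximum and the independence factorization---is routine.
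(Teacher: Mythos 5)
Your proof is correct, but it follows a different route from the one the paper relies on. The paper does not prove Theorem~\ref{thm:bounding_var} in-line; it imports it from~\cite{akella2022scenario}, and the machinery set up in Section~\ref{sec:prelims} (Assumption~\ref{assump:RPN_solvability}, the violation probability of Definition~\ref{def:violation}, and Theorem~\ref{thm:scenario_opt}) makes the intended derivation clear: \eqref{eq:upper_bound_scenario} is a one-dimensional scenario program, so Theorem~\ref{thm:scenario_opt} with $d=1$ gives $\prob^N_{\pi}[V(\zeta^*_N) > \epsilon] \leq (1-\epsilon)^N$ where $V(\zeta^*_N) = \prob_{\pi}[x > \zeta^*_N]$; on the complementary event $\prob_{\pi}[x \leq \zeta^*_N] \geq 1-\epsilon$, so $\zeta^*_N$ belongs to the set whose infimum defines $\var_{\epsilon}(X)$ and hence $\zeta^*_N \geq \var_{\epsilon}(X)$. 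You instead bypass the scenario-optimization theorem entirely: you identify $\zeta^*_N = \max_k x_k$, write the failure event as all $N$ i.i.d.\ samples landing strictly below $\var_{\epsilon}(X)$, factor by independence, and bound the single-sample probability by $1-\epsilon$ via the monotone-limit argument $\prob_{\pi}[X < \var_{\epsilon}(X)] = \lim_{\zeta \uparrow \var_{\epsilon}(X)} \prob_{\pi}[X \leq \zeta] \leq 1-\epsilon$ (the correct delicate point, and you handle it properly --- every $\zeta < \var_{\epsilon}(X)$ lies outside the defining set since that set is upward closed). Your elementary route is self-contained, yields the exact identity $\prob^N_{\pi}[\zeta^*_N < \var_{\epsilon}(X)] = \left(\prob_{\pi}[X < \var_{\epsilon}(X)]\right)^N$ that exposes precisely where slack enters the bound, and avoids needing the uniqueness/solvability hypotheses of Assumption~\ref{assump:RPN_solvability}; the paper's route is less transparent for this special case but plugs directly into the general $d$-dimensional framework of~\cite{campi2008exact} that the rest of the paper reuses (e.g.\ in Theorem~\ref{thm:good_solutions}). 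For $d=1$ the two arguments give the same constant, as the Campi--Garatti bound is tight in this dimension.
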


\subsection{Upper Bounding the Expected Value of Functions of a Random Variable}
Upper bounding the expectation of functions of a random variable then requires two steps beyond our prior work.  First, we note that if the R.V. $X$ has an upper bound $u_b \in \mathbb{R}$, \textit{i.e.} $\prob_{\pi}[x~|~x \leq u_b] = 1$, then we can always upper bound the expected value of $X$ provided a cutoff $c \in \mathbb{R}$ and the probability mass of samples $x$ of $X$ that are below this cutoff:
\begin{equation}
    \label{eq:upper_bnd_expectation}
    \expect_{\pi}[X] = \int_{\mathbb{R}}~s\pi(s)~ds \leq c\prob_{\pi}[x~|~x\leq c] + u_b(1-\prob_{\pi}[x~|~x\leq c]).
\end{equation}
Identification of such a cutoff $c$ and bounding of the probability mass to its left $\prob_{\pi}[x~|~x\leq c]$ are both outcomes of Theorem~\ref{thm:bounding_var}.  As such, we can use our prior work to identify an upper bound to $\expect_{\pi}[X]$ \textit{without knowledge of its distribution} $\pi$ provided we know a (perhaps) loose upper bound $u_b \in \mathbb{R}$.

\begin{figure}[t]
    \centering
    \includegraphics{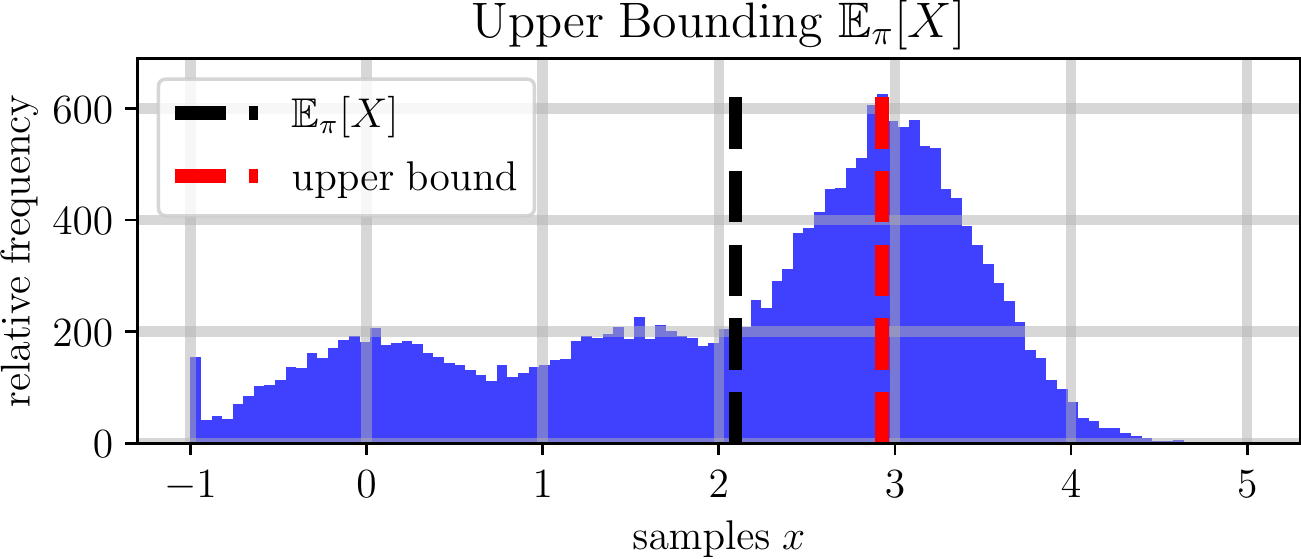}
    \caption{To upper bound $g$-entropic risk measures, we motivate that we first must be able to upper bound the expected value of a scalar random variable $X$.  The figure above is an example of our ability to do this as per Theorem~\ref{thm:bound_expectation}.  Here, we upper bound (red) the expected value (black) of a multi-modal random variable $X$ by taking $N = 20$ samples of $X$ and knowing its upper bound $u_b = 5$.  The true distribution (blue) was calculated numerically by taking $20000$ samples of $X$.}
    \label{fig:ub_expectation}
\end{figure}

\begin{theorem}
\label{thm:bound_expectation}
Let $X$ be a scalar R.V. with samples $x$, unknown distribution function $\pi$, and upper bound $u_b \in \mathbb{R}$ such that $\prob_{\pi}[x~|~x \leq u_b] = 1$.  Furthermore, let $\zeta^*_N$ be the solution to~\eqref{eq:upper_bound_scenario} for a set of $N$ samples $\{x_k\}_{k=1}^N$ of $X$.  The following statement is true $\forall~\epsilon \in[0,1]$:
\begin{equation}
    \prob^N_{\pi}\left[\expect_{\pi}[X] \leq \zeta^*_N(1-\epsilon) + u_b\epsilon\right] \geq 1-(1-\epsilon)^N.
\end{equation}
\end{theorem}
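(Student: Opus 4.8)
The plan is to combine the high-confidence Value-at-Risk bound of Theorem~\ref{thm:bounding_var} with the deterministic expectation bound~\eqref{eq:upper_bnd_expectation}, choosing the cutoff $c$ in the latter to be exactly $\var_{\epsilon}(X)$. First I would invoke Theorem~\ref{thm:bounding_var} to establish that the event $\mathcal{E} = \{\zeta^*_N \geq \var_{\epsilon}(X)\}$ occurs with probability at least $1-(1-\epsilon)^N$ under $\prob^N_\pi$. The remainder of the argument shows that the target inequality holds deterministically on $\mathcal{E}$, so that the probability of the target inequality is bounded below by $\prob^N_\pi[\mathcal{E}]$.

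Next I would apply~\eqref{eq:upper_bnd_expectation} with cutoff $c = \var_{\epsilon}(X)$, writing $p = \prob_\pi[x \leq \var_\epsilon(X)]$, to obtain the deterministic bound $\expect_\pi[X] \leq \var_\epsilon(X)\,p + u_b(1-p)$. Two facts about the Value-at-Risk are needed here. By Definition~\ref{def:var} together with right-continuity of the cumulative distribution function, the infimum defining $\var_\epsilon(X)$ is attained, so $p \geq 1-\epsilon$. Moreover, since $u_b$ satisfies $\prob_\pi[x \leq u_b] = 1 \geq 1-\epsilon$, the point $u_b$ lies in the feasible set of the infimum defining $\var_\epsilon(X)$, whence $\var_\epsilon(X) \leq u_b$.

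The key step is then to remove the dependence on the unknown mass $p$. Viewing the right-hand side as $h(p) = u_b + p(\var_\epsilon(X) - u_b)$, the coefficient of $p$ is nonpositive because $\var_\epsilon(X) \leq u_b$, so $h$ is non-increasing in $p$. Since $p \geq 1-\epsilon$, this gives $h(p) \leq h(1-\epsilon) = \var_\epsilon(X)(1-\epsilon) + u_b\epsilon$, and hence $\expect_\pi[X] \leq \var_\epsilon(X)(1-\epsilon) + u_b\epsilon$. Finally, on $\mathcal{E}$ we have $\var_\epsilon(X) \leq \zeta^*_N$, and since $1-\epsilon \geq 0$ this upgrades the bound to $\expect_\pi[X] \leq \zeta^*_N(1-\epsilon) + u_b\epsilon$. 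Combining this with $\prob^N_\pi[\mathcal{E}] \geq 1-(1-\epsilon)^N$ completes the argument.

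I expect the main obstacle to be the replacement of the unknown probability mass $p$ by its lower bound $1-\epsilon$: because the bound depends on $p$ through a term whose sign must be tracked, one has to use both $\var_\epsilon(X) \leq u_b$ (to fix the direction of monotonicity) and $p \geq 1-\epsilon$ (to select the worst case), and this in turn relies on the correct reading of the infimum in Definition~\ref{def:var}. Everything else reduces either to a direct invocation of Theorem~\ref{thm:bounding_var} or to the elementary inequality~\eqref{eq:upper_bnd_expectation}.
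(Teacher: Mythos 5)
Your proposal is correct and follows essentially the same route as the paper's proof: invoke Theorem~\ref{thm:bounding_var}, apply the decomposition~\eqref{eq:upper_bnd_expectation}, and use monotonicity of the bound in the unknown probability mass (valid because the cutoff is dominated by $u_b$) to replace that mass by $1-\epsilon$. The only cosmetic difference is that you take the cutoff to be $\var_\epsilon(X)$ and upgrade to $\zeta^*_N$ on the high-probability event at the end, whereas the paper uses $\zeta^*_N$ as the cutoff directly; your version separates the deterministic and probabilistic steps a bit more cleanly, but the argument is the same.
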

\begin{proof}
First, by Theorem~\ref{thm:bounding_var} and the definition of $\var_{\epsilon}(X)$ in Definition~\ref{def:var}, we can lower bound with high confidence the probability with which samples $x$ of $X$ lie below $\zeta^*_N$:
\begin{equation}
    \prob^N_{\pi}\left[\prob_{\pi}[x~|~x \leq \zeta^*_N] \geq 1-\epsilon \right] \geq 1-(1-\epsilon)^N.
\end{equation}
Then by equation~\eqref{eq:upper_bnd_expectation} we have the following inequality:
\begin{equation}
    \label{eq:t3_step_before_end}
    \expect_{\pi}[X] \leq \zeta^*_N \prob_{\pi}[x~|~x \leq \zeta^*_N] + u_b(1-\prob_{\pi}[x~|~x\leq\zeta^*_N).
\end{equation}
Also, by Theorem~\ref{thm:bounding_var} and the definition of $\var_{\epsilon}(X)$ in Definition~\ref{def:var}, we know that $\prob_{\pi}[x~|~x \leq \zeta^*_N] \geq 1-\epsilon$ with some minimum probability. To finish the proof we note that the right hand side of the inequality in~\eqref{eq:t3_step_before_end} is maximized when this probability $\prob_{\pi}[x~|~x \leq \zeta^*_N]$ equals its lower bound $1-\epsilon$, as $u_b \geq \zeta^*_N$.  As this lower bound holds with minimum probability $1-(1-\epsilon)^N$, the result holds with the same probability, \textit{i.e.}
\begin{equation}
    \prob^N_{\pi}\left[\expect_{\pi}[X] \leq \zeta^*_N(1-\epsilon) + u_b\epsilon\right] \geq 1-(1-\epsilon)^N.
\end{equation}
\end{proof}

The second step we require to bound arbitrary functions of our random variable $X$ is noting that if we have a function $f:\mathbb{R} \to \mathbb{R}$, then $f(X) = Y$ is another scalar random variable with samples $y$ and distribution $\pi_Y$.  As a result, we can make use of the prior theorem bounding the expectation of a random variable - $Y$ in this case - and such an upper bound would naturally correspond to an upper bound on $\expect_{\pi}[f(X)]$.  This is formally expressed in Corollary~\ref{corr:ub_func_expect}.
\begin{corollary}
\label{corr:ub_func_expect}
Let $X$ be a scalar R.V. with samples $x$ and distribution $\pi$.  Let $f:\mathbb{R} \to \mathbb{R}$ and let $Y = f(X)$ be another scalar R.V. with samples $y=f(x)$, distribution $\pi_Y$, and upper bound $u_b$ such that $\prob_{\pi_Y}[y~|~y \leq u_b] = 1$.  Furthermore, let $\zeta^*_N$ be the solution to~\eqref{eq:upper_bound_scenario} for a set of $N$ samples $\{y_k = f(x_k)\}_{k=1}^N$ of $Y$.  The following statement is true $\forall~\epsilon \in [0,1]$.
\begin{equation}
    \label{eq:ub_func_expect}
    \prob^N_{\pi}\left[\expect_{\pi}[f(X)] \leq \zeta^*_N(1-\epsilon) + u_b\epsilon\right] \geq 1-(1-\epsilon)^N.
\end{equation}
\end{corollary}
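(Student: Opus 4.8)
The plan is to recognize that this corollary is essentially a direct instantiation of Theorem~\ref{thm:bound_expectation}, applied not to $X$ but to the transformed random variable $Y = f(X)$. Since $Y$ is itself a scalar random variable with distribution $\pi_Y$ and a known upper bound $u_b$ satisfying $\prob_{\pi_Y}[y~|~y \leq u_b] = 1$, every hypothesis of Theorem~\ref{thm:bound_expectation} is met with $Y$ playing the role of $X$ there. Thus the first step is simply to invoke that theorem with $Y$ substituted in, and with $\zeta^*_N$ the solution to~\eqref{eq:upper_bound_scenario} formed from the samples $\{y_k\}_{k=1}^N$, yielding
\begin{equation}
    \prob^N_{\pi_Y}\left[\expect_{\pi_Y}[Y] \leq \zeta^*_N(1-\epsilon) + u_b\epsilon\right] \geq 1-(1-\epsilon)^N.
\end{equation}

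The second step is a change-of-variables identity. By the law of the unconscious statistician, $\expect_{\pi_Y}[Y] = \expect_{\pi}[f(X)]$, so the event inside the probability above is identical to the event $\{\expect_{\pi}[f(X)] \leq \zeta^*_N(1-\epsilon) + u_b\epsilon\}$ appearing in the claimed statement~\eqref{eq:ub_func_expect}. This substitution is immediate and requires no hypotheses beyond the measurability of $f$ that is already implicit in $Y$ being a well-defined random variable.

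The only point requiring care — and the step I expect to be the main, albeit minor, obstacle — is reconciling the two outer product measures $\prob^N_{\pi_Y}$ and $\prob^N_{\pi}$. The samples $y_k = f(x_k)$ are generated by drawing $x_k$ independently from $\pi$ and applying the deterministic map $f$; consequently the $N$-fold product measure $\prob^N_{\pi_Y}$ governing the i.i.d.\ draws of $Y$ is the pushforward of the $N$-fold product measure $\prob^N_{\pi}$ governing the draws of $X$. Because $\zeta^*_N$ is a deterministic function of $(y_1,\dots,y_N) = (f(x_1),\dots,f(x_N))$, the event whose probability we bound is measurable under both descriptions and carries the same mass under each. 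Hence the bound established under $\prob^N_{\pi_Y}$ transfers verbatim to $\prob^N_{\pi}$, and the corollary follows.
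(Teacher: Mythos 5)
Your proposal is correct and follows essentially the same route as the paper's own proof: apply Theorem~\ref{thm:bound_expectation} to the random variable $Y=f(X)$ and then transfer the statement back to $X$ via the identity $\expect_{\pi_Y}[Y]=\expect_{\pi}[f(X)]$ and the fact that the samples $y_k$ are obtained by pushing forward samples $x_k$ through $f$. Your explicit remarks on the pushforward of the product measure make rigorous a step the paper treats as immediate, but the argument is the same.
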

\begin{proof}
Via Theorem~\ref{thm:bound_expectation} applied to the R.V. $Y$ with samples $y$, we have the following statement:
\begin{equation}
    \prob^N_{\pi_Y}\left[\expect_{\pi_Y}[Y] \leq \zeta^*_N(1-\epsilon) + u_b\epsilon\right] \geq 1-(1-\epsilon)^N.
\end{equation}
As mentioned in the assumptions though, we generated samples $y$ of $Y$, by applying $f$ to samples $x$ of $X$.  As such, the above inequality is equivalent to~\eqref{eq:ub_func_expect} completing the proof.
\end{proof}

\begin{figure}[t]
    \centering
    \includegraphics{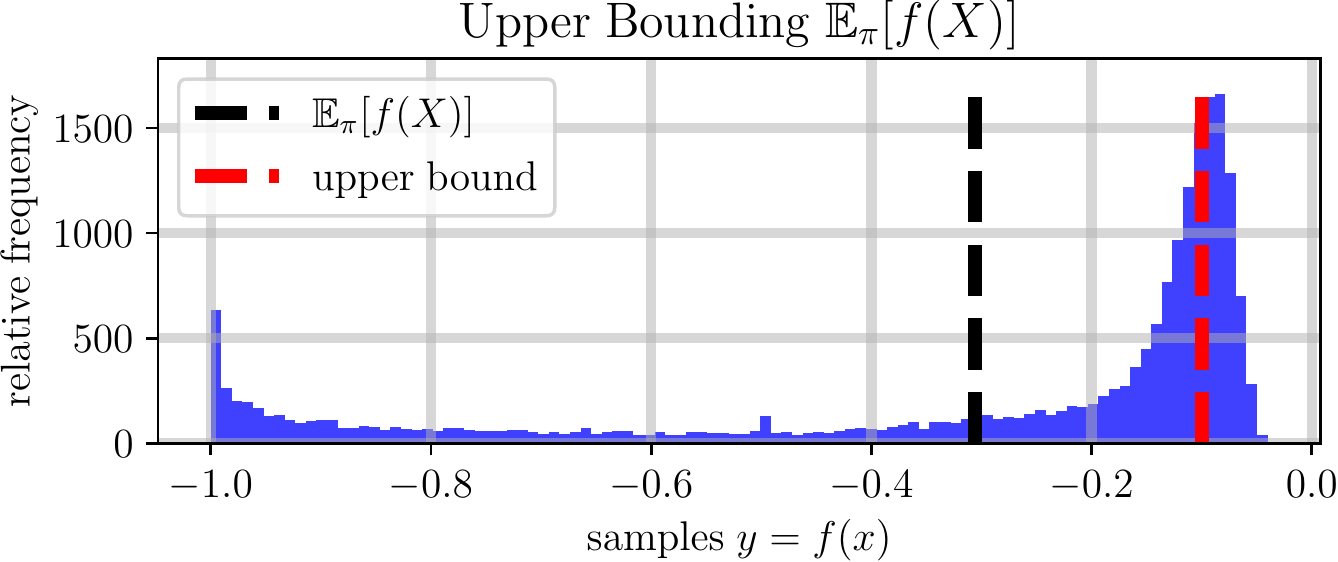}
    \caption{The second step to upper bounding $g$-entropic risk measures is upper bounding the expected value of a function $f$ of a scalar random variable $X$ with unknown distribution $\pi$.  Above is an example of us upper bounding $\expect_{\pi}[f(X)]$ where $f(x) = \frac{-1}{x^2+1}$ for the multi-modal random variable $X$ whose expectation we upper bounded in Figure~\ref{fig:ub_expectation}.  To generate this upper bound we took $N = 20$ samples of the random variable $Y=f(X)$.  We formally prove our ability to do this upper bound in Corollary~\ref{corr:ub_func_expect}.  As before, the true expected value is shown in black and the upper bound is shown in red.}
    \label{fig:ub_func_expectation}
\end{figure}

\subsection{Upper Bounding g-entropic risk measures}
With Corollary~\ref{corr:ub_func_expect}, we can develop our procedure to upper bound a large class of $g$-entropic risk measures, including Conditional-Value-at-Risk and Entropic-Value-at-Risk.  The main idea is as follows.  First, for the $g$-entropic risk measure of interest, $\entrisk_{g,\beta}$ as per Definition~\ref{def:g-entropic}, we require that the convex conjugate $g^*$ for $g$ maps from the reals to the reals, \textit{i.e.} $g^*:\mathbb{R} \to \mathbb{R}$, and also has an upper bound when applied to samples $x$ of the random variable $X$.  We will formally state this in the following assumption.
\begin{assumption}
\label{assump:g_measure_bound}
$X$ is a scalar random variable with samples $x$, distribution $\pi$, and upper bound $\ell \in \mathbb{R}$ such that $\prob_{\pi}[x~|~x \leq \ell] = 1$. $\entrisk_{g,\beta}$ is a $g$-entropic risk measure as per Definition~\ref{def:g-entropic} with respect to some $\beta \geq 0$ and closed convex function $g$ such that $g(1) = 0$. $g^*:\mathbb{R} \to \mathbb{R}$ is the convex-conjugate for $g$, and $g^*$ has an upper bounding function $u_b: D \subset \mathbb{R}^2 \to \mathbb{R}$ satisfying the following probabilistic equality $\forall~\mu \in \mathbb{R},~t > 0$:
\begin{gather}
    L(x,\mu,t) \triangleq t\left(\mu + g^*\left(\frac{x}{t}-\mu + \beta\right)\right), \label{eq:inf_deterministic_loss}\\
    \prob_{\pi}\left[x~\big|~L(x,\mu,t) \leq u_b(\mu,t) \right] = 1. \label{eq:inf_deterministic_ub}
\end{gather}
Finally, $\zeta^*_N(\mu,t)$ is the solution to~\eqref{eq:upper_bound_scenario} for a set of $N$-samples $\{y_k = L(x_k,\mu,t)\}_{k=1}^N$ of the random variable $Y = L(X,\mu,t)$ for some $\mu \in \mathbb{R},~t>0$.
\end{assumption}
\noindent For context, this assumption is not too restrictive, as it is easily satisfied by both $\cvar$ and $\evar$ for any risk level $\alpha \in (0,1]$, and we will show this to be the case.  The second step then, is to upper bound the objective function in~\eqref{eq:inf_g_entropic}, via Corollary~\ref{corr:ub_func_expect}.  This will change~\eqref{eq:inf_g_entropic} from an optimization problem over an (unknown) distribution $\pi$ for the random variable $X$, to an optimization problem over a set of sampled values $\{x_k\}_{k=1}^N$ of $X$ which is easily solvable.  The formal statement of this procedure is proven in the following lemma,  theorem, and corollary.  The lemma will state that we can upper bound the expected value of the convex conjugate $g^*$ applied to samples $x$ of $X$ and the theorem will utilize this lemma to provide a high confidence upper bound on the $g$-entropic risk measure of interest.  The corollary will formalize a relationship between the confidence in our estimate and the number of samples we take of the random variable.

\begin{lemma}
\label{lem:bound_inf_objective}
Let Assumption~\ref{assump:g_measure_bound} hold and let $L,u_b$ as defined in equations~\eqref{eq:inf_deterministic_loss} and~\eqref{eq:inf_deterministic_ub} respectively.  Then, $\forall~\epsilon \in[0,1],~\mu\in\mathbb{R},~t > 0$,
\begin{equation}
    \prob^N_{\pi}\left[\expect_{\pi}[L(X,\mu,t)] \leq \zeta^*_N(\mu,t)(1-\epsilon) + u_b(\mu,t)\epsilon \right] \geq 1-(1-\epsilon)^N.
\end{equation}
\end{lemma}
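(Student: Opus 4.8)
The plan is to recognize this lemma as a direct instantiation of Corollary~\ref{corr:ub_func_expect}, applied to the fixed-parameter loss function rather than to $X$ itself. First I would fix an arbitrary $\mu \in \mathbb{R}$ and $t > 0$ and define $f(x) \triangleq L(x,\mu,t) = t\left(\mu + g^*\left(\frac{x}{t} - \mu + \beta\right)\right)$. Because Assumption~\ref{assump:g_measure_bound} guarantees $g^* : \mathbb{R} \to \mathbb{R}$, this $f$ is a well-defined map $\mathbb{R} \to \mathbb{R}$, so that $Y = f(X) = L(X,\mu,t)$ is itself a scalar random variable with some induced (pushforward) distribution $\pi_Y$. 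This is exactly the object the corollary is built to handle.

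Next I would verify that $Y$ meets the upper-bound hypothesis required by Corollary~\ref{corr:ub_func_expect}. The probabilistic equality~\eqref{eq:inf_deterministic_ub} states $\prob_{\pi}\left[x~\big|~L(x,\mu,t) \leq u_b(\mu,t)\right] = 1$, and under the change of variables $y = L(x,\mu,t)$ this is precisely the statement that $\prob_{\pi_Y}\left[y~|~y \leq u_b(\mu,t)\right] = 1$; hence $u_b(\mu,t)$ serves as the constant upper bound on $Y$ demanded by the corollary. Since the lemma posits that $\zeta^*_N(\mu,t)$ solves~\eqref{eq:upper_bound_scenario} on the sample set $\{y_k = L(x_k,\mu,t)\}_{k=1}^N$ — exactly the quantity the corollary denotes $\zeta^*_N$ — every hypothesis is matched, and invoking Corollary~\ref{corr:ub_func_expect} with this $f$ yields $\prob^N_{\pi}\left[\expect_{\pi}[L(X,\mu,t)] \leq \zeta^*_N(\mu,t)(1-\epsilon) + u_b(\mu,t)\epsilon\right] \geq 1-(1-\epsilon)^N$. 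As $\mu$ and $t$ were arbitrary, this holds for all $\mu \in \mathbb{R}$, $t > 0$ and all $\epsilon \in [0,1]$, which is the claim.

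Honestly, there is no deep obstacle here: the lemma is a bookkeeping step that relabels the already-proven expectation bound for the specific loss $L$. The two points meriting care are, first, confirming that the \emph{function-valued} upper bound $u_b(\mu,t)$ genuinely plays the role of the \emph{constant} bound $u_b$ once $\mu,t$ are frozen — which is exactly what~\eqref{eq:inf_deterministic_ub} secures — and second, emphasizing that the quantifier over $(\mu,t)$ is pointwise, so the bound $1-(1-\epsilon)^N$ is attained separately for each parameter pair with no union bound or continuum-wide concentration argument required. The genuinely hard version of this statement would be a guarantee holding \emph{uniformly} over all $(\mu,t)$ simultaneously, since that is what one ultimately needs to bound the infimum in~\eqref{eq:inf_g_entropic}; this lemma deliberately isolates the per-parameter estimate and leaves that uniformization to the theorem that follows.
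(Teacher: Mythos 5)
Your proposal is correct and matches the paper's own proof, which simply states that the lemma is a direct application of Corollary~\ref{corr:ub_func_expect}; you have just spelled out the instantiation (fixing $\mu,t$, taking $f(\cdot)=L(\cdot,\mu,t)$, and checking that $u_b(\mu,t)$ supplies the required constant upper bound on $Y$) in more detail. Your closing remark that the guarantee is pointwise in $(\mu,t)$ rather than uniform is a fair observation about what Theorem~\ref{thm:ub_g_entropic} must still address, but it does not change the argument here.
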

\begin{proof}
This is a direct application of Corollary~\ref{corr:ub_func_expect}.
\end{proof}

\begin{theorem}
\label{thm:ub_g_entropic}
Let Assumption~\ref{assump:g_measure_bound} hold.  Then, $\forall~\epsilon \in [0,1]$,
\begin{equation}
    \label{eq:ub_g_entropic}
    \prob^N_{\pi}\left[ER_{g,\beta}(X) \leq \inf_{t > 0,~\mu \in \mathbb{R}}~\zeta^*_N(\mu,t)(1-\epsilon) + u_b(\mu,t)\epsilon\right] \geq 1-(1-\epsilon)^N.
\end{equation}
\end{theorem}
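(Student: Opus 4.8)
The plan is to reduce the claim to the pointwise bound already contained in Lemma~\ref{lem:bound_inf_objective}, and then to upgrade that per-parameter guarantee into one holding simultaneously across the entire continuum of $(\mu,t)$ over which the infimum in~\eqref{eq:ub_g_entropic} is taken. First I would invoke Theorem~\ref{thm:conjugate_dual_risk} to rewrite the risk measure as an infimum of expectations: since $t[\mu + \expect_P[g^*(X/t-\mu+\beta)]] = \expect_\pi[L(X,\mu,t)]$ with $L$ as in~\eqref{eq:inf_deterministic_loss}, we have $\entrisk_{g,\beta}(X) = \inf_{t>0,\,\mu\in\mathbb{R}} \expect_\pi[L(X,\mu,t)]$. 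This is a deterministic quantity, and crucially it is dominated by $\expect_\pi[L(X,\mu,t)]$ for every admissible $(\mu,t)$.

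Next I would record the pointwise estimate. Fixing $(\mu,t)$, Lemma~\ref{lem:bound_inf_objective} gives $\expect_\pi[L(X,\mu,t)] \le \zeta^*_N(\mu,t)(1-\epsilon) + u_b(\mu,t)\epsilon$ with probability at least $1-(1-\epsilon)^N$; combined with the domination above, this yields $\entrisk_{g,\beta}(X) \le \zeta^*_N(\mu,t)(1-\epsilon)+u_b(\mu,t)\epsilon$ with the same confidence, for each individual $(\mu,t)$. If this inequality could be guaranteed for all $(\mu,t)$ on a \emph{common} event, then taking the infimum of the right-hand side over $(\mu,t)$ and using $\entrisk_{g,\beta}(X) = \inf_{\mu,t}\expect_\pi[L(X,\mu,t)]$ would deliver exactly~\eqref{eq:ub_g_entropic}.

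The main obstacle, and the step that deserves care, is precisely this passage from a per-$(\mu,t)$ statement to one holding uniformly over the uncountable family of parameters: a naive union bound fails, and the failure event $\{\exists(\mu,t):\zeta^*_N(\mu,t)<\var_\epsilon(L(X,\mu,t))\}$ is a genuine uniform-convergence event. I would resolve it by exhibiting a single ``good'' event on the samples of $X$ that forces the bound for every $(\mu,t)$ at once. The key structural fact is that $L(\cdot,\mu,t)=t(\mu+g^*(\,\cdot/t-\mu+\beta))$ is nondecreasing in $x$ (because $g^*$ is nondecreasing, being the conjugate of a divergence-generating $g$ whose domain is the nonnegative reals, consistent with $dQ/dP\ge 0$, and $t>0$). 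Consequently the sublevel sets $\{x:L(x,\mu,t)<c\}$ are half-lines rather than arbitrary intervals, and the empirical maximizer transports: with $\xi^*_N := \max_k x_k$ the solution of~\eqref{eq:upper_bound_scenario} for the samples of $X$ itself, monotonicity gives $\zeta^*_N(\mu,t)=L(\xi^*_N,\mu,t)$ for every $(\mu,t)$.

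Finally I would apply Theorem~\ref{thm:bounding_var} to $X$ directly to obtain the single event $G=\{\xi^*_N\ge\var_\epsilon(X)\}$, equivalently $\{\prob_\pi[x\le\xi^*_N]\ge 1-\epsilon\}$, with $\prob^N_\pi[G]\ge 1-(1-\epsilon)^N$. On $G$, monotonicity yields $\prob_\pi[L(X,\mu,t)\le\zeta^*_N(\mu,t)]\ge\prob_\pi[X\le\xi^*_N]\ge 1-\epsilon$ for every $(\mu,t)$ simultaneously; feeding this into~\eqref{eq:upper_bnd_expectation} as in the proof of Theorem~\ref{thm:bound_expectation} (and using $u_b(\mu,t)\ge\zeta^*_N(\mu,t)$, guaranteed by~\eqref{eq:inf_deterministic_ub}) shows $\expect_\pi[L(X,\mu,t)]\le\zeta^*_N(\mu,t)(1-\epsilon)+u_b(\mu,t)\epsilon$ for all $(\mu,t)$ on $G$. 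Taking infima over $(\mu,t)$ then gives $\entrisk_{g,\beta}(X)\le\inf_{t>0,\,\mu}\zeta^*_N(\mu,t)(1-\epsilon)+u_b(\mu,t)\epsilon$ on $G$, and since $\prob^N_\pi[G]\ge 1-(1-\epsilon)^N$ the claim follows. I expect the monotonicity/half-line observation to be the crux: without it the same samples control only arbitrary intervals, the clean confidence $1-(1-\epsilon)^N$ degrades to the $d=2$ scenario bound $\sum_{i=0}^{1}\binom{N}{i}\epsilon^i(1-\epsilon)^{N-i}$, and the stated rate is lost.
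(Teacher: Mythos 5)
Your proposal is correct, and it is in fact more complete than the paper's own argument. The paper's proof takes the same first two steps---rewriting $\entrisk_{g,\beta}(X)$ via Theorem~\ref{thm:conjugate_dual_risk} and linearity of expectation as $\inf_{t>0,\mu\in\mathbb{R}}\expect_\pi[L(X,\mu,t)]$---and then simply declares that the result follows from Lemma~\ref{lem:bound_inf_objective}. That lemma, however, is a per-$(\mu,t)$ statement: each event $\{\expect_\pi[L(X,\mu,t)]\le\zeta^*_N(\mu,t)(1-\epsilon)+u_b(\mu,t)\epsilon\}$ has probability at least $1-(1-\epsilon)^N$, but the theorem needs these inequalities on a common event before the infima can be compared (the minimizer of the right-hand side is itself sample-dependent, so one cannot simply instantiate the lemma there). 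The paper leaves this passage implicit; you correctly identify it as the crux and close it. Your closing argument is sound: since $t>0$ and $g^*$ is nondecreasing, $L(\cdot,\mu,t)$ is nondecreasing in $x$, hence $\zeta^*_N(\mu,t)=L(\max_k x_k,\mu,t)$ for every $(\mu,t)$ simultaneously, and the single event $\{\prob_\pi[x\le\max_k x_k]\ge 1-\epsilon\}$ supplied by Theorem~\ref{thm:bounding_var} (probability at least $1-(1-\epsilon)^N$) forces the expectation bound uniformly in $(\mu,t)$, after which taking infima on both sides is legitimate. The one caveat is that monotonicity of $g^*$ is not literally listed in Assumption~\ref{assump:g_measure_bound}; it does hold whenever $g$ has effective domain in $\mathbb{R}_{\geq 0}$ (as any divergence generator does, and as both conjugates used in the paper, $\frac{1}{\alpha}\max\{x,0\}$ and $e^{x-1}$, do), so your argument covers every instance the paper actually invokes, but it deserves to be stated as an explicit hypothesis.
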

\begin{proof}
Via Theorem~\ref{thm:conjugate_dual_risk}, the $g$-entropic risk measure $\entrisk_{g,\beta}$ can be represented via an infimum:
\begin{equation}    
    \label{eq:entropic_infimum_formulation}
    \entrisk_{g,\beta}(X) = \inf_{t > 0, \mu \in \mathbb{R}}~t \left[\mu + \expect_{\pi}\left[g^*\left( \frac{X}{t} - \mu + \beta\right)\right] \right].
\end{equation}
Then, via linearity of the expectation operator and the definition of $L$ in~\eqref{eq:inf_deterministic_loss}, we can rewrite~\eqref{eq:entropic_infimum_formulation} as follows:
\begin{equation}
     \entrisk_{g,\beta}(X) = \inf_{t > 0, \mu \in \mathbb{R}}~\expect_{\pi}[L(X,\mu,t)].
\end{equation}
Then the result holds via Lemma~\ref{lem:bound_inf_objective}.
\end{proof}

\begin{corollary}
\label{corr:fund_sample_requirement}
Let Assumption~\ref{assump:g_measure_bound} hold, $\gamma \in [0,1)$, and $\epsilon \in (0,1)$.  If $\zeta^*_N$ is the solution to~\eqref{eq:upper_bound_scenario} for a set of $N$ samples $\{x_k\}_{k=1}^N$ of the random variable $X$ where $N \geq \frac{\log(1-\gamma)}{\log(1-\epsilon)}$, then
\begin{equation}
    \prob^N_{\pi}\left[ER_{g,\beta}(X) \leq \inf_{t > 0,~\mu \in \mathbb{R}}~\zeta^*_N(\mu,t)(1-\epsilon) + u_b(\mu,t)\epsilon\right] \geq \gamma.
\end{equation}
\end{corollary}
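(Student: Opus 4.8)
The plan is to deduce this corollary directly from Theorem~\ref{thm:ub_g_entropic}, treating it as a translation of the confidence level $1-(1-\epsilon)^N$ into an explicit sample requirement. Theorem~\ref{thm:ub_g_entropic} already establishes, under Assumption~\ref{assump:g_measure_bound}, that the event $ER_{g,\beta}(X) \leq \inf_{t>0,~\mu\in\mathbb{R}} \zeta^*_N(\mu,t)(1-\epsilon) + u_b(\mu,t)\epsilon$ holds with probability at least $1-(1-\epsilon)^N$. Since the quantity inside the probability statement is unchanged between the two results, it suffices to show that the hypothesized sample size forces $1-(1-\epsilon)^N \geq \gamma$; monotonicity of the lower bound then immediately yields the claim.

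First I would reduce the desired inequality $1-(1-\epsilon)^N \geq \gamma$ to the equivalent form $(1-\epsilon)^N \leq 1-\gamma$. Because $\epsilon \in (0,1)$ we have $1-\epsilon \in (0,1)$, so taking the natural logarithm of both sides is order-preserving and yields $N\log(1-\epsilon) \leq \log(1-\gamma)$. The key sign observation is that $\log(1-\epsilon) < 0$, so dividing through by it reverses the inequality and produces exactly $N \geq \frac{\log(1-\gamma)}{\log(1-\epsilon)}$, which is precisely the hypothesized sample requirement. Reading this chain of equivalences backward shows that the stated bound on $N$ implies $1-(1-\epsilon)^N \geq \gamma$.

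Combining the two steps finishes the argument: the probability guaranteed by Theorem~\ref{thm:ub_g_entropic} is at least $1-(1-\epsilon)^N$, and the sample requirement ensures this in turn is at least $\gamma$, so the event holds with probability at least $\gamma$ as claimed.

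The only subtlety, which is a matter of careful bookkeeping rather than a genuine obstacle, is tracking the sign of $\log(1-\epsilon)$ and confirming that the stated bound is well posed. Since $\gamma \in [0,1)$ gives $1-\gamma \in (0,1]$ and hence $\log(1-\gamma) \leq 0$, while $\log(1-\epsilon) < 0$, the ratio $\frac{\log(1-\gamma)}{\log(1-\epsilon)}$ is a nonnegative real and a valid finite sample size always exists. In the degenerate case $\gamma = 0$ the requirement reads $N \geq 0$, and the claim $1-(1-\epsilon)^N \geq 0$ holds trivially for every $N$.
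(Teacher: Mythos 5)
Your proof is correct and takes essentially the same route as the paper: invoke Theorem~\ref{thm:ub_g_entropic} and observe that $N \geq \frac{\log(1-\gamma)}{\log(1-\epsilon)}$ implies $1-(1-\epsilon)^N \geq \gamma$. The paper states this implication without the sign bookkeeping you supply, so your version is simply a more careful expansion of the same one-line argument.
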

\begin{proof}
This is a direct consequence of Theorem~\ref{thm:ub_g_entropic}, as
\begin{equation}
    N \geq \frac{\log(1-\gamma)}{\log(1-\epsilon)} \implies 1- (1-\epsilon)^N \geq \gamma.
\end{equation}
\end{proof}

To summarize, Theorem~\ref{thm:ub_g_entropic} states that we can upper bound with high confidence the $g$-entropic risk measure of a random variable $X$ with unknown distribution $\pi$ provided that we know an upper bounding function $u_b$ for a function $L$ of samples $x$ of the random variable $X$.  Corollary~\ref{corr:fund_sample_requirement} provides the minimum number of samples $N$ required to determine this upper bound with a pre-specified confidence $\gamma \in (0,1)$. Based on Corollary~\ref{corr:fund_sample_requirement}, it may seem that one could arbitrarily choose $\epsilon \in (0,1)$ to minimize the minimum sample requirement.  While theoretically possible, too large a value of $\epsilon$ will result in the upper bound $r^* = \ell$ - the upper bound for the random variable $X$.  In practice and in the examples to follow, choosing $\epsilon < 1-\gamma$ resolves this issue.  That being said, to generate this bound $r^*$ we must solve a non-convex optimization problem over samples of this random variable.  To show that this problem is tractable, we will produce upper bounds on both the Conditional-Value-at-Risk and Entropic-Value-at-Risk.

\subsection{Specializing to CVaR and EVaR}
Ahmadi-Javid~\cite{ahmadi2012entropic} identifies the convex conjugate function $g^*$ and parameter $\beta$ with which the Conditional-Value-at-Risk can be recast as a $g$-entropic risk measure.
\begin{remark}
\label{rem:cvar_entropic_params}
The Conditional-Value-at-Risk level $\alpha \in (0,1]$ can be recast as a $g$-entropic risk measure with convex conjugate function $g^*(x) = \frac{1}{\alpha}\max\{x,0\}$ and scalar parameter $\beta = 0$~\cite{ahmadi2012entropic}.
\end{remark}
\noindent Then, to use Theorem~\ref{thm:ub_g_entropic} we must show that the Conditional-Value-at-Risk satisfies Assumption~\ref{assump:g_measure_bound}.
\begin{lemma}
\label{lem:cvar_satisfies_assump}
The Conditional-Value-at-Risk for any risk-level $\alpha \in (0,1]$ satisfies Assumption~\ref{assump:g_measure_bound}.
\end{lemma}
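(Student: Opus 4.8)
The plan is to verify each structural requirement of Assumption~\ref{assump:g_measure_bound} directly, with the only nontrivial piece being the explicit construction of a valid upper-bounding function. First I would invoke Remark~\ref{rem:cvar_entropic_params} to recast $\cvar_\alpha$ as a $g$-entropic risk measure with convex conjugate $g^*(x) = \frac{1}{\alpha}\max\{x,0\}$ and divergence level $\beta = 0$. Since $\max\{x,0\}$ maps the reals to the reals, this choice of $g^*$ immediately satisfies the requirement $g^*:\mathbb{R}\to\mathbb{R}$, and the remaining conditions on $g$ (closed, convex, $g(1)=0$) are guaranteed by the validity of the representation asserted in Remark~\ref{rem:cvar_entropic_params}. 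Thus the only object left to exhibit is the upper-bounding function $u_b$ and the verification of the probabilistic equality \eqref{eq:inf_deterministic_ub}.

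Next I would substitute these parameters into the loss $L$ defined in \eqref{eq:inf_deterministic_loss}, obtaining the explicit expression $L(x,\mu,t) = t\mu + \frac{t}{\alpha}\max\{\frac{x}{t}-\mu,0\}$ for every $\mu\in\mathbb{R}$ and $t>0$. The central observation is that, for any fixed $\mu$ and $t>0$, this $L$ is nondecreasing in its first argument $x$: the map $x\mapsto x/t$ is increasing because $t>0$, and $\max\{\cdot,0\}$ preserves monotonicity, so larger samples $x$ yield larger values of $L$. I expect this monotonicity step to be the crux of the argument, as it is precisely what allows a single pointwise bound on the support of $X$ to translate into a uniform bound on $L$ over that support.

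Finally, because Assumption~\ref{assump:g_measure_bound} supplies an upper bound $\ell$ on $X$ with $\prob_\pi[x \mid x\leq \ell] = 1$, I would define the candidate upper-bounding function $u_b(\mu,t) = L(\ell,\mu,t) = t\mu + \frac{t}{\alpha}\max\{\frac{\ell}{t}-\mu,0\}$. By the monotonicity established above, $L(x,\mu,t) \leq L(\ell,\mu,t) = u_b(\mu,t)$ holds for every $x\leq \ell$, and since $x\leq\ell$ almost surely, the probabilistic equality \eqref{eq:inf_deterministic_ub} follows for all $\mu\in\mathbb{R}$ and $t>0$. This exhibits every object demanded by Assumption~\ref{assump:g_measure_bound}, so $\cvar_\alpha$ satisfies it for every $\alpha\in(0,1]$, completing the proof.
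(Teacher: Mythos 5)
Your proof takes essentially the same approach as the paper: recast $\cvar_\alpha$ via Remark~\ref{rem:cvar_entropic_params} and exhibit $u_b(\mu,t) = L(\ell,\mu,t)$ as the upper-bounding function. In fact you supply the monotonicity-in-$x$ argument that the paper merely asserts, which is the right justification for the probabilistic equality \eqref{eq:inf_deterministic_ub}; the only detail the paper mentions that you omit is the (trivial) solvability of \eqref{eq:upper_bound_scenario}, which is a linear program over a finite set of lower bounds.
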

\begin{proof}
To start, $\cvar$ for any risk-level $\alpha$ is a $g$-entropic risk measure with $g^*(x) = \frac{1}{\alpha}\max\{x,0\}$ and $\beta = 0$.  As a result, a solution $\zeta^*_N(\mu,t)$ will always exist for~\eqref{eq:upper_bound_scenario} as it is the solution to a linear program minimizing a scalar decision variable subject to a finite set of lower bounds taking values in $\mathbb{R}$.  Then, to prove that the Conditional-Value-at-Risk at any risk level $\alpha \in (0,1]$ satisfies Assumption~\ref{assump:g_measure_bound}, it suffices to identify an upper bounding function $u_b: D \subset \mathbb{R}^2 \to \mathbb{R}$ under the assumption that the scalar random variable $X$ with distribution $\pi$ and samples $x$ has an upper bound $\ell \in \mathbb{R}$ such that $\prob_{\pi}[x~|~x \leq \ell] = 1$.  This function $u_b$ will be defined as follows:
\begin{equation}
    \label{eq:cvar_ub_function}
    L(x,\mu,t) = t\left( \mu + \frac{1}{\alpha}\max\left\{\frac{x}{t} - \mu, 0\right\}\right), \quad u_b(\mu,t) = L(\ell,\mu,t).
\end{equation}
Since this upper bounding function satisfies the probabilistic equality in Assumption~\ref{assump:g_measure_bound}, the Conditional-Value-at-Risk for any risk-level $\alpha \in (0,1]$ satisfies Assumption~\ref{assump:g_measure_bound}.
\end{proof}

\noindent As the Conditional-Value-at-Risk satisfies Assumption~\ref{assump:g_measure_bound}, we can use $u_b$~\eqref{eq:cvar_ub_function} to provide high-confidence estimates on $\cvar_{\alpha}(X)~\forall~\alpha \in (0,1]$.

\begin{figure}[t]
    \centering
    \includegraphics{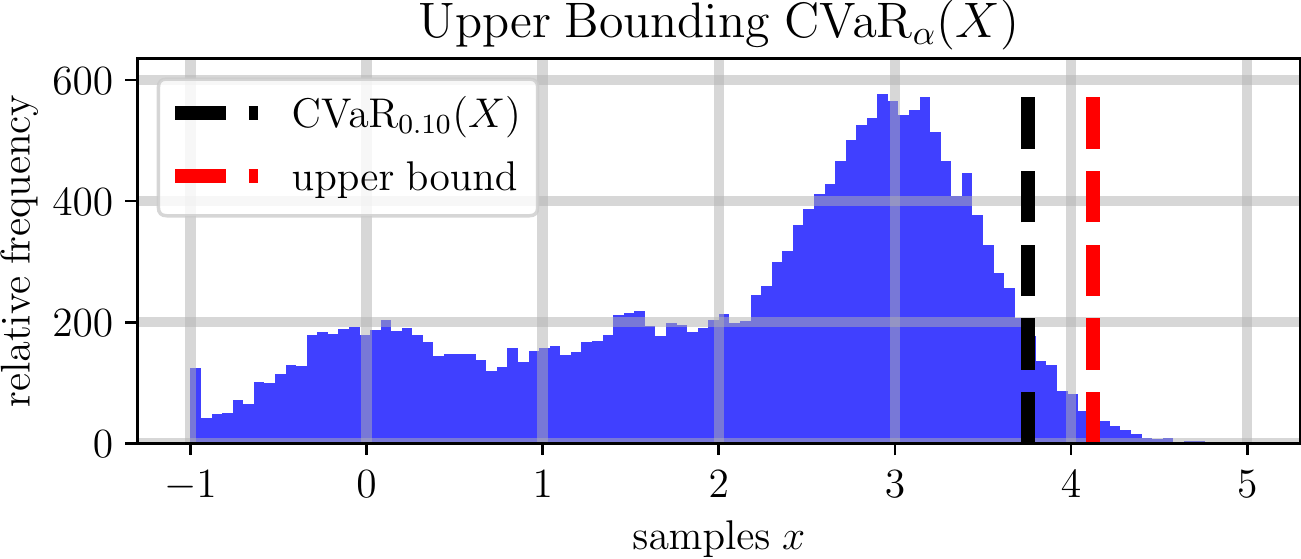}
    \caption{A specific $g$-entropic risk measure, the Conditional-Value-at-Risk for any risk-level $\alpha \in (0,1]$ should be upper-boundable via our scenario approach as expressed in Corollary~\ref{corr:ub_cvar}.  Shown above is an example of our scenario approach in bounding $\cvar_{\alpha = 0.1}(X)$ for the multi-modal random variable $X$ whose distribution is shown in blue .  To generate the upper bound shown in red, we required $N=45$ samples of $X$.  The true $\cvar_{0.1}(X)$ is shown in black.}
    \label{fig:ub_cvar}
\end{figure}

\begin{corollary}
\label{corr:ub_cvar}
Let $X$ be a scalar random variable with samples $x$, distribution $\pi$, and upper bound $\ell \in \mathbb{R}$ such that $\prob_{\pi}[x~|~x \leq \ell] = 1$.  Let $\alpha \in (0,1],~\epsilon\in[0,1]$, and $L,u_b$ be as defined in~\eqref{eq:cvar_ub_function} with respect to this upper bound $\ell$ and constant $\alpha$.  Furthermore, let $\zeta^*(\mu,t)$ be the solution to~\eqref{eq:upper_bound_scenario} for a set of $N$-samples $\{y_k = L(x_k,\mu,t)\}_{k=1}^N$ of the random variable $Y = L(X,\mu,t)$.  Then,
\begin{equation}
    \prob^N_{\pi}\left[\cvar_{\alpha}(X) \leq \inf_{\mu \in \mathbb{R},~t>0}~\zeta^*_N(\mu,t)(1-\epsilon) + u_b(\mu,t)\epsilon \right] \geq 1-(1-\epsilon)^N.
\end{equation}
\end{corollary}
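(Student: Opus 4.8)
The plan is to obtain this statement as an immediate specialization of Theorem~\ref{thm:ub_g_entropic}, since the bulk of the work has already been carried out in Remark~\ref{rem:cvar_entropic_params} and Lemma~\ref{lem:cvar_satisfies_assump}. First I would invoke Remark~\ref{rem:cvar_entropic_params} to recast $\cvar_{\alpha}(X)$ as the $g$-entropic risk measure $\entrisk_{g,\beta}(X)$ corresponding to the convex conjugate $g^*(x) = \frac{1}{\alpha}\max\{x,0\}$ and divergence level $\beta = 0$. With this identification in hand, the quantity $ER_{g,\beta}(X)$ appearing in Theorem~\ref{thm:ub_g_entropic} is literally $\cvar_{\alpha}(X)$, so the left-hand inequality inside the probability statement is exactly the one we wish to bound.

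Second, I would verify that the hypotheses of Theorem~\ref{thm:ub_g_entropic} --- that is, Assumption~\ref{assump:g_measure_bound} --- are met in the present setting. This is precisely the content of Lemma~\ref{lem:cvar_satisfies_assump}: for any $\alpha \in (0,1]$, and under the standing assumption that $X$ admits a known (possibly loose) upper bound $\ell$ with $\prob_{\pi}[x~|~x \leq \ell] = 1$, the functions $L$ and $u_b$ defined in~\eqref{eq:cvar_ub_function} furnish the required loss function and almost-sure upper-bounding function, while the scenario program~\eqref{eq:upper_bound_scenario} remains solvable for every sample set. The key point to confirm is that the $L$ and $u_b$ named in the corollary's hypotheses are exactly those produced by the lemma, so that the $\zeta^*_N(\mu,t)$ and $u_b(\mu,t)$ appearing in the target inequality agree with the objects that Theorem~\ref{thm:ub_g_entropic} bounds.

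Having matched the setting, I would simply apply Theorem~\ref{thm:ub_g_entropic} and substitute $ER_{g,\beta}(X) = \cvar_{\alpha}(X)$ to conclude
\[
\prob^N_{\pi}\left[\cvar_{\alpha}(X) \leq \inf_{\mu \in \mathbb{R},~t>0}~\zeta^*_N(\mu,t)(1-\epsilon) + u_b(\mu,t)\epsilon \right] \geq 1-(1-\epsilon)^N,
\]
which is the claim. There is essentially no hard step here: the corollary is a packaging result that instantiates the general $g$-entropic bound on a concrete, widely used risk measure. If anything, the only point requiring care is bookkeeping --- ensuring that the roles of $\alpha$ and $\epsilon$ are kept distinct (the former is the $\cvar$ risk level fixed by the recast, the latter the scenario cutoff governing the confidence $1-(1-\epsilon)^N$), and that the upper bound $\ell$ for $X$ correctly propagates through $u_b(\mu,t) = L(\ell,\mu,t)$.
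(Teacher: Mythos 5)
Your proposal is correct and matches the paper's own proof, which likewise obtains the corollary as a direct application of Theorem~\ref{thm:ub_g_entropic} by way of Lemma~\ref{lem:cvar_satisfies_assump} and the identification of $\cvar_{\alpha}$ as a $g$-entropic risk measure with $g^*(x) = \frac{1}{\alpha}\max\{x,0\}$ and $\beta = 0$. The only difference is that you spell out the bookkeeping that the paper leaves implicit.
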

\begin{proof}
This is a direct application of Theorem~\ref{thm:ub_g_entropic} due to Lemma~\ref{lem:cvar_satisfies_assump}.
\end{proof}

\begin{figure}[t]
    \centering
    \includegraphics{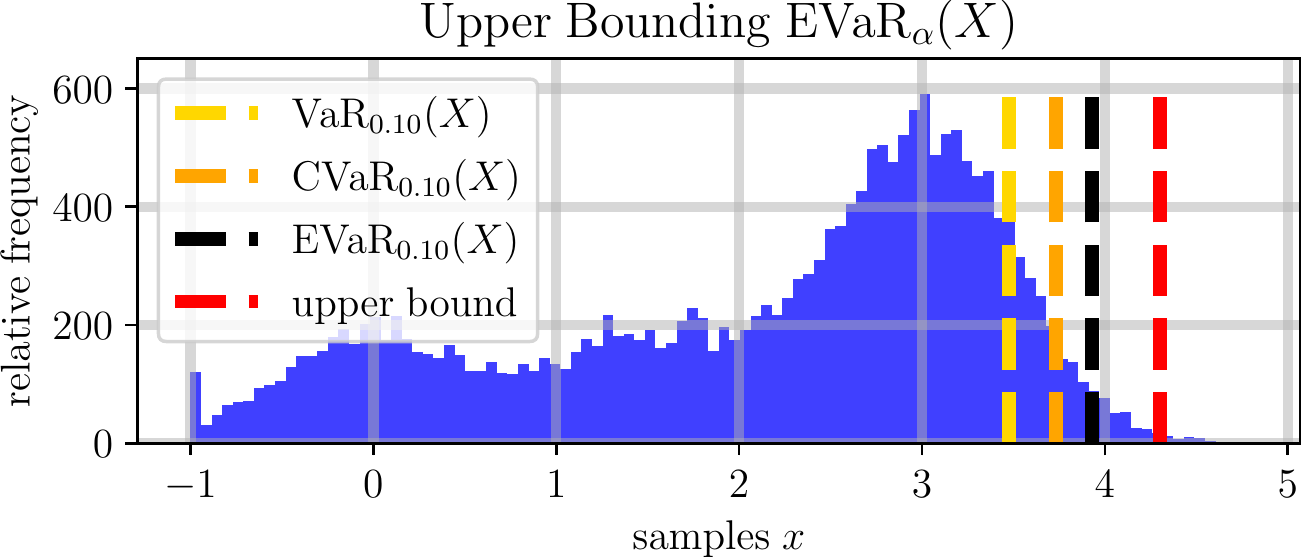}
    \caption{A culmination of our approach in devising a method to upper bound arbitrary $g$-entropic risk measures.  Shown above is our attempt to upper bound the Entropic-Value-at-Risk at risk level $\alpha = 0.1$ of the scalar multi-modal random variable $X$ whose distribution is shown in blue.  To calculate this upper bound (red) with $N = 20$ samples of $X$, we used the same method that we used to determine the upper bound for $\cvar_{\alpha}(X)$ in Figure~\ref{fig:ub_cvar}.  We formally state our capacity to do this in Corollary~\ref{corr:ub_evar}.  Notice that the true $\var_{\alpha}(X) \leq \cvar_{\alpha}(X) \leq \evar_{\alpha}(X)$ as also shown in Figure~\ref{fig:risk-measures}.  As before, the true $\evar_{\alpha}(X)$ is shown in black.}
    \label{fig:ub_evar}
\end{figure}

To use Theorem~\ref{thm:ub_g_entropic} to provide an upper bound on the Entropic-Value-at-Risk for any risk level $\alpha \in (0,1]$, we will follow a similar procedure as we followed for the Conditional-Value-at-Risk.  First, Ahmadi-Javid~\cite{ahmadi2012entropic} identifies the convex conjugate function $g^*$ and parameter $\beta$ which enables Entropic-Value-at-Risk to be cast as a $g$-entropic risk measure.
\begin{remark}
\label{rem:evar_entropic_params}
The Entropic-Value-at-Risk level $\alpha \in (0,1]$ can be recast as a $g$-entropic risk measure with convex conjugate function $g^*(x) = e^{x-1}$ and scalar parameter $\beta = - \ln(\alpha)$~\cite{ahmadi2012entropic}.
\end{remark}
\noindent Then, we note that the Entropic-Value-at-Risk for any risk-level $\alpha \in (0,1]$ satisfies Assumption~\ref{assump:g_measure_bound}.
\begin{lemma}
\label{lem:evar_satisfies_assump}
The Entropic-Value-at-Risk for any risk-level $\alpha \in (0,1]$ satisfies Assumption~\ref{assump:g_measure_bound}.
\end{lemma}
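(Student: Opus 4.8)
The plan is to mirror the proof of Lemma~\ref{lem:cvar_satisfies_assump} exactly, since the only ingredients Assumption~\ref{assump:g_measure_bound} demands are (i) that $g^*:\mathbb{R}\to\mathbb{R}$, (ii) that the scenario program~\eqref{eq:upper_bound_scenario} be solvable, and (iii) the existence of an upper bounding function $u_b$ satisfying the probabilistic equality~\eqref{eq:inf_deterministic_ub}. First I would invoke Remark~\ref{rem:evar_entropic_params} to substitute $g^*(x) = e^{x-1}$ and $\beta = -\ln(\alpha)$ into the loss~\eqref{eq:inf_deterministic_loss}, yielding
\begin{equation}
    L(x,\mu,t) = t\left(\mu + e^{\frac{x}{t} - \mu + \beta - 1}\right) = t\left(\mu + \frac{1}{\alpha}e^{\frac{x}{t} - \mu - 1}\right).
\end{equation}
Requirement (i) is immediate because $x \mapsto e^{x-1}$ is finite and real-valued on all of $\mathbb{R}$, and requirement (ii) follows verbatim from the CVaR argument: for any fixed $\mu$ and $t>0$ the sampled values $\{L(x_k,\mu,t)\}_{k=1}^N$ are finite scalars, so~\eqref{eq:upper_bound_scenario} is a linear program minimizing a scalar subject to finitely many lower bounds and therefore always admits a solution $\zeta^*_N(\mu,t)$.

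The key step is constructing $u_b$, where I would exploit monotonicity. For every fixed $\mu\in\mathbb{R}$ and $t>0$, the map $x \mapsto L(x,\mu,t)$ is nondecreasing, since $x \mapsto x/t$ is increasing for $t>0$ and $z\mapsto e^{z}$ is increasing. Because $X$ has upper bound $\ell$ with $\prob_{\pi}[x~|~x\leq \ell]=1$, every sample $x$ in the support satisfies $L(x,\mu,t)\leq L(\ell,\mu,t)$ almost surely. I would therefore define
\begin{equation}
    u_b(\mu,t) = L(\ell,\mu,t) = t\left(\mu + \frac{1}{\alpha}e^{\frac{\ell}{t} - \mu - 1}\right),
\end{equation}
which by the preceding monotonicity argument satisfies $\prob_{\pi}[x~|~L(x,\mu,t)\leq u_b(\mu,t)] = 1$, \textit{i.e.} the probabilistic equality~\eqref{eq:inf_deterministic_ub}. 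This is exactly the same device as~\eqref{eq:cvar_ub_function}, with the piecewise-linear $\max$ term replaced by its exponential counterpart.

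Since all three requirements hold, EVaR at any level $\alpha\in(0,1]$ satisfies Assumption~\ref{assump:g_measure_bound}, completing the proof. I do not anticipate a serious obstacle: the argument is structurally identical to the CVaR case, and the only place where care is needed is verifying that $L$ inherits monotonicity in $x$ from the exponential $g^*$, which is immediate once one fixes $t>0$. The sole conceptual difference from CVaR is that the saturation at $\ell$ arises from the exponential's monotone growth rather than from the $\max\{\cdot,0\}$ operation, but the resulting bound $u_b(\mu,t)=L(\ell,\mu,t)$ plays an identical role.
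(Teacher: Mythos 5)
Your proposal is correct and follows essentially the same route as the paper: substitute $g^*(x)=e^{x-1}$ and $\beta=-\ln(\alpha)$ into the loss $L$, then take $u_b(\mu,t)=L(\ell,\mu,t)$ as the upper bounding function, exactly as in~\eqref{eq:evar_ub_function}. Your added monotonicity argument (that $x\mapsto L(x,\mu,t)$ is nondecreasing for $t>0$, so the bound saturates at $\ell$) makes explicit a step the paper leaves implicit, but the construction is identical.
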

\begin{proof}
This proof follows the proof of Lemma~\ref{lem:cvar_satisfies_assump} insofar as it suffices to identify an upper bounding function $u_b: D \subset \mathbb{R}^2 \to \mathbb{R}$ satisfying the probabilistic inequality in Assumption~\ref{assump:g_measure_bound} where $g^*(x) = e^{x-1}$ and $\beta = -\ln(\alpha)$.  The following function $u_b$ suffices:
\begin{equation}
    \label{eq:evar_ub_function}
    L(x,\mu,t) = t\left(\mu + e^{\frac{x}{t} - \mu - \ln(\alpha) - 1} \right), \quad u_b(\mu,t) = L(\ell,\mu,t).
\end{equation}
Here, $\ell$ is the assumed upper bound on samples $x$ of the random variable $X$ with distribution $\pi$, \textit{i.e.} $\prob_{\pi}[x~|~x \leq \ell] = 1$:
\end{proof}

\noindent Since $\evar_{\alpha}~\forall~\alpha \in(0,1]$ satisfies Assumption~\ref{assump:g_measure_bound}, we can use $u_b$~\eqref{eq:cvar_ub_function} to provide high-confidence estimates on $\evar_{\alpha}(X)~\forall~\alpha \in (0,1]$.
\begin{corollary}
\label{corr:ub_evar}
Let $X$ be a scalar random variable with samples $x$, distribution $\pi$, and upper bound $\ell \in \mathbb{R}$ such that $\prob_{\pi}[x~|~x \leq \ell] = 1$.  Let $\alpha \in (0,1],~\epsilon \in [0,1]$, and $L,u_b$ be as defined in~\eqref{eq:evar_ub_function} with respect to this upper bound $\ell$ and constant $\alpha$.  Furthermore, let $\zeta^*_N(\mu,t)$ be the solution to~\eqref{eq:upper_bound_scenario} for a set of $N$-samples $\{y_k = L(x_k,\mu,t)\}_{k=1}^N$ of the random variable $Y = L(X,\mu,t)$.  Then,
\begin{equation}
    \prob^N_{\pi}\left[\evar_{\alpha}(X) \leq \inf_{\mu \in \mathbb{R},~t>0}~\zeta^*_N(\mu,t)(1-\epsilon) + u_b(\mu,t)\epsilon \right] \geq 1-(1-\epsilon)^N.
\end{equation}
\end{corollary}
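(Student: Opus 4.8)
The plan is to recognize that this corollary is the EVaR-specific instantiation of the general bounding result in Theorem~\ref{thm:ub_g_entropic}, mirroring exactly how Corollary~\ref{corr:ub_cvar} specializes that theorem to CVaR. Since Theorem~\ref{thm:ub_g_entropic} already furnishes the high-confidence bound $\prob^N_{\pi}[\entrisk_{g,\beta}(X) \leq \inf_{t>0,\mu}\zeta^*_N(\mu,t)(1-\epsilon) + u_b(\mu,t)\epsilon] \geq 1-(1-\epsilon)^N$ for \emph{any} $g$-entropic risk measure satisfying Assumption~\ref{assump:g_measure_bound}, the entire task reduces to confirming that $\evar_{\alpha}$ is such a measure and matching the generic $g^*$, $\beta$, $L$, and $u_b$ to their EVaR-specific forms.

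First I would invoke Remark~\ref{rem:evar_entropic_params}, which identifies $\evar_{\alpha}$ as a $g$-entropic risk measure with convex conjugate $g^*(x) = e^{x-1}$ and divergence level $\beta = -\ln(\alpha)$. Substituting these into the generic loss of equation~\eqref{eq:inf_deterministic_loss} produces precisely the EVaR loss $L(x,\mu,t) = t(\mu + e^{x/t - \mu - \ln(\alpha) - 1})$ recorded in equation~\eqref{eq:evar_ub_function}, and the corollary's hypotheses already fix $\zeta^*_N(\mu,t)$ as the solution to~\eqref{eq:upper_bound_scenario} on $N$ samples of $Y = L(X,\mu,t)$. This alignment of notation is purely mechanical but is what lets the general theorem be applied verbatim.

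Next I would appeal to Lemma~\ref{lem:evar_satisfies_assump}, which establishes that $\evar_{\alpha}$ satisfies Assumption~\ref{assump:g_measure_bound} for every $\alpha \in (0,1]$. The crux of that verification --- and the only place any genuine work occurs --- is checking the probabilistic equality~\eqref{eq:inf_deterministic_ub}, namely that $u_b(\mu,t) = L(\ell,\mu,t)$ really upper-bounds $L(x,\mu,t)$ with probability one. This follows because, with $t > 0$ fixed, $L(\cdot,\mu,t)$ is monotonically increasing in its first argument (the exponential is increasing and is scaled by the positive $t$), so the almost-sure bound $x \leq \ell$ transfers directly to $L(x,\mu,t) \leq L(\ell,\mu,t)$.

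With the assumption discharged, the conclusion is immediate: Theorem~\ref{thm:ub_g_entropic} applies with the EVaR choices of $g^*$ and $\beta$, yielding the stated bound $\prob^N_{\pi}[\evar_{\alpha}(X) \leq \inf_{\mu,t}\zeta^*_N(\mu,t)(1-\epsilon) + u_b(\mu,t)\epsilon] \geq 1-(1-\epsilon)^N$. I do not anticipate a real obstacle here, since the proof is a direct application; whatever subtlety exists --- the monotonicity argument ensuring $u_b$ dominates $L$ almost surely --- has already been absorbed into Lemma~\ref{lem:evar_satisfies_assump}, so the corollary inherits the confidence level $1-(1-\epsilon)^N$ with no further effort.
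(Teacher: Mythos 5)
Your proposal is correct and follows exactly the paper's own route: the paper proves this corollary as a direct application of Theorem~\ref{thm:ub_g_entropic} via Lemma~\ref{lem:evar_satisfies_assump}. Your added remark that the monotonicity of $L(\cdot,\mu,t)$ in its first argument is what makes $u_b(\mu,t)=L(\ell,\mu,t)$ a valid almost-sure upper bound is a useful elaboration of a step the paper leaves implicit inside the lemma, but it does not change the argument.
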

\begin{proof}
This is an application of Theorem~\ref{thm:ub_g_entropic} via Lemma~\ref{lem:evar_satisfies_assump}.
\end{proof}

\section{Risk-Aware Verification}
\label{sec:verification}
An important application of the sample-based bounds derived in the prior section arises in safety-critical system verification where system evolution is partially stochastic due to unmodeled dynamics, noise, \textit{etc}.  In this section, we will detail how we can reformulate safety-critical system verification as a risk measure determination problem.  This reformulation lets us use our prior results to easily bound system performance in a cooperative multi-agent system setting as exemplified in Section~\ref{sec:verification_examples}.

\subsection{Notation and Problem Setting}
For our (nonlinear) system under study, $\mathcal{X}$ is the state space, $\mathcal{U}$ is the input space, and $\Theta$ is a known space of parameters $\theta$ influencing the system's controller $U$.  Furthermore, we assume the system is subject to stochastic noise $\xi$ with an unknown distribution $\pi_{\xi}(x,u,t)$ over $\mathbb{R}^n$.
\begin{align}
    \dot x & = f(x,u) + \xi, & & x \in \mathcal{X} \subset \mathbb{R}^n,~u \in \mathcal{U} \subset \mathbb{R}^m,~\label{eq:evolution}\\
    u & = U(x,\theta), & & \theta \in \Theta \subset \mathbb{R}^p, \\
    \xi & \sim \pi_{\xi}(x,u,t), & & \int_{\mathcal{X}}\pi_{\xi}(x,u,t,s)~ds = 1~\forall~x,u,t.
\end{align}
We denote $x^\theta_t$ as our closed-loop system solution at time $t$ - note that the parameter $\theta$ does not change over a trajectory - and $x^{\theta}$ will correspond to our closed-loop state signal:
\begin{equation}
    \label{eq:CL_sys}
    \dot x^\theta_t = f\left(x^\theta_t, U\left(x^\theta_t, \theta\right)\right) + \xi, \quad x^\theta \in \signalspace,~\signalspace = \{s:\mathbb{R}_{\geq 0} \to \mathbb{R}^n\}
\end{equation}

Verification work typically assumes the existence of a robustness metric $\rho$ - a function that maps state trajectory signals to the real line, with positive evaluations of the metric indicating system objective satisfaction~\cite{donze2010robust}.
\begin{definition}
\label{def:robustness}
A \textit{robustness metric} $\rho$ is a function $\rho: \signalspace \to [-a,b],~a,b \in \mathbb{R}_{++}$ such that $\rho(s) \geq 0$ only for those signals $s$ that exhibit desired properties.
\end{definition}
\noindent Examples of robustness metrics $\rho$ include the minimum value of a control barrier function $h$ over some pre-specified time horizon~\cite{ames2016control, xu2015robustness}, or the robustness metrics of Signal Temporal Logic~\cite{donze2010robust, raman2014model}.  As the existence and construction of these functions has been well-studied, we will simply assume their existence for the time being.

For risk-aware verification, the uncertainty arises through the uncertainty $\xi$ entering the system dynamics in~\eqref{eq:evolution}.  As a result, the robustness of a closed loop trajectory $\rho(x^\theta)$ is a scalar random variable $R(x_0,\theta)$ with some distribution $\pi_R(x_0,\theta)$ that depends on the initial condition $x_0$ and parameter $\theta$ for our closed-loop system trajectory $x^\theta$.  By further uniformly sampling initial conditions and parameters from their respective spaces \textit{i.e.} sampling $(x_0,\theta)$ from $\uniform[\mathcal{X}_0 \times \Theta]$, we generate the scalar randomized robustness variable $R$ with distribution $\pi_R$ that was the subject of study in~\cite{akella2022scenario}.
\begin{definition}
\label{def:randomized_robustness} The \textit{holistic system robustness},
$R$, is a scalar random variable with distribution $\pi_R$ and samples $r$ denoting the closed-loop robustness $\rho(x^\theta)$ of trajectories $x^\theta$ whose initial condition and parameter $(x_0,\theta)$ were sampled uniformly from $\mathcal{X}_0 \times \Theta$.
\end{definition}
\noindent Then the formal statement of the risk-aware verification problem will follow.
\begin{problem}
\label{prob:2}
For the scalar random variable $R$ with distribution $\pi_R$ as per Definition~\ref{def:randomized_robustness} devise a method to determine upper bounds $r_C^*, r_E^* \in \mathbb{R}$ with corresponding probabilities $\epsilon_C,\epsilon_E \in [0,1]$ such that for some $\alpha \in (0,1]$,
\begin{equation}
    \prob_{\pi_R}[r_C^* \geq \cvar_{\alpha}(-R)] \geq 1-\epsilon_C,~\prob_{\pi_R}[r_E^* \geq \evar_{\alpha}(-R)] \geq 1-\epsilon_E.
\end{equation}
Here, $\cvar_{\alpha}(-R),~\evar_{\alpha}(-R)$ are the coherent risk measures defined in Definitions~\ref{def:cvar} and~\ref{def:evar} respectively.
\end{problem}

\subsection{Upper Bounds for Risk-Aware Verification}
First, we note that samples $r$ of the holistic system robustness $R$ are defined as the robustness of a randomly sampled system trajectory, \textit{i.e.} $r = \rho(x^\theta)$ where $(x_0,\theta)$ were sampled from the uniform distribution over $\mathcal{X}_0 \times \Theta$ (Definition~\ref{def:random_rob_parameterized}).  Hence, $R$ has a probabilistic lower bound, $-a \in \mathbb{R}$, by definition of the robustness metric $\rho$ in Definition~\ref{def:robustness}.  Therefore, we can directly apply Corollaries~\ref{corr:ub_cvar} and~\ref{corr:ub_evar} to solve Problem~\ref{prob:2} as $\prob_{\pi_R}[r~|~-r \leq a] = 1$.  To formally state our results in this vein, we will redefine the loss function $L$ and upper bounding function $u_b$ for each risk measure in this specific application.  Here, $a$ is the upper bound for $-R$, \textit{i.e.} $\prob_{\pi_R}[r~|~-r \leq a] = 1$:
\begin{align}
    \label{eq:risk_verify_cvar_functions}
    \tag{CVaR}
    \hspace{-0.4 in} L(x,\mu,t) & = t\left( \mu + \frac{1}{\alpha}\max\left\{\frac{x}{t} - \mu, 0\right\}\right),  & & \hspace{-0.5 in}u_b(\mu,t) = L(a,\mu,t), \\
    \label{eq:risk_verify_evar_functions}
    \tag{EVaR}
    \hspace{-0.4 in} L(x,\mu,t) & = t\left(\mu + e^{\frac{x}{t} - \mu - \ln(\alpha) - 1} \right), & & \hspace{-0.5 in} u_b(\mu,t) = L(a,\mu,t).
\end{align}
Then our corollaries will follow.  First, we have that we can upper bound $\cvar_{\alpha}(-R)~\forall~\alpha \in (0,1]$.
\begin{corollary}
\label{corr:ub_cvar_verification}
Let $R$ be a scalar random variable as per Definition~\ref{def:randomized_robustness}, let $\alpha \in (0,1]$, let $L,u_b$ be as defined in equation~\eqref{eq:risk_verify_cvar_functions} with respect to this $\alpha$, and let $\zeta^*_N(\mu,t)$ be the solution to~\eqref{eq:upper_bound_scenario} for a set of $N$-samples $\{y_k = L(-r_k,\mu,t)\}_{k=1}^N$ of the random variable $Y=L(-R,\mu,t)$. Then, $\forall~\epsilon \in [0,1]$,
\begin{equation}
    \prob^N_{\pi_R}\left[r^*_C \triangleq \inf_{\mu \in \mathbb{R},~t>0}~\zeta^*_N(\mu,t)(1-\epsilon) + u_b(\mu,t)\epsilon \geq \cvar_{\alpha}(-R)\right] \geq 1-(1-\epsilon)^N.
\end{equation}
\end{corollary}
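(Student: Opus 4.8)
The plan is to recognize this statement as a direct instantiation of Corollary~\ref{corr:ub_cvar} with the generic random variable $X$ replaced by $-R$. Corollary~\ref{corr:ub_cvar} already furnishes a high-confidence upper bound on $\cvar_{\alpha}(X)$ for \emph{any} scalar random variable admitting a deterministic upper bound, so the entire argument reduces to (i) checking that $-R$ satisfies the boundedness hypothesis of that corollary and (ii) confirming that the pair $(L,u_b)$ in~\eqref{eq:risk_verify_cvar_functions} is exactly the pair in~\eqref{eq:cvar_ub_function} under the substitution $\ell = a$. No new probabilistic machinery is needed.

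First I would establish the required boundedness. By Definition~\ref{def:robustness} the robustness metric $\rho$ maps into the interval $[-a,b]$ with $a,b \in \mathbb{R}_{++}$, so every sample $r = \rho(x^\theta)$ of the holistic robustness $R$ (Definition~\ref{def:randomized_robustness}) satisfies $r \geq -a$, whence $-r \leq a$ almost surely. Thus $-R$ is a scalar random variable whose distribution is induced by $\pi_R$ and which admits the deterministic upper bound $\ell = a$, i.e. $\prob_{\pi_R}[r~|~-r \leq a] = 1$. This is precisely the standing boundedness requirement that Corollary~\ref{corr:ub_cvar} places on its random variable, and it is the observation already flagged in the paragraph preceding the statement.

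Second I would match the loss function and its upper bound and then invoke the corollary. Substituting $\ell = a$ into~\eqref{eq:cvar_ub_function} recovers exactly $L(x,\mu,t) = t(\mu + \tfrac{1}{\alpha}\max\{x/t - \mu,0\})$ and $u_b(\mu,t) = L(a,\mu,t)$, which is~\eqref{eq:risk_verify_cvar_functions}. Since the samples are generated as $y_k = L(-r_k,\mu,t)$ from $Y = L(-R,\mu,t)$, they are precisely the samples of $L$ applied to $-R$ that the corollary consumes, and $\zeta^*_N(\mu,t)$ solves~\eqref{eq:upper_bound_scenario} over this set. With all hypotheses matched, Corollary~\ref{corr:ub_cvar} applied to $X = -R$ gives $\prob^N_{\pi_R}[\cvar_{\alpha}(-R) \leq \inf_{\mu,t}\zeta^*_N(\mu,t)(1-\epsilon) + u_b(\mu,t)\epsilon] \geq 1-(1-\epsilon)^N$, which is the claim once the infimum is relabeled $r^*_C$. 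The only point I would be careful about — the closest thing to an obstacle — is that the supporting chain ultimately rests on Theorem~\ref{thm:bound_expectation}, whose step requires $u_b \geq \zeta^*_N$ together with a probability-one upper bound; I would simply note that $\ell = a$ is a valid (if loose) upper bound for $-R$ rather than its essential supremum, so the resulting bound is conservative but still valid. Everything else is a mechanical substitution into an already-proven result.
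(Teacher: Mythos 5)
Your proposal is correct and follows exactly the paper's own argument: the paper likewise proves this corollary by applying Corollary~\ref{corr:ub_cvar} to $X=-R$, noting that $\prob_{\pi_R}[r~|~-r\leq a]=1$ follows from Definitions~\ref{def:robustness} and~\ref{def:randomized_robustness}, and identifying~\eqref{eq:risk_verify_cvar_functions} with~\eqref{eq:cvar_ub_function} under $\ell = a$. No substantive difference.
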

\begin{proof}
This is an application of Corollary~\ref{corr:ub_cvar} with the specific loss function $L$ and upper bounding function $u_b$ provided in equation~\eqref{eq:risk_verify_cvar_functions}.  The candidate upper bound for the random variable $-R$ is $a \in \mathbb{R}$ and stems via Definitions~\ref{def:robustness} and~\ref{def:randomized_robustness} insofar as samples $-r$ of $-R$ are the outputs of the robustness of a randomly sampled system trajectory, \textit{i.e.} $-r = -\rho(x^\theta)$ where $(x_0,\theta)$ was sampled uniformly from $\mathcal{X}_0 \times \Theta$.  Hence, $\prob_{\pi_R}[r~|~-r \leq a] = 1$.
\end{proof}

\noindent Then, we have that we can upper bound $\evar_{\alpha}(-R)~\forall~\alpha \in (0,1]$.
\begin{corollary}
\label{corr:ub_evar_verification}
Let $R$ be a scalar random variable as per Definition~\ref{def:randomized_robustness}, let $\alpha \in (0,1]$, let $L,u_b$ be as defined in equation~\eqref{eq:risk_verify_evar_functions} with respect to this $\alpha$, and let $\zeta^*_N(\mu,t)$ be the solution to~\eqref{eq:upper_bound_scenario} for a set of $N$-samples $\{y_k = L(-r_k,\mu,t)\}_{k=1}^N$ of the random variable $Y=L(-R,\mu,t)$. Then, $\forall~\epsilon \in [0,1]$,
\begin{equation}
    \prob^N_{\pi_R}\left[r^*_E \triangleq \inf_{\mu \in \mathbb{R},~t>0}~\zeta^*_N(\mu,t)(1-\epsilon) + u_b(\mu,t)\epsilon \geq \evar_{\alpha}(-R)\right] \geq 1-(1-\epsilon)^N.
\end{equation}
\end{corollary}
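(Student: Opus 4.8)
The plan is to recognize that this statement is the exact analogue of Corollary~\ref{corr:ub_cvar_verification} with the Entropic-Value-at-Risk playing the role formerly occupied by the Conditional-Value-at-Risk, and to prove it in precisely the same manner: as a direct specialization of the general bound in Corollary~\ref{corr:ub_evar} to the random variable $-R$. Since Corollary~\ref{corr:ub_evar} already delivers a high-confidence upper bound on $\evar_{\alpha}$ of \emph{any} scalar random variable possessing a known upper bound, the entire argument reduces to (i) checking that $-R$ admits such an upper bound and (ii) confirming that the loss and upper-bounding functions in~\eqref{eq:risk_verify_evar_functions} are exactly the $\evar$-specialized functions validated earlier.

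First I would establish the boundedness hypothesis. By Definition~\ref{def:robustness}, the robustness metric satisfies $\rho:\signalspace \to [-a,b]$ with $a,b \in \mathbb{R}_{++}$, so every sample $r=\rho(x^\theta)$ of the holistic robustness $R$ (Definition~\ref{def:randomized_robustness}) obeys $r \geq -a$, equivalently $-r \leq a$. Hence $-R$ is a scalar random variable satisfying $\prob_{\pi_R}[r \mid -r \leq a]=1$, which is precisely the upper-bound requirement that Corollary~\ref{corr:ub_evar} places on its random variable with $\ell = a$.

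Second, I would observe that the functions $L$ and $u_b$ appearing in~\eqref{eq:risk_verify_evar_functions} coincide with the $\evar$-specialized functions of~\eqref{eq:evar_ub_function}, namely the ones arising from $g^*(x)=e^{x-1}$ and $\beta=-\ln(\alpha)$ as identified in Remark~\ref{rem:evar_entropic_params}, now instantiated with the upper bound $\ell=a$. Lemma~\ref{lem:evar_satisfies_assump} has already verified that $\evar_{\alpha}$ with these functions satisfies Assumption~\ref{assump:g_measure_bound} for every $\alpha \in (0,1]$. Consequently, applying Corollary~\ref{corr:ub_evar} verbatim to $-R$ with samples $\{-r_k\}_{k=1}^N$ yields the displayed inequality, with $r^*_E$ defined by the infimum over $\mu \in \mathbb{R},~t>0$ of $\zeta^*_N(\mu,t)(1-\epsilon)+u_b(\mu,t)\epsilon$.

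I do not anticipate a genuine obstacle here, as the corollary is essentially a mechanical transcription of the CVaR verification result. The only point warranting care is bookkeeping around the sign convention: one must ensure the upper bound $a$ attaches to $-R$ (the quantity whose risk we are measuring) rather than to $R$ itself, and that the samples fed into~\eqref{eq:upper_bound_scenario} are $y_k = L(-r_k,\mu,t)$, matching the negation already present in the statement.
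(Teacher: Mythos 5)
Your proposal is correct and follows exactly the paper's own route: the paper likewise proves this by invoking Corollary~\ref{corr:ub_evar} for the random variable $-R$, using the bound $-r \leq a$ inherited from Definitions~\ref{def:robustness} and~\ref{def:randomized_robustness} and the $\evar$-specialized functions $L, u_b$ of~\eqref{eq:risk_verify_evar_functions}. Your extra care about the sign convention on $-R$ matches the paper's own remark in the proof of the companion CVaR corollary.
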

\begin{proof}
This is an application of Corollary~\ref{corr:ub_evar} and follows in the footsteps of the proof for Corollary~\ref{corr:ub_cvar_verification}.
\end{proof}

\noindent Finally, the fundamental sample requirement to generate both of these bounds arises through Corollary~\ref{corr:fund_sample_requirement} and will be formally stated.

\begin{corollary}
\label{corr:min_sample_cvar_verification}
Let $R$ be a scalar random variable as per Definition~\ref{def:randomized_robustness}, let $\epsilon \in (0,1)$, let $\gamma \in [0,1)$, let $\alpha \in (0,1]$, let $L,u_b$ be as defined in equation~\eqref{eq:risk_verify_cvar_functions} with respect to this $\alpha$, and let $\zeta^*_N(\mu,t)$ be the solution to~\eqref{eq:upper_bound_scenario} for a set of $N$-samples $\{y_k = L(-r_k,\mu,t)\}_{k=1}^N$ of the random variable $Y=L(-R,\mu,t)$. If $N \geq \frac{\log(1-\gamma)}{\log(1-\epsilon)}$, then
\begin{equation}
    \prob^N_{\pi_R}\left[r^*_C \triangleq \inf_{\mu \in \mathbb{R},~t>0}~\zeta^*_N(\mu,t)(1-\epsilon) + u_b(\mu,t)\epsilon \geq \cvar_{\alpha}(-R)\right] \geq \gamma.
\end{equation}
\end{corollary}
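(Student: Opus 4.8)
The plan is to recognize this corollary as a direct composition of two results already proven in the excerpt, with no genuinely new content required. Corollary~\ref{corr:ub_cvar_verification} already establishes, for the same random variable $R$, risk level $\alpha$, loss and bounding functions $L,u_b$ from~\eqref{eq:risk_verify_cvar_functions}, and scenario solution $\zeta^*_N(\mu,t)$, the high-confidence statement that $r^*_C \geq \cvar_{\alpha}(-R)$ holds with probability at least $1-(1-\epsilon)^N$ under $\prob^N_{\pi_R}$. The present corollary differs only in that it replaces this $N$-dependent confidence $1-(1-\epsilon)^N$ by the fixed target $\gamma$ under the stated sample condition $N \geq \frac{\log(1-\gamma)}{\log(1-\epsilon)}$. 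So the entire task reduces to verifying that this sample lower bound forces $1-(1-\epsilon)^N \geq \gamma$, exactly the elementary implication already isolated in the proof of Corollary~\ref{corr:fund_sample_requirement}.

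Concretely, first I would invoke Corollary~\ref{corr:ub_cvar_verification} verbatim to obtain
\begin{equation}
    \prob^N_{\pi_R}\left[r^*_C \geq \cvar_{\alpha}(-R)\right] \geq 1-(1-\epsilon)^N .
\end{equation}
Second, I would translate the hypothesis on $N$ into a bound on the right-hand side. Since $\epsilon \in (0,1)$ gives $\log(1-\epsilon) < 0$, multiplying the inequality $N \geq \frac{\log(1-\gamma)}{\log(1-\epsilon)}$ through by $\log(1-\epsilon)$ reverses its direction, yielding $N\log(1-\epsilon) \leq \log(1-\gamma)$, i.e. $(1-\epsilon)^N \leq 1-\gamma$ after exponentiating with the monotone increasing exponential. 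Rearranging gives $1-(1-\epsilon)^N \geq \gamma$, which is precisely the chain already carried out in Corollary~\ref{corr:fund_sample_requirement}.

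Finally, combining the two displays by transitivity produces
\begin{equation}
    \prob^N_{\pi_R}\left[r^*_C \geq \cvar_{\alpha}(-R)\right] \geq 1-(1-\epsilon)^N \geq \gamma,
\end{equation}
which is the claim. There is no substantive obstacle here: the only point requiring any care is the direction of the inequality when multiplying by the negative quantity $\log(1-\epsilon)$, together with confirming that the constraint $\gamma \in [0,1)$ keeps $\log(1-\gamma)$ well-defined and $\epsilon \in (0,1)$ keeps the denominator strictly negative so the sample bound is meaningful. I expect the proof to be two or three lines, stated simply as a consequence of Corollary~\ref{corr:ub_cvar_verification} and the sample-requirement arithmetic of Corollary~\ref{corr:fund_sample_requirement}.
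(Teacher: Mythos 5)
Your proposal is correct and matches the paper's approach: the paper's proof is the one-line remark that this is an application of Corollary~\ref{corr:fund_sample_requirement}, which is exactly the composition you spell out (the $1-(1-\epsilon)^N$ confidence from the verification-specific CVaR bound plus the arithmetic $N \geq \frac{\log(1-\gamma)}{\log(1-\epsilon)} \implies 1-(1-\epsilon)^N \geq \gamma$). Your version is simply more explicit about the sign of $\log(1-\epsilon)$ when rearranging, which is a fair point of care but not a difference in method.
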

\begin{proof}
This is an application of Corollary~\ref{corr:fund_sample_requirement}.
\end{proof}
\begin{corollary}
\label{corr:min_sample_evar_verification}
Let $R$ be a scalar random variable as per Definition~\ref{def:randomized_robustness}, let $\epsilon \in (0,1)$, let $\gamma \in [0,1)$, let $\alpha \in (0,1]$, let $L,u_b$ be as defined in equation~\eqref{eq:risk_verify_evar_functions} with respect to this $\alpha$, and let $\zeta^*_N(\mu,t)$ be the solution to~\eqref{eq:upper_bound_scenario} for a set of $N$-samples $\{y_k = L(-r_k,\mu,t)\}_{k=1}^N$ of the random variable $Y=L(-R,\mu,t)$. If $N \geq \frac{\log(1-\gamma)}{\log(1-\epsilon)}$, then
\begin{equation}
    \prob^N_{\pi_R}\left[r^*_E \triangleq \inf_{\mu \in \mathbb{R},~t>0}~\zeta^*_N(\mu,t)(1-\epsilon) + u_b(\mu,t)\epsilon \geq \evar_{\alpha}(-R)\right] \geq \gamma.
\end{equation}
\end{corollary}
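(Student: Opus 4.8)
The plan is to obtain this statement as an immediate specialization of the high-confidence bound already established for the Entropic-Value-at-Risk in Corollary~\ref{corr:ub_evar_verification}, combined with the sample-complexity implication recorded in Corollary~\ref{corr:fund_sample_requirement}. In other words, once the confidence level $1-(1-\epsilon)^N$ is pushed above the target $\gamma$ by taking $N$ large enough, nothing further is required; the structure is identical to its Conditional-Value-at-Risk counterpart, Corollary~\ref{corr:min_sample_cvar_verification}.

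First I would invoke Corollary~\ref{corr:ub_evar_verification} verbatim for the given $\alpha \in (0,1]$, the loss and upper-bounding functions $L,u_b$ from equation~\eqref{eq:risk_verify_evar_functions}, and the scenario solution $\zeta^*_N(\mu,t)$. With $r^*_E$ defined exactly as the infimum appearing in the statement, this yields for every $\epsilon \in [0,1]$ that
\begin{equation}
    \prob^N_{\pi_R}\left[r^*_E \geq \evar_{\alpha}(-R)\right] \geq 1-(1-\epsilon)^N.
\end{equation}
Second, I would translate the hypothesis on $N$ into a lower bound on this confidence level using the argument from the proof of Corollary~\ref{corr:fund_sample_requirement}: since $\epsilon \in (0,1)$ forces $\log(1-\epsilon) < 0$, the hypothesis $N \geq \frac{\log(1-\gamma)}{\log(1-\epsilon)}$ is equivalent (after multiplying by the negative quantity $\log(1-\epsilon)$, which reverses the inequality) to $N\log(1-\epsilon) \leq \log(1-\gamma)$, hence to $(1-\epsilon)^N \leq 1-\gamma$, i.e. $1-(1-\epsilon)^N \geq \gamma$. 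Chaining the two facts then gives $\prob^N_{\pi_R}[r^*_E \geq \evar_{\alpha}(-R)] \geq 1-(1-\epsilon)^N \geq \gamma$, which is precisely the claim.

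I do not anticipate a substantive obstacle, as this is a routine corollary that merely composes two previously proved results. The only point demanding care is the sign of $\log(1-\epsilon)$ when manipulating the sample-count inequality: because this logarithm is strictly negative on $(0,1)$, the inequality flips upon division, and one must also keep to the stated ranges so that $\log(1-\gamma)$ and $\log(1-\epsilon)$ are well defined and finite. Both concerns are already dispatched by the hypotheses $\epsilon \in (0,1)$ and $\gamma \in [0,1)$, so the proof reduces to citing Corollaries~\ref{corr:ub_evar_verification} and~\ref{corr:fund_sample_requirement}.
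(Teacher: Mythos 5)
Your proposal is correct and matches the paper's own (one-line) proof, which simply cites Corollary~\ref{corr:fund_sample_requirement}; your composition of Corollary~\ref{corr:ub_evar_verification} with the observation that $N \geq \frac{\log(1-\gamma)}{\log(1-\epsilon)}$ implies $1-(1-\epsilon)^N \geq \gamma$ is exactly that argument, just spelled out. The extra care you take with the sign of $\log(1-\epsilon)$ is a correct detail the paper leaves implicit.
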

\begin{proof}
This is an application of Corollary~\ref{corr:fund_sample_requirement}.
\end{proof}

\subsection{Examples}
\label{sec:verification_examples}
To showcase the results of Corollary~\ref{corr:ub_cvar_verification} and~\ref{corr:ub_evar_verification}, we will verify a cooperative multi-agent robotic system using the fundamental sample requirements offered by Corollaries~\ref{corr:min_sample_cvar_verification} and~\ref{corr:min_sample_evar_verification}.  Specifically, we will identify lower bounds on expected worst-case system performance - \textit{i.e.} upper bounds on both $\cvar_{0.1}(-R)$ and $\evar_{0.1}(-R)$ - for a multi-agent system that is to avoid self-collisions while each agent reaches their respective goal.  As a case study, we will use the Georgia Tech Robotarium, wherein all the robots can be modeled as unicycles~\cite{robotarium}:
\begin{equation}
\label{eq:setting}
\begin{gathered}
    x = \begin{bmatrix}
    x,\\
    y, \\
    \theta
    \end{bmatrix},~\dot x = 
    \begin{bmatrix}
    v \cos(\theta), \\
    v \sin(\theta), \\
    \omega,
    \end{bmatrix},~u = [v, \omega]^T, \\
    \mathcal{X} = [-1,1] \times [-0.6,0.6] \times [0,2 \pi],~M = [I_2,~\mathbf{0}_{2x1}].
\end{gathered}
\end{equation}
As mentioned in~\cite{robotarium}, a Lyapunov-based controller drives each agent $x^i$ to their desired orientation $x^i_d \in \mathcal{X}$, and this control input is filtered in a barrier-based quadratic program to ensure that robots do not collide when multiple robots are moving simultaneously~\cite{ames2016control}.  As a result, this barrier-based filter provides a natural robustness measure as per Definition~\ref{def:robustness}.  Let the state vector for all $3$ robots $\mathbf{x}^T=[x^{1T},x^{2T},x^{3T}]$.  Then, we can generate two, key functions.  One function, $h_g$, that the system hopes to keep positive, and another function, $h_f$, that the system hopes to make positive over its trajectory.  Here, $x^i,x^i_d$ are the states and desired poses for robot $i$:
\begin{align}
    \label{eq:avoid}
    h_g(\mathbf{x}) & = \min_{i \neq j,~i,j \in [1,2,3]}~\|M(x^i - x^j)\| - 0.15. \\
    \label{eq:future}
    h_f(\mathbf{x}) & = \max_{i \in [1,2,3]}~0.1 - \|M(x^i - x^i_d)\|.
\end{align}
Intuitively, $h_g(\mathbf{x}) \geq 0$ implies that the robots are sufficiently far apart, and $h_f(\mathbf{x}) \geq 0$ means that all robots are sufficiently close to their goal.  From these two functions, we construct our robustness measure $\rho$ for a state-signal $\mathbf{x}^{\theta}$ where $\theta^T = [Mx_d^{1T},Mx_d^{2T},Mx_d^{3T}]$:
\begin{gather}
    \rho_g(\mathbf{x}^{\theta}) = \min\limits_{t \in [0,30]}~h_g\left( \mathbf{x}^{\theta}_t\right), \quad \rho_f(\mathbf{x}^{\theta}) = \max\limits_{t \in [0,30]}~h_f\left(\mathbf{x}^{\theta}_t\right), \\
    \label{eq:robustness}
    \rho(\mathbf{x}^{\theta}) = 
    \begin{cases}
    \rho_g(\mathbf{x}^{\theta}), & \mbox{if~} \rho_g(\mathbf{x}^{\theta}), \rho_f(\mathbf{x}^{\theta}) \geq 0, \\
    \max\left\{\rho_g(\mathbf{x}^{\theta}), -0.1 \right\}, & \mbox{if~} \rho_g(\mathbf{x}^{\theta}) < 0, \\
    \max\left\{\rho_f(\mathbf{x}^{\theta}), -0.1\right\}, & \mbox{else}.
    \end{cases}
\end{gather}

\begin{figure}[t]
    \centering
    \includegraphics{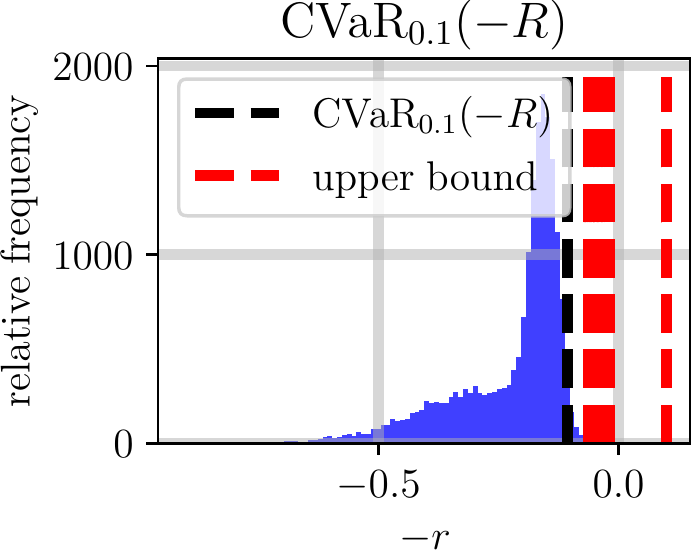}
    \includegraphics{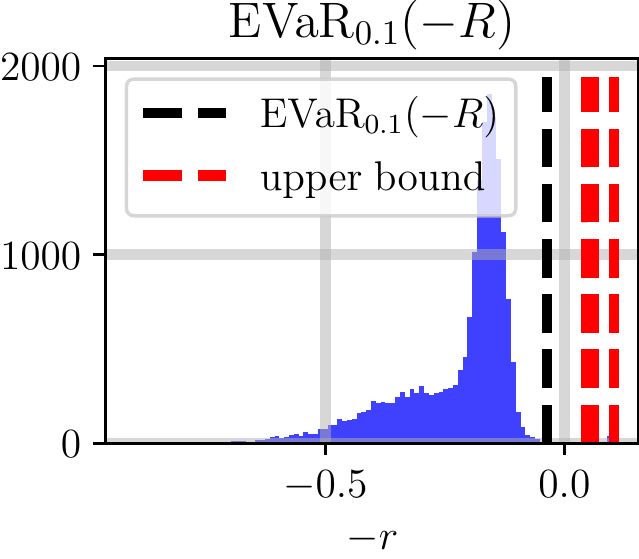}
    \caption{As an application of the results in Section~\ref{sec:concentration_inequalities}, we upper bound the risk measures of a multi-agent robotic system when its state trajectory is evaluated through a robustness metric (Definition~\ref{def:robustness}).  Shown above is this upper bounding procedure performed in Section~\ref{sec:verification_examples} for both $\cvar$ (left) and $\evar$ (right).  For each of $50$ trials, the upper bounds (red) for both risk measures are indeed greater than or equal to their ``true" counterparts (black).  These ``true" counterparts were calculated by taking $20000$ samples of the randomized system robustness $R$ (Definition~\ref{def:randomized_robustness}), and the distribution of samples is shown (blue).  The fact that the upper bounds are indeed upper bounds over all trials serves as numerical confirmation of Corollaries~\ref{corr:min_sample_cvar_verification} and~\ref{corr:min_sample_evar_verification} and their parent Corollary~\ref{corr:fund_sample_requirement}.  They also support the repeatability of our procedure in identifying upper bounds to $g$-entropic risk measures with high probability.}
    \label{fig:verification_data}
\end{figure}

If $\rho(\mathbf{x}^\theta) \geq 0$ then, all $3$ robots stayed at least $0.15$ meters from each other for $30$ seconds - as $h_g(\mathbf{x}^\theta_t) \geq 0,~\forall~t \in [0,30]$ - and reached within $0.1$ meters of their goal within $30$ seconds - as $h_f(\mathbf{x}^\theta_t) \geq 0,$ for some $t \in [0,30]$.  Furthermore, we also know this robustness measure $\rho$ outputs values that are always greater than or equal to $-0.1$.  This robustness measure $\rho$ also lets us specifically define our holistic system robustness $R$ in this case.  For this example, samples $r$ of $R$ arise when we first uniformly randomly sample initial locations and parameters $(\mathbf{x}_0,\theta)$ from their respective spaces below:
\begin{align}
    \mathcal{X}_0  = \{\mathbf{x} \in \mathcal{X}^{3}~|~h_g(\mathbf{x}) \geq 0.3 \}, \quad \Theta = \{P\mathbf{x} \in \mathcal{X}^{3}~|~h_g(\mathbf{x}) \geq 0.3 \}.
\end{align}
Then, we record the corresponding state trajectory of the multi-agent system $\mathbf{x}^{\theta}$ for at-least $30$ seconds.  The holistic system robustness sample $r = \rho(\mathbf{x}^\theta)$ then, with $\rho$ as defined in~\eqref{eq:robustness}.

\spacing
\newidea{Verifying a $3$ Robot System:} Our goal is to determine an upper bound on both the $\cvar_{\alpha}(-R)$ and $\evar_{\alpha}(-R)$ with $\alpha = 0.1$ for the three robots as they carry out their task.  Furthermore, we hope to determine this upper bound with $95\%$ confidence, \textit{i.e.} $\gamma = 0.95$, and we require that $\epsilon = 0.02$.  For this case, Corollaries~\ref{corr:min_sample_cvar_verification} and~\ref{corr:min_sample_evar_verification} require that we take at minimum $N\geq 149$ samples of our holistic system robustness $R$ to determine this upper bound.  As a result, we uniformly sampled $N = 149$ initial conditions and parameters $(\mathbf{x}_0,\theta)$ from $\mathcal{X}_0\times \Theta$, recorded the corresponding state trajectories $\mathbf{x}^{\theta}$, and recorded their robustnesses $r = \rho(\mathbf{x}^{\theta})$.  We also repeated this data collection procedure $50$ times to ensure that our results are repeatable, and to identify the true $\cvar_{\alpha}(-R)$ and $\evar_{\alpha}(-R)$ we repeated this data collection procedure once more, but took $N=20000$ samples instead.  If Corollaries~\ref{corr:min_sample_cvar_verification} and~\ref{corr:min_sample_evar_verification} are correct, then with $95\%$ confidence, we expect that our sampled upper bounds for the $50$ trials performed are greater than or equal to their true counterparts.  All this data is shown in Figure~\ref{fig:verification_data}, where as prior, the black lines indicate the true risk measure evaluation and all red lines are the upper bounds generated per trial.  As can be seen, all generated upper bounds are indeed upper bounds for their counterparts which serves as numerical confirmation of Corollaries~\ref{corr:min_sample_cvar_verification} and~\ref{corr:min_sample_evar_verification} and also of the repeatability of our procedure in identifying upper bounds to these risk measures with high-confidence.  

\newidea{Brief Remark on Relative Bound Tightness:} Figure~\ref{fig:verification_data} does prompt one question, however.  Specifically, it appears the upper bounds for $\cvar$ are tighter than those for $\evar$.  We anticipate this might arise as $\cvar$ is a less conservative risk measure than $\evar$.  As a result, for a given confidence level $\gamma$ in our upper bound, the upper bound for $\evar$ will likely be looser on account of this conservatism. 

\section{Identification of ``Good" Solutions to Optimization Problems}
\label{sec:decision_selection}
Risk-aware verification~\cite{akella2022scenario} is one example of a type of optimization problem that could potentially benefit from the sample-based probabilistic approaches developed in Section~\ref{sec:concentration_inequalities}.  Another optimization problem is risk-aware policy synthesis.  However, such an optimization problem will likely be non-convex, and therefore, will be difficult to solve exactly.  That being said, if the goal is the identification of a ``good" policy - \textit{i.e.} a policy that is better than $90\%$ of all other policies we could have generated - this problem is significantly easier.  This section details our efforts in that vein, as we develop the general, mathematical basis that will be utilized in Section~\ref{sec:policy_synthesis}.

\subsection{Notation and Setting}
To simplify the setting, we assume we have a reward function $R: \mathbb{X} \subseteq \mathbb{R}^n \to \mathbb{R}$ for some positive integer $n$ and general domain $\mathbb{X}$.  Then, the ideal optimization problem to-be-solved is
\begin{equation}
    \label{eq:general_opt_problem}
    \max_{x \in D \subseteq \mathbb{X}}~R(x).
\end{equation}
For~\eqref{eq:general_opt_problem}, the decision space $D$ and reward function $R$ satisfy the following assumption:
\begin{assumption}
\label{assump:bounded_decisions}
The decision space $D$ has bounded volume, \textit{i.e.} $\int_{D}~1~dx = M < \infty$, and the reward function $R$ attains its maximum value over $D$, \textit{i.e.} $\exists~x^* \in D$ such that $R(x^*) = R^* < \infty$ and $R(x^*) \geq R(x),~\forall~x \in D$.
\end{assumption}
\noindent This assumption is not too restrictive, as a large class of optimization problems fall under this setting, including risk-aware policy synthesis.

Additionally, we define a function $\volfrac: 2^D \to \mathbb{R}$ that identifies the volume fraction that a subspace $A$ of $D$ occupies:
\begin{equation}
    \label{eq:volume_fraction}
    \volfrac(A) = \frac{\int_A~1~dx}{\int_D~1~dx}.
\end{equation}
Finally, let $F: D \to 2^D$ be a function that identifies the space of decisions $x' \in D$ that are ``better" than the provided decision $x$, \textit{i.e.}
\begin{equation}
    \label{eq:falsifying_set}
    F(x) = \{x' \in D~|~R(x') > R(x)\}.
\end{equation}
Then this section considers the following problem.
\begin{problem}
\label{prob:decision_selection}
For any $\epsilon \in (0,1)$ devise a method to find a decision $x \in D$ such that $x$ is at least in the $100(1-\epsilon)$-th percentile of all possible decisions $x \in D$ with respect to the reward function $R$, \textit{i.e.} find a decision $x$ such that $\volfrac(F(x)) \leq \epsilon$, with $\volfrac$ defined in equation~\eqref{eq:volume_fraction} and $F$ defined in equation~\eqref{eq:falsifying_set}.
\end{problem}

\subsection{Sampling Methods for Identification of ``Good" Decisions}
First, we note that for a uniform distribution over the decision space $D$, the probability of sampling a decision $x \in A \subset D$ is equivalent to the volume fraction of $A$ with respect to $D$.
\begin{lemma}
\label{lem:prob2volume}
Let Assumption~\ref{assump:bounded_decisions} hold.  The following statement is true with $\volfrac$ as defined in~\eqref{eq:volume_fraction}:
\begin{equation}
    \prob_{\uniform[D]}[x~|~x \in A \subseteq D] = \volfrac(A).
\end{equation}
\end{lemma}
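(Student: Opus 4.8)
The plan is to prove the identity directly from the definition of the uniform distribution over $D$, leveraging Assumption~\ref{assump:bounded_decisions} to guarantee that this distribution is well-defined. The key observation is that the uniform distribution $\uniform[D]$ assigns a constant probability density to every point of $D$ and zero density outside $D$; the value of that constant is fixed by the normalization requirement that the density integrate to one over $D$.

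First I would recall that, since Assumption~\ref{assump:bounded_decisions} guarantees $\int_D 1\,dx = M < \infty$, and since $M > 0$ as $D$ is a nonempty decision space, the constant density $p(x) = 1/M$ on $D$ is a valid probability density. This is precisely where the boundedness hypothesis is needed: without $M < \infty$ the uniform distribution would not normalize and the statement would be vacuous. Then I would express the probability of sampling into $A$ as the integral of this density over $A$, namely
\begin{equation}
    \prob_{\uniform[D]}[x~|~x \in A \subseteq D] = \int_A \frac{1}{M}~dx = \frac{\int_A 1~dx}{\int_D 1~dx}.
\end{equation}
The final step is simply to identify the right-hand side with $\volfrac(A)$ by its definition in equation~\eqref{eq:volume_fraction}, which completes the proof.

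There is no substantive obstacle here: the result is essentially a restatement of the definition of a uniform measure, and the only content is the bookkeeping that the normalizing constant equals the total volume $M$ of $D$. The one point worth flagging explicitly is the role of Assumption~\ref{assump:bounded_decisions} in ensuring $0 < M < \infty$, so that $\uniform[D]$ exists as a genuine probability distribution; once that is in place the identity is immediate from the elementary relationship between a uniform density and the Lebesgue volume of the integration region.
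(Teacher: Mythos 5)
Your proof is correct and follows essentially the same route as the paper's: both compute the probability as the ratio of the integral of the (constant) uniform density over $A$ to that over $D$ and identify the result with $\volfrac(A)$. Your added remarks on why Assumption~\ref{assump:bounded_decisions} guarantees $0 < M < \infty$ and hence that $\uniform[D]$ is well-defined are a welcome, if minor, elaboration on the paper's one-line argument.
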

\begin{proof}
As the distribution is uniform, we have the following chain of equalities:
\begin{equation}
    \prob_{\uniform[D]}[x~|~x \in A] = \frac{\int_A~1~dx}{\int_D~1~dx} = \volfrac(A).
\end{equation}
\end{proof}

Second, we note that~\eqref{eq:general_opt_problem} can be recast as an uncertain program similar to~\eqref{eq:uncertain_program}.  Specifically, let $X$ be a random variable whose distribution is the uniform distribution over $D$, \textit{i.e.} $\uniform[D]$.  Then, the corresponding random reward is $Y = R(X)$.  This random reward variable $Y$ has its own samples $y$ and (unknown) distribution $\pi_Y$, letting us construct an uncertain program mirroring~\eqref{eq:general_opt_problem}:
\begin{equation}
    \label{eq:recasting_general_opt}
    \tag{UP-G}
    \begin{aligned}
        \zeta^* & = \argmin_{\zeta \in \mathbb{R}}~ & & \zeta, \\
        &~~\mathrm{subject~to}~ & &\zeta \geq y,~Y \sim \pi_Y~\mathrm{with~samples~}y.
    \end{aligned}
\end{equation}
This uncertain program~\eqref{eq:recasting_general_opt} also has an analagous scenario program:
\begin{align}
        \zeta^*_N & = \argmin_{\zeta \in \mathbb{R}}~ & & \zeta, \label{eq:scenario_general_opt}
        \tag{RP-G}\\
        &~~\mathrm{subject~to}~ & &\zeta \geq y_i,~y_i \in \{y_k = R(x_k)\}_{k=1}^N,~X\sim \uniform[D]~\mathrm{with~samples~x}.
\end{align}

This scenario program is crucial to our approach for two reasons.  First, there exists a sampled decision $x_i$ in the set of all sampled decisions $\{x_k\}_{k=1}^N$ such that the reward of this decision is equivalent to the scenario solution $\zeta^*_N$, \textit{i.e.} $R(x_i) = \zeta^*_N$.  Second, via Theorem~\ref{thm:scenario_opt} we can upper bound the probability of sampling another decision $x' \in D$ such that $R(x') > \zeta^*_N = R(x_i)$.  Via Lemma~\ref{lem:prob2volume}, such a probability corresponds to the volume fraction of those decisions $x' \in D$ that are ``better" than $x_i$, \textit{i.e.} this probability corresponds to $\volfrac(F(x_i))$.  As a result, in order to determine a decision $x \in D$ that is ``better" than a predetermined volume fraction $\epsilon \in (0,1)$ of all possible decisions $x'\in D$ with confidence $\gamma \in [0,1)$, we just need to take enough samples of $X$ and solve~\eqref{eq:scenario_general_opt}.  This idea is formalized in the following theorem:
\begin{theorem}
\label{thm:good_solutions}
Let $\epsilon \in (0,1)$, let $\gamma \in [0,1)$, let Assumption~\ref{assump:bounded_decisions} hold, let $\zeta^*_N$ be the solution to~\eqref{eq:scenario_general_opt} for an $N$-sample set $\{x_k\}_{k=1}^N$ of $X$, let $\volfrac$ be as defined in~\eqref{eq:volume_fraction}, and let $F$ be as defined in~\eqref{eq:falsifying_set}.  If $N \geq \frac{\log(1-\gamma)}{\log(1-\epsilon)}$ then with minimum probability $\gamma$, there exists at least one sampled decision $x_i \in \{x_k\}_{k=1}^N$ that is at-least in the $100(1-\epsilon)$-th percentile of all possible decisions $x \in D$, \textit{i.e.},
\begin{equation}
    \exists~x_i \in \{x_k\}_{k=1}^N \suchthat \prob^N_{\uniform[D]}[\volfrac(F(x_i)) \leq \epsilon] \geq \gamma, \mathrm{~and~} \zeta^*_N = R(x_i).
\end{equation}
\end{theorem}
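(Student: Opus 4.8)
The plan is to recognize that the scenario program~\eqref{eq:scenario_general_opt} is exactly a scalar scenario program of the form handled by Theorem~\ref{thm:scenario_opt} with decision dimension $d=1$, and then to reinterpret its violation probability as the desired volume fraction via Lemma~\ref{lem:prob2volume}. First I would observe that since~\eqref{eq:scenario_general_opt} minimizes the scalar $\zeta$ subject to the lower bounds $\zeta \geq y_k = R(x_k)$, its unique optimizer is $\zeta^*_N = \max_{k} R(x_k)$, and this maximum is attained at one of the sampled decisions, so there exists $x_i \in \{x_k\}_{k=1}^N$ with $R(x_i) = \zeta^*_N$. This yields the final clause of the claim immediately, and because $\zeta \in \mathbb{R}$ the solvability-and-uniqueness hypothesis (Assumption~\ref{assump:RPN_solvability}) required to invoke Theorem~\ref{thm:scenario_opt} holds with $d=1$.

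Next I would compute the violation probability of $\zeta^*_N$ for this program. A sampled constraint here is a draw $X \sim \uniform[D]$ producing the lower bound $\zeta \geq R(X)$; the optimizer $\zeta^*_N$ violates such a constraint exactly when $R(X) > \zeta^*_N = R(x_i)$, that is, when $X \in F(x_i)$ by the definition~\eqref{eq:falsifying_set} of $F$. Hence $V(\zeta^*_N) = \prob_{\uniform[D]}[X \in F(x_i)]$, and Lemma~\ref{lem:prob2volume} converts this probability directly into the volume fraction $\volfrac(F(x_i))$.

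Finally I would apply Theorem~\ref{thm:scenario_opt} with $d=1$: the right-hand sum collapses to the single term $(1-\epsilon)^N$, so $\prob^N_{\uniform[D]}[V(\zeta^*_N) > \epsilon] \leq (1-\epsilon)^N$, equivalently $\prob^N_{\uniform[D]}[\volfrac(F(x_i)) \leq \epsilon] \geq 1 - (1-\epsilon)^N$. The stated sample bound $N \geq \frac{\log(1-\gamma)}{\log(1-\epsilon)}$ then forces $1 - (1-\epsilon)^N \geq \gamma$ by the same elementary logarithmic manipulation used in Corollary~\ref{corr:fund_sample_requirement}, which completes the proof.

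The main obstacle is not any computation but the conceptual identification step: correctly recognizing that the scenario-optimization violation probability of the scalar reward program coincides with the volume fraction $\volfrac(F(x_i))$ of strictly-better decisions. One must be careful that the maximizing sample $x_i$ is a random function of the entire sample set, so $F(x_i)$ is itself random, and the probability $\prob^N_{\uniform[D]}$ in the conclusion is over the joint draw of all $N$ samples, exactly matching the product-measure statement of Theorem~\ref{thm:scenario_opt}. A minor point to dispatch is possible ties in reward among the samples, which are harmless here since the statement only asserts the existence of one maximizing $x_i$.
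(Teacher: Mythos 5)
Your proposal is correct and follows essentially the same route as the paper's own proof: identify the sampled decision $x_i$ with $R(x_i) = \zeta^*_N$ from the structure of the scalar linear program, equate the scenario violation probability with $\volfrac(F(x_i))$ via Lemma~\ref{lem:prob2volume}, and apply Theorem~\ref{thm:scenario_opt} with $d=1$ together with the logarithmic sample bound. Your added remarks on uniqueness, ties, and the joint product measure are careful refinements of the same argument rather than a different approach.
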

\begin{proof}
First, for any finite sample set $\{x_k\}_{k=1}^N$ there exists a decision $x_i \in \{x_k\}_{k=1}^N$ such that $\zeta^*_N = R(x_i)$.  This is due to the fact that~\eqref{eq:scenario_general_opt} is a linear program minimizing a scalar decision variable subject to a series of lower bounds that take values in $\mathbb{R}$.  As a result, $\zeta^*_N$ must equal one of its lower bounds to be a valid solution to~\eqref{eq:scenario_general_opt}, and as such, it must be equal to $R(x_i)$ for at least one $x_i \in \{x_k\}_{k=1}^N$.

Then via Theorem~\ref{thm:scenario_opt}, the following probabilistic inequality holds for our scenario solution $\zeta^*_N$, where the violation probability $V(\zeta^*_N)$ has been replaced with its appropriate definition in this context.
\begin{equation}
    \label{eq:violation_extrapolation}
    \prob^N_{\uniform[D]}[\prob_{\uniform[D]}[x~|~R(x) > \zeta^*_N] \leq \epsilon] \geq 1-(1-\epsilon)^N.
\end{equation}
By definition of $F$ in~\eqref{eq:falsifying_set},~\eqref{eq:violation_extrapolation} can be recast as follows for some $x_i \in \{x_k\}_{k=1}^N$:
\begin{equation}
    \prob^N_{\uniform[D]}[\prob_{\uniform[D]}[x~|~x \in F(x_i)] \leq \epsilon] \geq 1-(1-\epsilon)^N.
\end{equation}
Via Lemma~\ref{lem:prob2volume} we can replace the inner probability with the volume fraction of $F(x_i)$:
\begin{equation}
    \prob^N_{\uniform[D]}[\volfrac(F(x_i)) \leq \epsilon] \geq 1-(1-\epsilon)^N.
\end{equation}
Then, as $N\geq \frac{\log(1-\gamma)}{\log(1-\epsilon)}$, $\epsilon \in (0,1)$ and $\gamma \in [0,1)$, $1-(1-\epsilon)^N \geq \gamma$, and we have our result.
\end{proof}

To summarize, Theorem~\ref{thm:good_solutions} states that if we want to find a ``good" solution to~\eqref{eq:general_opt_problem} with minimum probability $\gamma$ - ``good" in the sense that it is in the $100(1-\epsilon)$-th percentile of all decisions $x \in D$ that we could take - that we are required to evaluate at minimum $N \geq \frac{\log(1-\gamma)}{\log(1-\epsilon)}$ uniformly randomly chosen decisions $x \in D$.  This is due to the fact that if we evaluate the reward $R$ for each such sampled decision in our sample set $\{x_k\}_{k=1}^N$, at least one decision $x_i \in \{x_k\}_{k=1}^N$ is guaranteed, with minimum probability $\gamma$, to produce a reward $R(x_i)$ that is in the $100(1-\epsilon)-$th percentile of all possible rewards achievable.  Hence, this decision $x_i$ that produces this reward $R(x_i)$ is also guaranteed, with minimum probability $\gamma$, to be in the $100(1-\epsilon)$-th percentile of all possible decisions $x \in D$ with respect to the reward $R$.

\subsection{Examples}
\label{sec:decision_examples}
This section, provides a few examples of applying Theorem~\ref{thm:good_solutions} to finding paths for the traveling salesman problem (TSP) that are in the $99$-th percentile of all possible paths achievable~\cite{flood1956traveling}.  Specifically, TSP references the identification of a permutation of integers $P = [i_1, i_3, i_6, \dots]$ where such a permutation minimizes the summation of a function $d: \mathbb{Z}_+ \times \mathbb{Z}_+ \to \mathbb{R}$.  That is, if we define a finite set of integers $I \subset \mathbb{Z}_+$ , we can also define a set $P \subset 2^I$ of all permutations of integers in $I$, \textit{i.e.} $P = \{p = (i_1, \dots, i_{|I|})~|~\forall~i_j \in I,~i_j \in p$ and $i_j$ appears only once in $p\}$.  Then, the canonical traveling salesman optimization problem is
\begin{equation}
    \label{eq:traveling_salesman}
    \min_{s \in P}~\sum_{k = 1}^{|s|-1} d(p_k,p_{k+1}) + d(p_{|I|},p_1).
\end{equation}
While phrased as a minimization,~\eqref{eq:traveling_salesman} can be recast as a maximization problem of the form in~\eqref{eq:general_opt_problem} - the reward function $R$ is the negation of the objective function above, and the decision space $D = P$.  Furthermore, this problem satisfies Assumption~\ref{assump:bounded_decisions}, with $\int_{P}~1~dx = |P| < \infty$, which means that we should be able to follow Theorem~\ref{thm:good_solutions} to identify a ``good" solution to this problem.

Specifically then, we will randomly generate $9$ nodes $n_i \in [-5,5]^2 \subset \mathbb{R}^2$. Our cost function $d(i,j) = \|n_i - n_j\|$.  As we have generated $9$ nodes, the cardinality of the set of all possible permutations $|P| = 9! = 362880$.  However, according to Theorem~\ref{thm:good_solutions}, if we want a permutation $p \in P$ that is in the $99$-th percentile with $95\%$ confidence, then we need to uniformly sample $N \geq \frac{\log(1-\gamma = 0.95)}{\log(1-\epsilon = 0.01)} \approx 299$ permutations $p \in P$ and evaluate their corresponding costs.  Doing this procedure once, we identified a permutation $p^* \in P$ that was in the $99.9997$-th percentile of all paths as shown on the left in Figure~\ref{fig:salesman_information}.  This one case shows the utility in our sample approach in providing ``good" solutions to difficult optimization problems insofar as we only had to evaluate $299/362880 \approx 0.0008$ of all possible paths to generate a very good one.  Indeed, the method also repeatably identifies ``good" solutions as shown on the right in Figure~\ref{fig:salesman_information}.  In this case, we required a path in the $95$-th percentile with minimum probability $1-10^{-6}$ for a case where we randomly generated $8$ nodes $n_i \in [-5,5]^2 \subset \mathbb{R}^2$.  In every case, the identified path $p^*$ was in the $95$-th percentile as evidenced by $\volfrac(F(p^*)) \leq \epsilon$ which is guaranteed via Theorem~\ref{thm:good_solutions}.

\begin{figure}[t]
    \centering
    \includegraphics{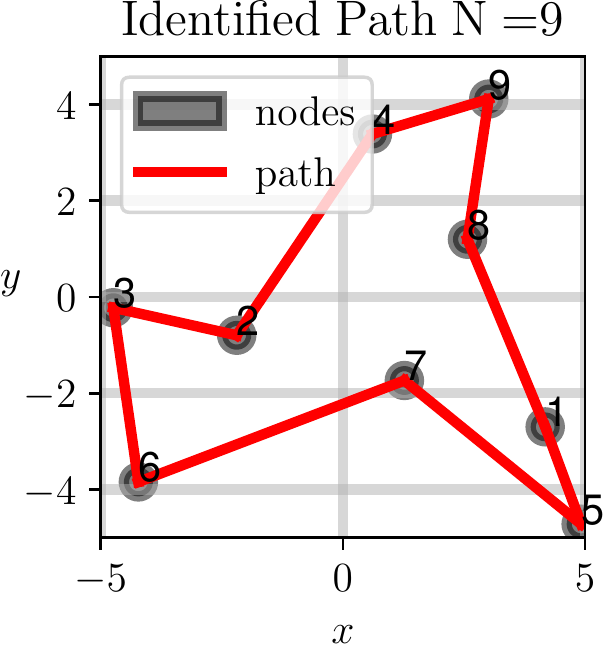}
    \hspace{0.2 in}
    \includegraphics{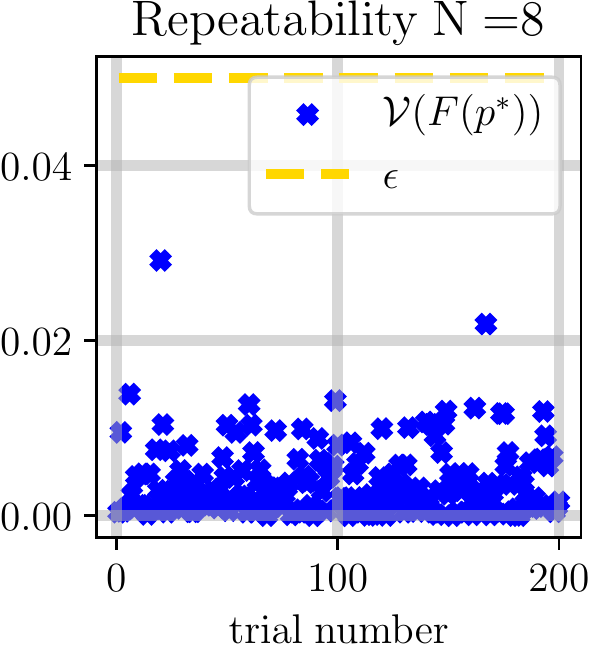}
    \caption{Risk-Aware Policy Synthesis (Section~\ref{sec:policy_synthesis}) requires a method to identify good solutions to potentially non-convex optimization problems - this is the procedure detailed in Section~\ref{sec:decision_selection}.  Shown above is an application of this procedure to identify ``good" paths for a traveling salesman problem~\cite{flood1956traveling}.  By uniformly sampling $N=299$ paths from the set of all possible paths $P$ with $|P| = 362880$, we are able to identify (left) a path that is in the $99.9997$-th percentile of all paths.  This procedure also repeatably identifies ``good" paths, \textit{a.k.a} ``good" decisions,  as shown on the figure on the right.  If we take the minimum number of samples offered by Theorem~\ref{thm:good_solutions} to identify a path that in the $95$-th percentile with minimum probability $1-10^{-6}$, we see that over $200$ trials - taking $N = 270$ samples each time - all determined paths are in the $95$-th percentile as $\volfrac(F(p^*)) \leq \epsilon = 0.05$.}
    \label{fig:salesman_information}
\end{figure}

\section{Risk-Aware Policy Synthesis}
\label{sec:policy_synthesis}
\subsection{Setting and a Clarifying Example}
Another type of optimization problem that falls under the general class of problems studied in Section~\ref{sec:decision_selection} is risk-aware policy synthesis.  While general policy synthesis, especially in the model-free RL setting, attempts to design a policy without a prior, this section designs a policy that selects actions against an existing control form.  The goal of policy synthesis then would be to identify parameters of this controller such that the resulting controller performs admirably, in a risk-aware sense, at satisfying an objective of interest.  Similar works in this vein attempt to address this problem via Bayesian parameter selection~\cite{shalloo2020automation, calandra2016bayesian, berkenkamp2021bayesian} among other techniques.

Satisfactory policy synthesis can be seen as a specific application of Sections~\ref{sec:verification} and~\ref{sec:decision_selection}, and as such, we will reference the same class of systems we considered in Section~\ref{sec:verification_examples} as described in~\eqref{eq:evolution}, with one addition.  Specifically, we still focus on a general class of systems with the state space, $\mathcal{X}$; the input space, $\mathcal{U}$; a space of problem parameters $\Theta$ that the controller $U$ must account for.  The discrepancy with~\eqref{eq:evolution} arises in the parameterization of the controller $U$ via the parameter space $P$ of controller parameters.  As before, we assume the system is subject to some stochastic noise $\xi$ with an unknown distribution $\pi_{\xi}(x,u,t)$ over $\mathbb{R}^n$.
\begin{align}
    \dot x & = f(x,u) + \xi, & & x \in \mathcal{X} \subset \mathbb{R}^n,~u \in \mathcal{U} \subset \mathbb{R}^m,~\label{eq:evolution_synthesis}\\
    u & = U(x,\theta,p), & & \theta \in \Theta \subset \mathbb{R}^l,~p \in P \subset \mathbb{R}^s,~\label{eq:controller_param}\\
    \xi & \sim \pi_{\xi}(x,u,t), & & \int_{\mathcal{X}}\pi_{\xi}(x,u,t,s)~ds = 1~\forall~x,u,t.
\end{align}
As before, we will define $x^{\theta,p}_t$ as the solution to our closed-loop system at time $t$, given the initial condition $x_0 \in \mathcal{X}_0 \subseteq \mathcal{X}$, an accounting parameter $\theta \in \Theta$, and a set of controller parameters $p \in P$.
\begin{equation}
    \label{eq:full_params_CL_sys}
    \dot x^{\theta,p}_t = f\left(x^{\theta,p}_t, U\left(x^{\theta,p}_t, \theta,p\right)\right) + \xi,~x^{\theta,p} \in \signalspace,~\signalspace = \{s:\mathbb{R}_{\geq 0} \to \mathbb{R}^n\}.
\end{equation}

The following example will be used to clarify the setting.  Consider a simple planar robot modelable via unicycle dynamics as analyzed before in Section~\ref{sec:verification_examples}.  Let's assume that its objective is to reach a goal, whose center location is parameterized by a vector $g \in [-2,2]^2 \subset \mathbb{R}^2$.  Furthermore, let's assume that several other agents are starting in random locations $o_i \in [-2,2]^2$, that are moving about randomly, and that our robot must avoid.  The full set of accounting parameters $\theta^T = [g^T, o_1^T, \dots, o^T_M]$.  Let's further assume that this robot's path-planner follows a predator-prey model, and as such is parameterized via two parameters $\alpha,\beta \in \mathbb{R}$ that correspond to its aggressiveness/hesitancy of moving towards the goal/other agents respectively.  The controller parameters $p = [\alpha,\beta]$.  As also expressed in Section~\ref{sec:verification}, we further assume the existence of a robustness metric $\rho$ as per Definition~\ref{def:robustness} that maps the agent's trajectory $x^{\theta,p}$ to a positive number whenever the robot successfully reaches its goal within a pre-specified amount of time and also avoids all other agents.

\subsection{Problem Statement}
Our solution to this problem builds off the risk-aware verification work in Section~\ref{sec:verification} and the ``good" decision selection work in Section~\ref{sec:decision_selection}.  Specifically, choosing a parameter $p \in P$ defines a specific closed-loop system - that system for which the controller $U$ is parameterized via this parameter $p$.  Then for this system, we can calculate an upper bound on the Conditional-Value-at-Risk of the system's robustness $\rho$ over the entire space of initial conditions and accounting parameters $\mathcal{X}_0 \times \Theta$ - this is Corollary~\ref{corr:ub_cvar_verification}.  We could follow a similar process for upper bounding the Entropic-Value-at-Risk as well.  However, we will focus on $\cvar$ for brevity.  This does raise the question: \textit{can we minimize this upper bound over the set of possible parameters $P$}?  This is the risk-aware policy synthesis question.  To phrase this question formally, we have to modify Definition~\ref{def:randomized_robustness} for this setting.
\begin{definition}
\label{def:random_rob_parameterized} The \textit{parameterized holistic system robustness},
$R_p$, is a scalar random variable with distribution $\pi_{R_p}$ and samples $r_p$ denoting the closed-loop robustness $\rho(x^{\theta,p})$ of trajectories $x^{\theta,p}$ whose initial condition and parameter $(x_0,\theta)$ were sampled uniformly from $\mathcal{X}_0 \times \Theta$ and whose controller $U$ is parameterized via the controller parameter $p \in P$ as per equation~\eqref{eq:controller_param}
\end{definition}

Per Corollary~\ref{corr:ub_cvar_verification}, if we fix the parameter $p$ for the controller $U$, we can generate an upper bound $r^*_{C,p}$ on $\cvar_{\alpha}(-R_p)$ for any $\alpha \in (0,1]$ with high-confidence.  As we study only $\cvar$ in this section, we will drop the subscript $C$ for $r^*_{C,p}$ to simplify notation.  Mathematically, this lets us generate a mapping $\riskmap$ from controller parameter $p$, confidence $\gamma$, and risk-level $\alpha$, to the upper bound $r^*_p$ generated with confidence $\gamma$ for $\cvar_{\alpha}(-R_p)$. Implicit in this function definition will be the number of samples $N_{\riskmap}$ we take of the initial conditions and accounting parameters $(x_0,\theta)$ from the uniform distribution over $\mathcal{X}_0 \times \Theta$:
\begin{equation}
    \label{eq:riskmap}
    \riskmap(p,\gamma,\alpha) = r^*_p, \suchthat \prob^{N_{\riskmap}}_{\pi_{R_p}}\left[r^*_p \geq \cvar_{\alpha}(-R_p)\right] \geq \gamma.
\end{equation}
Then, by Definitions~\ref{def:var} and~\ref{def:cvar}, we have the following proposition indicating that the risk map $\riskmap$ serves as a high-confidence lower bound on the expected worst-case system performance.
\begin{proposition}
\label{prop:polsynth_motivation}
Let $\riskmap$ be as defined in equation~\eqref{eq:riskmap} for some $\gamma \in [0,1)$ and $\alpha \in (0,1]$.  The following probabilistic inequality holds, with $\pi_{R_p}$ the distribution of the parameterized holistic system robustness $R_p$ defined in Definition~\ref{def:random_rob_parameterized}:
\begin{equation}
    \label{eq:polsynth_motivating_lb}
    \prob^N_{\pi_{R_p}}\left[-\riskmap(p,\gamma,\alpha) \leq \expect_{\pi_{R_p}}[r~|~r \leq \var_{1-\alpha}(R_p)] \right] \geq \gamma.
\end{equation}
\end{proposition}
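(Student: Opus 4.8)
The plan is to reduce the target inequality to the defining property of the risk map $\riskmap$ recorded in equation~\eqref{eq:riskmap}. The crux is recognizing that the conditional expectation on the right-hand side of~\eqref{eq:polsynth_motivating_lb} equals $-\cvar_{\alpha}(-R_p)$, so that the event whose probability we must bound coincides exactly with the event $\{\riskmap(p,\gamma,\alpha) \geq \cvar_{\alpha}(-R_p)\}$ that holds with probability at least $\gamma$ by construction.

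First I would establish the quantile sign-flip $\var_{\alpha}(-R_p) = -\var_{1-\alpha}(R_p)$ directly from Definition~\ref{def:var}. Writing samples of $-R_p$ as $-r$, the event $\{-r \leq \zeta\}$ is the event $\{r \geq -\zeta\}$, so the requirement $\prob_{\pi_{R_p}}[-r \leq \zeta] \geq 1-\alpha$ is equivalent to $\prob_{\pi_{R_p}}[r \leq -\zeta] \leq \alpha$. Substituting $\zeta = -\eta$ and converting the infimum over $\zeta$ into a supremum over $\eta$ identifies $\var_{\alpha}(-R_p)$ with $-\sup\{\eta \mid \prob_{\pi_{R_p}}[r \leq \eta] \leq \alpha\}$, which is precisely $-\var_{1-\alpha}(R_p)$.

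Next I would expand $\cvar_{\alpha}(-R_p) = \expect_{\pi_{R_p}}[-r \mid -r \geq \var_{\alpha}(-R_p)]$ via Definition~\ref{def:cvar} and insert the sign-flip relation. The conditioning event $\{-r \geq \var_{\alpha}(-R_p)\} = \{-r \geq -\var_{1-\alpha}(R_p)\}$ becomes $\{r \leq \var_{1-\alpha}(R_p)\}$, and pulling the minus sign out of the conditional expectation yields $\cvar_{\alpha}(-R_p) = -\expect_{\pi_{R_p}}[r \mid r \leq \var_{1-\alpha}(R_p)]$; equivalently, $\expect_{\pi_{R_p}}[r \mid r \leq \var_{1-\alpha}(R_p)] = -\cvar_{\alpha}(-R_p)$.

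Finally I would substitute this identity into~\eqref{eq:polsynth_motivating_lb}. The inequality $-\riskmap(p,\gamma,\alpha) \leq -\cvar_{\alpha}(-R_p)$ is equivalent to $\riskmap(p,\gamma,\alpha) \geq \cvar_{\alpha}(-R_p)$, and the definition of $\riskmap$ in~\eqref{eq:riskmap} (taking $N = N_{\riskmap}$) asserts exactly that this event carries probability at least $\gamma$, which closes the argument. The main obstacle is the sign-flip step: the clean identity $\var_{\alpha}(-R_p) = -\var_{1-\alpha}(R_p)$ holds without qualification only when $R_p$ has no atom at its $\alpha$-quantile, so that the strict and non-strict tail events agree up to a null set; for general distributions one must track the infimum conventions in Definition~\ref{def:var} more carefully, but since $R_p$ is the robustness of trajectories driven by continuous stochastic dynamics this continuity is benign in the present setting.
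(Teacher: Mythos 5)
Your proof is correct and follows essentially the same route as the paper: express $\cvar_{\alpha}(-R_p)$ as a conditional expectation, flip signs to obtain $-\cvar_{\alpha}(-R_p) = \expect_{\pi_{R_p}}[r \mid r \leq \var_{1-\alpha}(R_p)]$, and invoke the defining probabilistic guarantee of $\riskmap$ in~\eqref{eq:riskmap}. Your explicit derivation of the quantile sign-flip and the caveat about atoms at the $\alpha$-quantile is actually more careful than the paper's one-line appeal to ``a symmetry argument,'' but the substance is identical.
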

\begin{proof}
By definition of $\cvar_{\alpha}$ for any risk-level $\alpha \in (0,1]$ in Definition~\ref{def:cvar}, we have the following equivalency:
\begin{equation}
    \cvar_{\alpha}(-R_p) = \expect_{\pi_{R_p}}[r_p~|~-r_p \geq \var_{\alpha}(-R_p)].
\end{equation}
Then by definition of $\var$ and a symmetry argument when flipping the random variable of interest, we have the following equality:
\begin{equation}
    -\cvar_{\alpha}(-R_p) = \expect_{\pi_{r_p}}[r_p~|~r_p \leq \var_{1-\alpha}(R_p)].
\end{equation}
Then the desired inequality stems from the definition of the risk map $\riskmap$ in equation~\eqref{eq:riskmap}.
\end{proof}

As a result, the goal of policy synthesis should be to minimize $\riskmap(p,\gamma,\alpha)$.  As, for some fixed $\gamma,\alpha$, minimization of $\riskmap(p,\gamma,\alpha)$ over $P$ corresponds to the identification of a set of controller parameters $p \in P$ such that in the worst $100\alpha\%$ of cases, the expected minimum system robustness is maximized as stated in Proposition~\ref{prop:polsynth_motivation}.  Since the robustness measure $\rho$ outputs positive numbers for those trajectories that satisfy the desired behavior (Definition~\ref{def:robustness}), if this maximum lower bound were positive, then the expected performance in the worst $100\alpha\%$ of cases would still exhibit satisfactory behavior. This leads to the nominal, risk-aware synthesis problem:
\begin{equation}
    \label{eq:synth_general_opt}
    \min_{p \in P}~\riskmap(p,\gamma,\alpha),~\mathrm{for~some~}\gamma,\alpha \in [0,1),
\end{equation}
which is similar to the general problem~\eqref{eq:general_opt_problem} studied in Section~\ref{sec:decision_selection}.  The decision space $D = P$ and the reward function $R = -\riskmap(\gamma,\alpha)$.  We can also rewrite the volume fraction function $\volfrac$~\eqref{eq:volume_fraction} and ``better" policy map $F$~\eqref{eq:falsifying_set}:
\begin{equation}
    \label{eq:polsynth_specific_functions}
    \volfrac(A) = \frac{\int_A~1~dx}{\int_P~1~dx}, \quad F(p,\gamma,\alpha) = \left\{p' \in P~|~\riskmap(p',\gamma,\alpha) < \riskmap(p,\gamma,\alpha)\right\}.
\end{equation}
This lets us formally define the problem under study for this section.
\begin{problem}
\label{prob:risk_aware_polsynth}
Let $\riskmap$ be as defined in~\eqref{eq:riskmap} with respect to some $\alpha \in (0,1]$ and $\gamma \in (0,1)$.  Let $\epsilon \in (0,1)$ as well.  Find a set of controller parameters $p \in P$ such that the corresponding controller $U$ is at-least in the $100(1-\epsilon)$-th percentile of all possible controllers with respect to minimization of $\riskmap$, \textit{i.e.} find $p \in P$ such that $\volfrac(F(p)) \leq \epsilon$ where $\volfrac,F$ are defined in~\eqref{eq:polsynth_specific_functions}.
\end{problem}

\subsection{Identification of ``Good" Risk-Aware Policies}
To start, Problem~\ref{prob:risk_aware_polsynth} is a refinement of Problem~\ref{prob:decision_selection}.  As a result, identification of a ``good" policy only requires construction of a similar scenario program to~\eqref{eq:scenario_general_opt}:
\begin{align}
        \zeta^*_N & = \argmin_{\zeta \in \mathbb{R}}~ & & \zeta, \label{eq:scenario_polsynth}
        \tag{RP-PS}\\
        &~~\mathrm{subject~to}~ & &\zeta \geq y_i,~y_i \in \left\{y_k = \riskmap(p_k,\gamma,\alpha)\right\}_{k=1}^N, \\
        & & & \qquad \qquad p_k~\mathrm{are~drawn~uniformly~from~}P.
\end{align}
Then, identification of ``good" policies is a direct consequence of Theorem~\ref{thm:good_solutions}.  To facilitate the statement of this corollary and a theorem to follow, we will state a common assumption.
\begin{assumption}
\label{assump:risk_aware_polsynth}
$\riskmap$ is as defined in equation~\eqref{eq:riskmap} with respect to some $\alpha \in (0,1]$ and $\gamma_1 \in [0,1)$.  $\epsilon \in (0,1)$, $\gamma_2\in [0,1)$, $\zeta^*_N$ is the solution to~\eqref{eq:scenario_polsynth} for a set of $N$-samples $\{y_k = \riskmap(p_k,\gamma_1,\alpha)\}_{k=1}^N$ where each $p_k$ was drawn uniformly from $P$, and $\volfrac,F$ are as defined in equation~\eqref{eq:polsynth_specific_functions}.
\end{assumption}
\begin{corollary}
\label{corr:synth_fund_requirement}
Let Assumption~\ref{assump:risk_aware_polsynth} hold.  If $N \geq \frac{\log(1-\gamma_2)}{\log(1-\epsilon)}$, then with minimum probability $\gamma_2$ there exists at least one parameter set $p_i \in \{p_k\}_{k=1}^N$ whose associated controller $U$ is at-least in the $100(1-\epsilon)$-th percentile of controllers possible with respect to minimizing $\riskmap$, \textit{i.e.}
\begin{equation}
    \exists~p_i \in \{p_k\}_{k=1}^N, \suchthat \prob^N_{\uniform[P]}\left[\volfrac(F(p_i)) \leq \epsilon \right] \geq \gamma_2,~\mathrm{and}~\zeta^*_N = \riskmap(p_i,\gamma_1,\alpha).
\end{equation}
\end{corollary}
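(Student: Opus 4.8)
The plan is to recognize Corollary~\ref{corr:synth_fund_requirement} as a direct specialization of Theorem~\ref{thm:good_solutions} to the policy-synthesis setting, so that essentially all of the work reduces to matching the present objects to those in Section~\ref{sec:decision_selection} and then checking the hypotheses of that theorem. First I would fix the dictionary: the decision space is $D = P$, the sampling distribution is $\uniform[P]$, and the reward function is the realized map $p \mapsto -\riskmap(p,\gamma_1,\alpha)$, so that minimizing $\riskmap$ over $P$ is exactly maximizing a reward in the sense of~\eqref{eq:general_opt_problem}. Under this dictionary the ``better policy'' map $F$ and the volume fraction $\volfrac$ of~\eqref{eq:polsynth_specific_functions} coincide with~\eqref{eq:falsifying_set} and~\eqref{eq:volume_fraction}, and the scenario program~\eqref{eq:scenario_polsynth} is literally~\eqref{eq:scenario_general_opt} with the samples chosen as $y_k = \riskmap(p_k,\gamma_1,\alpha)$.

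Next I would discharge Assumption~\ref{assump:bounded_decisions} for these objects, since that is the only nontrivial hypothesis of Theorem~\ref{thm:good_solutions}. The volume bound $\int_P 1\,dx < \infty$ follows because $P \subset \mathbb{R}^s$ is a bounded parameter set, and the attainment requirement - that the reward achieves its maximum on $P$ - amounts to $\riskmap(\cdot,\gamma_1,\alpha)$ attaining its infimum over $P$. I would justify this either by assuming $P$ compact together with lower semicontinuity of $p \mapsto \riskmap(p,\gamma_1,\alpha)$, or simply by noting that for the conclusion we only need~\eqref{eq:scenario_polsynth} to be a well-posed linear program over the finite sample set: $\zeta^*_N$ is the minimal scalar dominating the finite collection $\{\riskmap(p_k,\gamma_1,\alpha)\}_{k=1}^N$ and hence necessarily equals one of them, which already furnishes the index $i$ with $\zeta^*_N = \riskmap(p_i,\gamma_1,\alpha)$.

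With the hypotheses in place the conclusion is immediate: Theorem~\ref{thm:good_solutions} guarantees an index $i$ with $\zeta^*_N = \riskmap(p_i,\gamma_1,\alpha)$ and $\prob^N_{\uniform[P]}[\volfrac(F(p_i)) \leq \epsilon] \geq 1-(1-\epsilon)^N$, and the sample bound $N \geq \frac{\log(1-\gamma_2)}{\log(1-\epsilon)}$ upgrades $1-(1-\epsilon)^N$ to $\gamma_2$ exactly as in Corollary~\ref{corr:fund_sample_requirement}.

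The part I expect to require the most care is the interpretation of $\riskmap$ as a fixed, deterministic reward. By its definition in~\eqref{eq:riskmap}, each evaluation $\riskmap(p_k,\gamma_1,\alpha)$ is itself the output of an inner sampling procedure (drawing $N_{\riskmap}$ samples of $(x_0,\theta)$ to bound $\cvar_{\alpha}(-R_{p_k})$ via Corollary~\ref{corr:ub_cvar_verification}), so the statement is genuinely nested: the outer probability $\prob^N_{\uniform[P]}$ is over the $N$ parameter draws, while a second layer of randomness sits inside each reward evaluation. To keep the application of Theorem~\ref{thm:good_solutions} clean I would condition on the inner randomness, treating the realized map $p \mapsto \riskmap(p,\gamma_1,\alpha)$ as the deterministic reward, so that the percentile/volume-fraction guarantee is a statement purely about the uniform parameter sampling over $P$. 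I would then remark that the separate high-confidence $\gamma_1$ guarantee that each $\riskmap(p_i,\gamma_1,\alpha) \geq \cvar_{\alpha}(-R_{p_i})$ is inherited from Corollary~\ref{corr:ub_cvar_verification} and is orthogonal to the percentile claim proved here.
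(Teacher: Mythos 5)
Your proposal takes exactly the paper's route: the paper's entire proof is the one-line statement that the corollary is a direct application of Theorem~\ref{thm:good_solutions}, and your dictionary $D = P$, reward $-\riskmap(\cdot,\gamma_1,\alpha)$, and sample-count conversion is precisely the intended specialization. Your added care about discharging Assumption~\ref{assump:bounded_decisions} and conditioning on the inner sampling randomness in $\riskmap$ is a faithful elaboration of details the paper leaves implicit, not a different argument.
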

\begin{proof}
This is a direct application of Theorem~\ref{thm:good_solutions}.
\end{proof}

To summarize, for each uniformly sampled controller parameterization $p \in P$, we evaluate the corresponding controller $U$ against $N_{\riskmap}$ uniformly sampled initial conditions and accounting parameters $(x_0,\theta) \in \mathcal{X}_0 \times \Theta$ and evaluate the robustness $\rho(x^{\theta,p})$ of the corresponding trajectories.  By evaluating enough such trajectories, we generate - via Corollary~\ref{corr:ub_cvar_verification} and Proposition~\ref{prop:polsynth_motivation} - a lower bound $-r^*$ on the expected worst-case system performance in the worst $100\alpha\%$ of cases.  By repeating this procedure for enough samples - $N \geq \frac{\log(1-\gamma_2)}{\log(1-\epsilon)}$ samples - we are guaranteed with minimum probability $\gamma_2$ to find at least one parameter set $p_i$ in the sampled set $\{p_k\}_{k=1}^N$ such that the corresponding controller $U$ is at-least in the $100(1-\epsilon)$-th percentile of all possible controllers with respect to maximizing this lower bound.  This probabilistic guarantee is due to Theorem~\ref{thm:good_solutions}.  As we also provide a minimum sample requirement on the number of controllers required to be tested in this procedure, we state that we have a fundamental sample requirement to generate ``good" risk-aware policies with high confidence.  Furthermore, as this sample requirement is based on both our desired confidence $\gamma$ and desired percentile level $1-\epsilon$, we state that this procedure also identifies the relative complexity in the identification of a ``better" policy, insofar as it provides a sample requirement for doing so.

\subsection{Examples}
\label{sec:synthesis_examples}
In this section, we will synthesize a risk-aware controller for the cooperative multi-agent system whose controller was verified in Section~\ref{sec:verification_examples}, and we will show that the synthesized controller outperforms the baseline controller when accounting for worst-case system performance.  To reiterate, we will use the robotarium as a case study again, where the robots can be modeled as unicycles~\cite{robotarium}.  The dynamics are the same as in~\eqref{eq:setting} in Section~\ref{sec:verification_examples}, and will not be repeated to be brief.

Different from the verification case in Section~\ref{sec:verification_examples}, however, we will now assume that the hybrid Lyapunov-barrier controller the robots are equipped with~\cite{robotarium} is now to-be-optimized.  Since we only assume knowledge of the controller parameters $p$ and not the specific controller form $U$, we will refrain from specifying the controller in favor of stating that the controller parameters we can vary are the gains for the Lyapunov controller - approach angle gain, $p_1$; desired angle gain, $p_2$; rotation error gain, $p_3$ - and a decay constant $p_4$ for the barrier filter.  As such, the space of parameters
\begin{equation}
    \label{eq:synth_params}
    P = [0.2, 5]^3 \times [0.1,200],~p_{1,2,3} \in [0.2,5]^3,~p_4 \in [0.1,200].
\end{equation}

To determine a ``good" parameter set $p \in P$ without knowledge of how these parameters affect the controller $U$, we require a robustness measure $\rho$ (Definition~\ref{def:robustness}) to act as a determiner of satisfactory system performance.  As such, we will use the same robustness measure $\rho$ in equation~\eqref{eq:robustness} in Section~\ref{sec:verification_examples} to facilitate a risk-aware verification comparison between our to-be-calculated controller and the baseline controller for the Robotarium.  To be clear, we will reproduce this robustness measure $\rho$ in its application to our parameterized system trajectories $\mathbf{x}^{\theta,p}$, where $h_g,h_f$ are as defined in equations~\eqref{eq:avoid} and~\eqref{eq:future} respectively:
\begin{gather}
    \rho_g(\mathbf{x}^{\theta,p}) = \min\limits_{t \in [0,30]}~h_g\left( \mathbf{x}^{\theta,p}_t\right), \quad \rho_f(\mathbf{x}^{\theta,p}) = \max\limits_{t \in [0,30]}~h_f\left(\mathbf{x}^{\theta,p}_t\right), \\
    \label{eq:synth_robustness}
    \rho(\mathbf{x}^{\theta,p}) = 
    \begin{cases}
    \rho_g(\mathbf{x}^{\theta,p}), & \mbox{if~} \rho_g(\mathbf{x}^{\theta,p}), \rho_f(\mathbf{x}^{\theta,p}) \geq 0, \\
    \max\left\{\rho_g(\mathbf{x}^{\theta,p}), -0.1 \right\}, & \mbox{if~} \rho_g(\mathbf{x}^{\theta,p}) < 0, \\
    \max\left\{\rho_f(\mathbf{x}^{\theta,p}), -0.1\right\}, & \mbox{else}.
    \end{cases}
\end{gather}

\begin{figure}[t]
    \centering
    \includegraphics{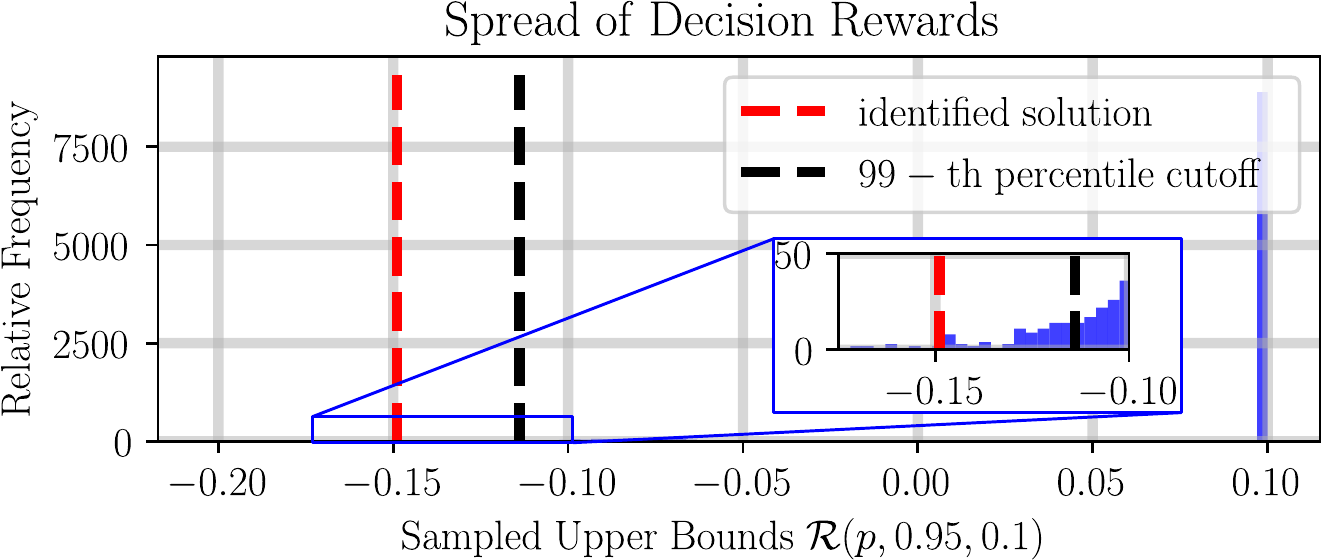}
    \caption{The final use case of our sample-based bounds arises in Risk-Aware Policy Synthesis.  Our goal is to identify a parameterized controller that maximizes a lower bound on worst-case system performance, \textit{i.e.} minimizes an upper bound, $\riskmap(p,0.95,0.1)$, over the set of controller parameters $p \in P$~\eqref{eq:synth_general_opt}.  Shown above in blue is the distribution of this upper bound for all controller parameters $p \in P$ and was generated by taking $20000$ uniform parameter samples $p$ and evaluating $\riskmap(p,0.95,0.1)$.  As per the decision selection process detailed in Section~\ref{sec:decision_selection}, our goal is to identify a controller in the $99$-th percentile with respect to minimization of this upper bound with the true $99$-th percentile cutoff shown in black and all controllers yielding upper bounds to its left lying in the $99$-th percentile.  As can be seen, our identified solution (red) achieves an upper bound in at least the $99$-th percentile.  This serves as numerical confirmation of both Theorem~\ref{thm:good_solutions} and Corollary~\ref{corr:synth_fund_requirement} insofar as we evaluated the minimum number of controllers prescribed, $N = 459$ controllers, to calculate our solution which meets our desired criteria.}
    \label{fig:synth_percentile_calculation}
\end{figure}

\begin{figure}[t]
    \centering
    \includegraphics[width = \textwidth]{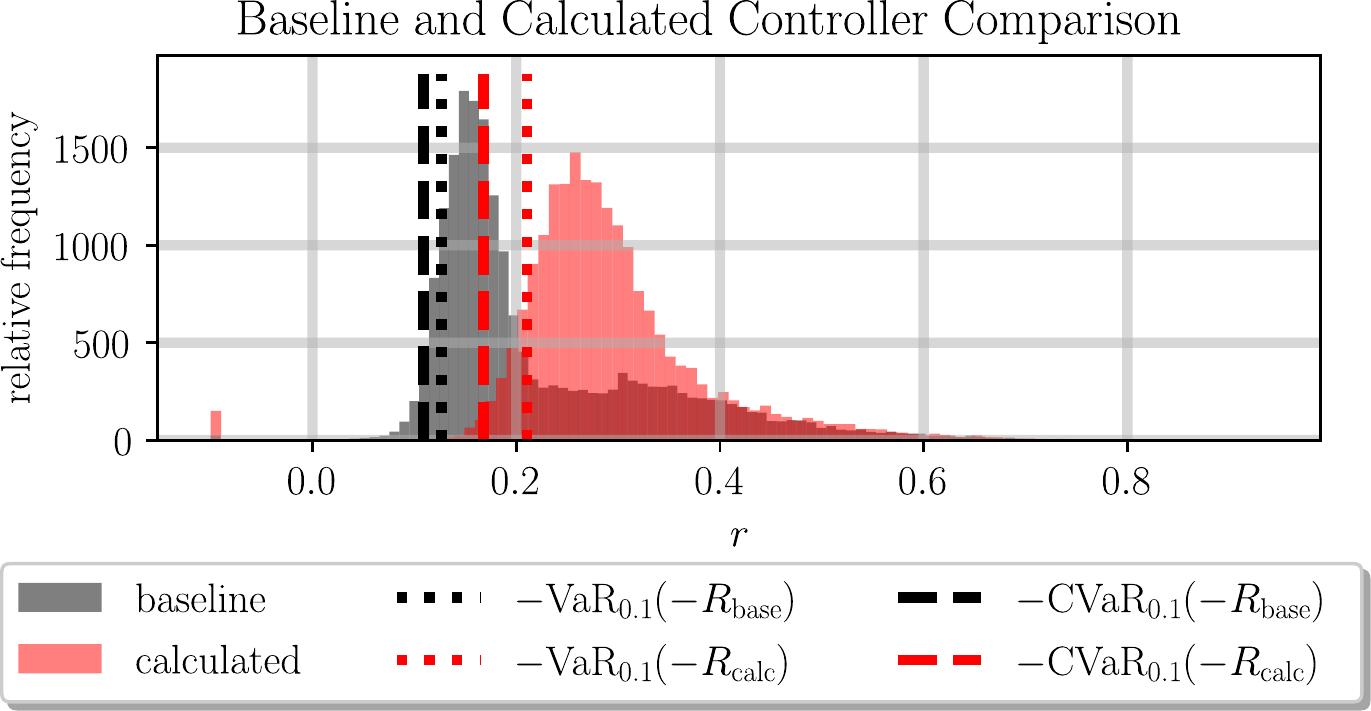}\vspace{-0.1 in}
    \caption{A comparison between the baseline controller provided with the robotarium - the controller that was probabilistically verified in Section~\ref{sec:verification_examples} - and our calculated controller identified in Figure~\ref{fig:synth_percentile_calculation}.  Our risk-aware policy synthesis goal is to identify a controller in the $99$-th percentile with respect to maximizing the lower bound on the expected worst-case system performance in the worst $10\%$ of cases - \textit{i.e.} maximize a lower bound on $-\cvar_{0.1}(-R)$ as expressed in Proposition~\ref{prop:polsynth_motivation}.  Shown above is the distribution of this randomized robustness $R$ for the calculated controller ($R_{\mathrm{calc}}$ in red) and the baseline controller provided with the robotarium ($R_{\mathrm{base}}$ black).  As can be seen, the calculated controller outperforms the baseline controller insofar as the worst-case robustness value for $10\%$ of cases $-\var_{0.1}(-R_{\mathrm{calc}}) \geq -\var_{0.1}(-R_{\mathrm{base}})$, and the expected value in the worst $10\%$ of cases $-\cvar_{0.1}(-R_{\mathrm{calc}}) \geq -\cvar_{0.1}(-R_{\mathrm{base}})$ as well.}
    \label{fig:comparison}
\end{figure}

\noindent Our randomized robustness $R_p$ will also be defined similarly.  Specifically, to take a sample $r_p$ of our randomized robustness $R_p$ as per Definition~\ref{def:random_rob_parameterized}, we first uniformly sample initial conditions and parameters $(x_0,\theta)$ from the space below:
\begin{equation}
    \mathcal{X}_0  = \{\mathbf{x} \in \mathcal{X}^{3}~|~h_g(\mathbf{x}) \geq 0.3 \}, \quad 
    \Theta = \{M\mathbf{x} \in \mathcal{X}^{3}~|~h_g(\mathbf{x}) \geq 0.3 \},
\end{equation}
then we evaluate the robustness of the corresponding state trajectory, \textit{i.e.} $r_p = \rho(\mathbf{x}^{\theta,p})$.  Our goal is to identify a parameter set $p \in P$ that minimizes the upper bound $r^*_p$ for $\cvar_{\alpha=0.1}(-R_p)$ identified with $95\%$ confidence with $N_{\riskmap} = 149$ samples. By definition of  $\riskmap$ in equation~\eqref{eq:riskmap}, this results in the following optimization problem:
\begin{equation}
    \label{eq:polsynth_ex_opt}
    \min_{p \in P}~\riskmap(p,0.95.0.1),~N_{\riskmap} = 149,
\end{equation}
which is of the general form studied in~\eqref{eq:synth_general_opt} with $P$ as in~\eqref{eq:synth_params}.  However, identification of the exact solution to~\eqref{eq:polsynth_ex_opt} will be difficult.  As such, we hope to identify a policy that is in the $99-$th percentile with $99\%$ confidence, \textit{i.e.} $\gamma = 0.99$ and $\epsilon = 0.01$.  

According to Corollary~\ref{corr:synth_fund_requirement}, to generate a parameter set $p_i \in P$ whose controller $U$ is in the $99$-th percentile of all controllers with minimum probability $\gamma = 0.99$, we are required to test $N \geq 459$ uniformly randomly sampled parameters $p \in P$.  After doing so, we identified a parameter set $p_i \in P$ that realized an upper bound $\riskmap(p_i,0.95,0.1) = -0.1489$.  As can be seen in Figure~\ref{fig:synth_percentile_calculation}, the controller corresponding to this policy is indeed in at-least the $99$-th percentile with respect to all controllers achievable via this parameterization.  This serves as numerical confirmation of Corollary~\ref{corr:synth_fund_requirement} and its parent, Theorem~\ref{thm:good_solutions}.  We can also compare the distribution of the robustness of trajectories realized by this controller $U_{\mathrm{calc}}$ as compared to the baseline controller with which the robotarium comes equipped $U_{\mathrm{base}}$.  This information can be seen in Figure~\ref{fig:comparison}.  Notice that our identified controller outperforms the baseline robotarium controller in a risk-aware sense, at-least with respect to the robustness measure of interest.

\section{Conclusion}
\label{sec:conclusion}
To develop a systematic procedure for risk-aware verification, we first developed a procedure to generate sample-based bounds for $g$-entropic risk measures - a large subset of coherent risk measures.  Then, we reframed risk-aware verification as a risk measure identification problem that makes use of our prior results.  We also noted that we could extend these inequalities to the risk-aware policy synthesis setting.  However, identification of such a policy would likely correspond to solving a non-convex optimization problem that would be very difficult to solve.  To simplify risk-aware policy generation then, we developed a sample-based approximation for a general class of optimization problems.  This approximation lets us repeatably and reliably identify decisions that outperform a large fraction of all possible decisions available with respect to optimization of a provided objective.  The prior sample-based bounds then provide the objective function for the risk-aware policy synthesis setting, with a ``good" policy determined via the prior, sample-based approach.

In future work, we first hope to analyze the tightness of the sample-based bounds we generate.  Specifically, as the required confidence in the generated upper-bound increases, the bound tends to converge to the upper bound of the random variable itself.  We also hope to provide lower bounds as well and move towards generating sample-based concentration inequalities for $g$-entropic risk measures.  Second, we hope to apply these tighter bounds to provide less conservative bounds for system verification, as they are quite conservative currently.  In addition, we hope to reduce the number of samples required for the generation of these bounds to facilitate the generation of high-confidence performance bounds on hardware systems.  Finally, we hope to extend the risk-aware synthesis procedure for controller generation on hardware systems.  The inability to sample only those controller parameters that yield safe controllers is the primary hindrance to the immediate application of our approach to hardware systems.  Furthermore, it is not entirely clear that a ``good" risk-aware simulator controller will achieve similarly ``good" performance on hardware.  Here, we think that works addressing the Sim2Real gap might prove useful~\cite{doersch2019sim2real,kadian2020sim2real}.

\section{Acknowledgements}
This work was supported by the AFOSR Test and Evaluation Program, grant FA9550-19-1-0302.

\bibliographystyle{elsarticle-num} 
\bibliography{cas-refs}

\end{document}